\DeclareMathOperator{\Hom}{Hom}
\DeclareMathOperator{\Bell}{B}
\DeclareMathOperator{\Rep}{Rep}
\tikzstyle{node}=[fill=white, draw=black, shape=circle, minimum size=1mm, ultra thick]
\tikzstyle{red node}=[fill=white, draw=red, shape=circle, minimum size=1mm, ultra thick]
\tikzstyle{green node}=[fill=white, draw={rgb,255: red,0; green,138; blue,0}, shape=circle, minimum size=1mm, ultra thick]
\tikzstyle{small box}=[fill=white, draw=black, shape=rectangle, minimum height=0.5cm, minimum width=0.5cm, ultra thick]
\tikzstyle{weyl}=[fill=white, draw={rgb,255: red,0; green,0; blue,109}, shape=rectangle, minimum height=0.5cm, minimum width=0.5cm]
\tikzstyle{filled_node}=[fill=black, draw=black, fill opacity=1, shape=circle, minimum size=1mm, ultra thick]
\tikzstyle{large box}=[fill=white, draw=black, shape=rectangle, minimum height=0.5cm, minimum width=1cm, ultra thick]
\tikzstyle{blue node}=[fill=white, draw=blue, shape=circle, minimum size=1mm, ultra thick]
\tikzstyle{pattern_node}=[
\tikzstyle{thick}=[-, ultra thick]
\tikzstyle{blue_thick}=[-, ultra thick, draw=blue]
\tikzstyle{dashes}=[-, dashed, draw={rgb,255: red,191; green,191; blue,191}, dash pattern=on 2mm off 1mm, fill={rgb,255: red,244; green,228; blue,0}]
\tikzstyle{thick_arrow}=[ultra thick, ->]
\tikzstyle{dash_1}=[-, dashed]
\tikzstyle{dash_2}=[-, dashed, fill={rgb,255: red,246; green,235; blue,255}]
\tikzstyle{dash_3}=[-, dashed, fill={rgb,255: red,227; green,255; blue,209}]
\tikzstyle{dash_4}=[-, dashed, fill={rgb,255: red,255; green,209; blue,153}]
\tikzstyle{red_thick}=[-, ultra thick, draw=red]
\tikzstyle{dash_5}=[-, dashed, fill={rgb,255: red,225; green,255; blue,254}]
\tikzstyle{dash_6}=[-, dashed, fill={rgb,255: red,251; green,245; blue,160}]
\tikzstyle{jellyfish}=[-, ultra thick, fill={rgb,255: red,34; green,48; blue,255}]
\tikzstyle{arrow}=[thick, ->]
\tikzstyle{dashed_arrow}=[dashed, ->]
\tikzstyle{dashed_thick_arrow}=[ultra thick, dashed, ->]
\tikzstyle{red_thick_arrow}=[ultra thick, ->, draw=red]
\tikzstyle{thick_blue_double_arrow}=[ultra thick, <->, draw=blue]
\tikzstyle{thick_green_double_arrow}=[ultra thick, <->, draw={rgb,255: red,0; green,138; blue,0}]
\definecolor{melon}{RGB}{227, 168, 105} 
\definecolor{terracotta}{RGB}{204, 78, 63}
\begin{document}

\title{
	A Diagrammatic Approach to Improve Computational Efficiency in Group Equivariant Neural Networks
	}

\author{\name Edward Pearce-Crump \email ep1011@imperial.ac.uk \\
       \addr Department of Computing\\
       Imperial College London\\
       London, SW7 2RH, United Kingdom
       \AND
       \name William J. Knottenbelt \email wjk@imperial.ac.uk \\
       \addr Department of Computing\\
       Imperial College London\\
       London, SW7 2RH, United Kingdom
}

\editor{My editor}

\maketitle

\begin{abstract}

	Group equivariant neural networks are growing in importance owing to 
	their ability to generalise well in applications where the data has
	known underlying symmetries.
	Recent characterisations of a class of these networks that use 
	high-order tensor power spaces as their layers suggest that they have significant
	potential; however, their implementation remains challenging owing to the 
	prohibitively expensive nature of the computations that are involved.
	In this work, we present a fast matrix multiplication algorithm 
	for any equivariant weight matrix that maps between tensor power layer spaces 
	in these networks for four groups:
	the symmetric, orthogonal, special orthogonal, and symplectic groups.
	We obtain this algorithm by developing a diagrammatic framework 
	based on category theory that enables us to not only express each weight matrix
	as a linear combination of diagrams but also 
	makes it possible for us 
	to use these diagrams to factor the original computation
	into a series of steps that are optimal.
	We show that this algorithm improves the Big-$O$ time complexity 
	exponentially 
	in comparison to
	a na\"{i}ve matrix multiplication.
\end{abstract}

\begin{keywords}
	deep learning theory, equivariant neural networks, 
	weight matrices
\end{keywords}


\section{Introduction}

Incorporating symmetries as an inductive bias in neural network design
has emerged as an important method
for creating more structured and efficient models.
These models, commonly referred to as
\textit{group equivariant neural networks}
\citep{
cohenc16, cohen2017steerable,
qi2017pointnet, ravanbakhsh17a, deepsets, 
cohen2018, esteves2018, kondor18a, thomas2018, maron2018, 
cohen2019, weiler2019,  
villar2021scalars, finzi, pearcecrumpB, pearcecrumpJ, pearcecrump, godfrey, pearcecrumpG},
have proven to be useful across a wide range of applications,
including, but not limited to,
molecule generation \citep{satorras21a}; 
designing proteins \citep{Jumper2021};
natural language processing \citep{gordon2020, petrache2024};
computer vision \citep{chatzipantazis2023}; 
dynamics prediction \citep{guttenberg2016}; and
even auction design \citep{rahme}.

Notably, several recent works have characterised 
the weight matrices of a class of group equivariant neural networks 
that use high-order tensor power spaces as their layers 
\citep{ravanbakhsh17a, maron2018,pearcecrumpB, pearcecrumpJ, pearcecrump, godfrey, pearcecrumpG}.
However, because these layers are tensor power spaces,
the computations that are involved --- specifically, applying an equivariant weight matrix to an input vector to transform it to an output vector ---
can be very expensive.
Indeed, if the weight matrix is a function
$(\mathbb{R}^{n})^{\otimes k} \rightarrow (\mathbb{R}^{n})^{\otimes l}$, 
then a na\"{i}ve matrix multiplication on an input vector 
$v \in (\mathbb{R}^{n})^{\otimes k}$ has a time complexity of $O(n^{l+k})$.
Hence there is a need to develop methods that make
these computations feasible in practice
so that the networks that use
these weight matrices become more efficient.


We address this problem in the following way, for four groups whose
weight matrices have been characterised previously:
the symmetric \citep{ravanbakhsh17a, maron2018, pearcecrump, godfrey}, orthogonal, special orthogonal and symplectic groups \citep{pearcecrumpB}.
We first develop a diagrammatic framework based on category theory, 
enabling us to express each group equivariant weight matrix as the image 
under a monoidal functor of a linear combination of 
morphisms in a diagram category.
In this way, we can effectively 
interpret the morphisms in each diagram category as being equivalent to matrices.
We then use the diagrammatic framework to develop an algorithm 
that improves upon 
the na\"{i}ve weight matrix multiplication \textit{exponentially} in terms of
its Big-$O$ time complexity.
We achieve this by using the diagrams that are equivalent
to the original weight matrix to factor it
into a sequence of operations that carry out the original calculation
in the most efficient way possible.
We suggest that our result might enable more widespread adoption of 
group equivariant neural networks that use high-order tensor power spaces
in practical applications.

The rest of the paper is organised as follows. 
After discussing the related work in Section \ref{relatedwork}, 
we review in Section \ref{background}
the important concepts behind the characterisation
of the equivariant weight matrices between tensor power spaces of $\mathbb{R}^{n}$
that have appeared elsewhere in the literature.
In Section \ref{weightmatcat}, we introduce our diagrammatic framework
that is based on category theory, to show that each linear map
that we review in Section \ref{background} is actually the result 
of a monoidal functor between a diagram category and a representation category.
In particular, we conclude that section by discussing some fundamental results
that form the basis of our main contribution, which is introduced in Section 
\ref{algorithmsect}: the fast multiplication algorithm for passing
a vector through an equivariant weight matrix for the four groups in question.
In this section, we present our algorithm, discuss its implementation
for each of the four groups, and analyse its Big-$O$ time complexity in each case.
Finally, we conclude in Section \ref{conclusion}.



\section{Related Work} \label{relatedwork}

There is a growing body of literature demonstrating the application of category
theory in machine learning.
A number of works have used a category-theoretic approach either to
constuct, categorise, or analyse
neural network architectures
\citep{gavranovic24a, abbott2024, khatri2024, tull2024}.
Some researchers have explored 
expanding the concept of equivariance through category theory,
resulting in new insights on the structure of equivariant models
\citep{deHaan2020, gavranovic24a}
while others have focused on understanding
gradient-based learning using a category-theoretic framework
\citep{cruttwell2022}.
Additionally, 
recent works have examined stochastic equivariant models
\citep{cho2019, bloemreddy2020, fritz2020, cornish2024}
and causal reasoning
\citep{lorenz2023}
within the context of Markov categories.
Building on these works, we adopt a category-theoretic approach
to gain insights into the 
construction of specific neural networks; namely,
the weight matrices of certain tensor power based equivariant neural networks. 
The primary distinction and contribution of our work
lies in using this approach not only to understand the construction of the neural networks
themselves but also to significantly improve the time complexity of the forward pass 
that is involved in their execution.

Finally, in a loosely-related body of work, there is a whole subdomain
in quantum computing where formal category-theoretic languages,
such as the ZX-calculus, are used to rewrite quantum circuits 
for compilation onto quantum hardware 
\citep{coecke2017picturing, duncan2020graph, kissinger2012pictures, heunen2020categories, vandewetering2020}.
In this context, the original circuit needs to be rewritten 
in terms of a predefined set of quantum gates that is tailored to the specific
quantum hardware.
In our work, the algorithm that we present in Section \ref{algorithmsect} rewrites
the equivariant weight matrix as a composition of operations;
however, this algorithm is designed for classical computing, not quantum computing,
and we focus on the time complexity of the operations involved, rather than
the underlying hardware on which they will be executed.


\section{Background} \label{background}



In this section, we revisit the concepts
that led to the characterisation of the equivariant weight matrices
between high-order tensor power spaces of $\mathbb{R}^{n}$,
focusing on the four groups in question: $S_n$, $O(n)$, $SO(n)$ and $Sp(n)$.
This section is organised into three parts: 
first we recall the layer spaces, considered as representations of each group,
and the definition of an equivariant map between them; 
second, we review the diagrams that were used to derive the weight matrix 
characterisations; 
and finally we summarise the key results, providing references to where 
their proofs can be found in the literature.

In the following, we write $G(n)$ to refer to any of 
$S_n$, $O(n)$, $SO(n)$ and $Sp(n)$.
We consider $G(n)$ to be a subgroup of matrices in $GL(n)$ throughout,
having chosen the standard basis of $\mathbb{R}^{n}$.
We also write $[n]$ for the set $\{1, \dots, n\}$, and $[n]^p$ for the
$p$-fold Cartesian product of the set $\{1, \dots, n\}$.

\subsection{Layer Spaces as Group Representations} \label{sectLayerSpaces}

For each neural network that is equivariant to $G(n)$, 
the layer spaces are not just vector spaces, they are representations of $G(n)$.
This makes it possible to incorporate the symmetries as an inductive bias 
into the network not only through the action of the group on each layer space 
but also through the equivariant maps between them.
We define each of these concepts in turn.



Each layer space is a high-order tensor power of $\mathbb{R}^{n}$, denoted
$(\mathbb{R}^{n})^{\otimes k}$,
that comes with a representation of $G(n)$.
This representation,
written
$\rho_k: G(n) \rightarrow GL((\mathbb{R}^n)^{\otimes k})$,
is defined on the standard basis of $(\mathbb{R}^{n})^{\otimes k}$ 
\begin{equation} \label{tensorelementfirst}
	e_I \coloneqq e_{i_1} \otimes e_{i_2} \otimes \dots \otimes e_{i_k} 
\end{equation}
for all $I \coloneqq (i_1, i_2, \dots, i_k) \in [n]^k$ by
\begin{equation} 
	g \cdot e_I \coloneqq ge_{i_1} \otimes ge_{i_2} \otimes \dots \otimes ge_{i_k} 
\end{equation}
and is extended linearly on the basis elements of $(\mathbb{R}^{n})^{\otimes k}$.

We note that if $G(n) = Sp(n)$, then $n = 2m$, and we label and 
order the indices by
$1, 1', \dots, m, m'$ instead.
In this case, 
we sometimes call the standard basis of $\mathbb{R}^{n}$ the symplectic basis.

There is a special class of maps between representations of a group
that are known as equivariant maps.
Recall that a map 
$\phi : (\mathbb{R}^{n})^{\otimes k} \rightarrow (\mathbb{R}^{n})^{\otimes l}$ 
between two representations of $G(n)$ is said to be $G(n)$-equivariant if
\begin{equation} \label{Gequivmapdefn}
	\phi(\rho_{k}(g)[v]) = \rho_{l}(g)[\phi(v)]
\end{equation}
for all $g \in G(n)$ and $v \in (\mathbb{R}^{n})^{\otimes k}$.
We denote the vector space of all \textit{linear} $G(n)$-equivariant maps between 
$(\mathbb{R}^{n})^{\otimes k}$ and $(\mathbb{R}^{n})^{\otimes l}$ by 
$\Hom_{G(n)}((\mathbb{R}^{n})^{\otimes k},(\mathbb{R}^{n})^{\otimes l})$. 

We are interested in the equivariant weight matrices that map
between any two adjacent layer spaces. 
It is clear that the equivariant weight matrices mapping 
$(\mathbb{R}^{n})^{\otimes k}$ to $(\mathbb{R}^{n})^{\otimes l}$
can be found by obtaining either a basis or 
a spanning set of matrices
for 
$\Hom_{G(n)}((\mathbb{R}^{n})^{\otimes k},(\mathbb{R}^{n})^{\otimes l})$.
The diagrams that can be used to obtain such a basis or spanning set 
form the topic of the next section.

\subsection{Set Partition Diagrams} \label{sectSetPart}

There are potentially a number of different ways to obtain 
such a basis or spanning set of matrices for each group.
For example, for the symmetric group $S_n$, \citet{maron2018} considered
so-called fixed-point equations to obtain a basis of matrices 
known as the orbit basis, 
and \citet{godfrey} constructed an $\mathbb{R}$-algebra 
from the $S_n$-invariant subspace for each tensor power space of $\mathbb{R}^{n}$
to obtain a different basis of matrices known as the diagram basis.
We focus, however, on using set partition diagrams, which originally appeared in
\citet{pearcecrumpB, pearcecrump},
as they serve as the foundation for our contributions that appear
in the upcoming sections.

To introduce these diagrams, we need to begin with the definition
of a set partition.

\begin{definition}
	For any non-negative integers $l$ and $k$, consider the set
	$[l+k] \coloneqq \{1, \dots, l+k\}$. 
	A \textbf{set partition} $\pi$ of $[l+k]$ is a partition of $[l+k]$
	into a disjoint union of subsets, each of which we call a \textbf{block}.
\end{definition}

\begin{example} \label{setpartex1}
	If $l = 4$ and $k = 6$, then
	\begin{equation} 
		\{1, 2, 5, 7 \mid 3, 4, 10 \mid 6, 8 \mid 9\}
	\end{equation}
	is a set partition of $[4+6]$ having $4$ blocks.
\end{example}

\begin{definition}
	Let $\pi$ be a set partition of $[l+k]$.
	We can associate to $\pi$ a diagram $d_\pi$, 
	called a $\bm{(k,l)}$\textbf{--partition diagram}, that consists of two
	rows of vertices and edges between vertices such that there are
\begin{itemize}
	\item $l$ vertices on the top row, labelled left to right by $1, \dots, l$
	\item $k$ vertices on the bottom row, labelled left to right by $l+1, \dots, l+k$, and
	\item the edges between the vertices correspond to the connected components of $\pi$. 
\end{itemize}
As a result, $d_\pi$ represents the equivalence class of all diagrams with connected components equal to the blocks of $\pi$.
\end{definition}

\begin{example}
	The $(6,4)$--partition diagram corresponding to the set partition given
	in Example~\ref{setpartex1} is
	\begin{equation}
		\begin{aligned}
			\scalebox{0.6}{\tikzfig{background/composition1i}}
		\end{aligned}
	\end{equation}
\end{example}

\begin{definition}
	We are also interested in the following types of $(k,l)$--partition diagrams.
\begin{itemize}
	\item A $\bm{(k,l)}$\textbf{--Brauer diagram} $d_\beta$ is a $(k,l)$--partition diagram where the size of every block in $\beta$ is exactly two.
	\item Given $k$ and $l$, an $\bm{(l+k)\backslash n}$\textbf{--diagram} $d_\alpha$ is a $(k,l)$--partition diagram where exactly $n$ blocks in $\alpha$ have size one, with the rest having exactly size two. The vertices corresponding to the blocks of size one are called \textbf{free} vertices.
\end{itemize}
\end{definition}

\begin{example} \label{exBrauerGrooddiag}
	The first diagram below is a $(7,5)$--Brauer diagram
	\begin{equation}
		\begin{aligned}
			\scalebox{0.6}{\tikzfig{background/algoplanar7}}
		\end{aligned}
	\end{equation}
	whereas the second is a $(5+6) \backslash 3$--diagram
	\begin{equation}
		\begin{aligned}
			\scalebox{0.6}{\tikzfig{background/algoplanar6}}
		\end{aligned}
	\end{equation}
\end{example}

In order to state the characterisations more succinctly
in the next section, we introduce
a number of vector spaces as the formal $\mathbb{R}$--linear span of certain subsets 
of $(k,l)$--partition diagrams, as follows.
\begin{definition} \label{partvecspace}
	\hphantom{x}
\begin{itemize}
	\item The \textbf{partition vector space} $P_k^l(n)$ is defined to be the $\mathbb{R}$--linear span of the set of all $(k,l)$--partition diagrams.
	\item The \textbf{Brauer vector space} $B_k^l(n)$ is defined to be the $\mathbb{R}$--linear span of the set of all $(k,l)$--Brauer diagrams.
	\item The \textbf{Brauer--Grood vector space} $D_k^l(n)$ is defined to be the $\mathbb{R}$--linear span of the set of all $(k,l)$--Brauer diagrams together with the set of all $(l+k)\backslash n$--diagrams.
\end{itemize}
\end{definition}

\subsection{Characterisation of Equivariant Weight Matrices} \label{sectCharacterisation}

We now use the vector spaces given in Definition \ref{partvecspace}
to summarise the equivariant weight matrix characterisation results 
for each of the four groups.
We state each basis/spanning set result as a theorem, and give its associated
equivariant weight matrix characterisation as a corollary.

In order to state the results, recall that, 
for all non-negative integers $l, k$,
the vector space of matrices
$\Hom((\mathbb{R}^{n})^{\otimes k}, (\mathbb{R}^{n})^{\otimes l})$
has a standard basis of matrix units
\begin{equation} \label{standardbasisunits}
	\{E_{I,J}\}_{I \in [n]^l, J \in [n]^k}
\end{equation}
where $I$ is a tuple $(i_1, i_2, \dots, i_l) \in [n]^l$, 
$J$ is a tuple $(j_1, j_2, \dots, j_k) \in [n]^k$
and $E_{I,J}$ has a $1$ in the $(I,J)$ position and is $0$ elsewhere.

\begin{theorem}[Diagram Basis when $G(n) = S_n$] \cite[Theorem 5.4]{godfrey}
	\label{diagbasisSn}

	For all non-negative integers $l, k$ and any positive integer $n$,
	there is a surjection of vector spaces
	\begin{equation} \label{surjectionSn}
		\Theta_{k,n}^l : P_k^l(n) \rightarrow 
		\Hom_{S_n}((\mathbb{R}^{n})^{\otimes k}, (\mathbb{R}^{n})^{\otimes l})
	\end{equation}
	that is given by
	\begin{equation}
		d_\pi \mapsto D_\pi
	\end{equation}
	for all $(k,l)$--partition diagrams $d_\pi$, 
	where $D_\pi$ is defined as follows.
	If $S_\pi((I,J))$ is defined to be the set
	\begin{equation} \label{Snindexingset}
		\left\{(I,J) \in [n]^{l+k} \mid \text{if } x,y \text{ are in the same block of } \pi, \text{then } i_x = i_y \right\}
	\end{equation}
	(where we have momentarily replaced the elements of $J$ by $i_{l+m} \coloneqq j_m$ for all $m \in [k]$),
	then we have that
	\begin{equation} \label{mappeddiagbasisSn}
		D_\pi
		\coloneqq
		\sum_{I \in [n]^l, J \in [n]^k}
		\delta_{\pi, (I,J)}
		E_{I,J}
	\end{equation}
	where
	\begin{equation}
		\delta_{\pi, (I,J)}
		\coloneqq
		\begin{cases}
			1 & \text{if } (I,J) \in S_\pi((I,J)) \\
			0 & \text{otherwise}
		\end{cases}
	\end{equation}	
	In particular, the set
	\begin{equation} \label{klSnSpanningSet}
		\{D_\pi \mid d_\pi \text{ is a } (k,l) \text{--partition diagram having at most } n \text{ blocks} \}
	\end{equation}
	is a basis for
	$\Hom_{S_n}((\mathbb{R}^{n})^{\otimes k}, (\mathbb{R}^{n})^{\otimes l})$
	in the standard basis of $\mathbb{R}^{n}$, 
	of size $\Bell(l+k,n) \coloneqq
	\sum_{t=1}^{n} 
		\begin{Bsmallmatrix}
		l+k\\
		t 
		\end{Bsmallmatrix}
	$, where 
	$
		\begin{Bsmallmatrix}
		l+k\\
		t 
		\end{Bsmallmatrix}
	$
	is the Stirling number of the second kind.
\end{theorem}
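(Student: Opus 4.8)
The plan is to reduce the statement to a description of $\Hom_{S_n}((\mathbb{R}^{n})^{\otimes k}, (\mathbb{R}^{n})^{\otimes l})$ in terms of orbits of $S_n$ on index tuples, and then to compare the two natural bases that arise. First I would write the equivariance condition (\ref{Gequivmapdefn}) in coordinates. Since $\rho_1(g)$ is the permutation matrix sending $e_i$ to $e_{g(i)}$, the representation $\rho_k(g)$ acts on the standard basis by $e_J \mapsto e_{g \cdot J}$, where $g \cdot J \coloneqq (g(j_1), \dots, g(j_k))$. Writing an arbitrary linear map as $M = \sum_{I,J} M_{I,J} E_{I,J}$ and expanding $\rho_l(g) M = M \rho_k(g)$ entrywise, a short computation shows that $M$ is $S_n$-equivariant if and only if $M_{g \cdot I, g \cdot J} = M_{I,J}$ for every $g \in S_n$ and every pair of tuples. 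Hence $\Hom_{S_n}((\mathbb{R}^{n})^{\otimes k}, (\mathbb{R}^{n})^{\otimes l})$ is precisely the space of matrices whose entries are constant on the orbits of the diagonal action of $S_n$ on $[n]^{l} \times [n]^{k} \cong [n]^{l+k}$.

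Second, I would identify these orbits combinatorially. To each tuple $(I,J) \in [n]^{l+k}$ associate its \emph{equality pattern}, the set partition of $[l+k]$ in which $x$ and $y$ lie in the same block precisely when the corresponding entries agree. Two tuples lie in the same $S_n$-orbit if and only if they share an equality pattern (any partial bijection of value sets extends to an element of $S_n$), and a partition is realised as an equality pattern if and only if it has at most $n$ blocks, one distinct value of $[n]$ per block. Thus the orbits biject with set partitions of $[l+k]$ having at most $n$ blocks. Their indicator matrices, say $X_\pi$, have pairwise disjoint supports, span the $\Hom$-space by the first step, and so form a basis; counting partitions into exactly $t$ blocks for $1 \le t \le n$ yields the cardinality $\Bell(l+k,n)$ with the Stirling numbers of the second kind.

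Third, and this is the heart of the matter, I would connect the matrices $D_\pi$ of the theorem to the orbit indicators $X_\pi$. By its definition through $\delta_{\pi,(I,J)}$, the matrix $D_\pi$ places a $1$ in position $(I,J)$ exactly when the equality pattern of $(I,J)$ is coarser than or equal to $\pi$ in the refinement order, which gives $D_\pi = \sum_{\sigma \ge \pi} X_\sigma$, the sum running over partitions $\sigma$ coarser than $\pi$ (and $X_\sigma = 0$ whenever $\sigma$ has more than $n$ blocks). Restricted to the $\pi$ with at most $n$ blocks, this expresses $\{D_\pi\}$ in terms of $\{X_\pi\}$ by a matrix that is unitriangular for any linear extension of the refinement order, hence invertible, so $\{D_\pi \mid d_\pi \text{ has at most } n \text{ blocks}\}$ is again a basis of size $\Bell(l+k,n)$; surjectivity of $\Theta_{k,n}^l$ then follows since every $D_\pi$ lies in the $\Hom$-space and these basis elements already span it. The one point requiring care — the main obstacle — is the bookkeeping for diagrams whose partition has more than $n$ blocks: these are not sent to zero by $\Theta_{k,n}^l$, and one must check that the relation $D_\pi = \sum_{\sigma \ge \pi} X_\sigma$, now with only the $\le n$-block $\sigma$ contributing, still places their images in the span of the at-most-$n$-block diagrams. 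Verifying this redundancy and the unitriangularity cleanly, rather than the routine orbit computation, is where the argument needs the most attention.
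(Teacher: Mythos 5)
Your argument is correct. The paper does not prove this theorem itself---it is quoted from \cite[Theorem 5.4]{godfrey}---and your route (equivariance as constancy of matrix entries on the $S_n$-orbits of $[n]^{l+k}$, orbits indexed by equality-pattern set partitions with at most $n$ blocks, and the unitriangular change of basis $D_\pi = \sum_{\sigma \ge \pi} X_\sigma$ relating the diagram basis to the orbit basis, which also absorbs the diagrams with more than $n$ blocks into the span) is exactly the standard proof given in that reference and in the partition-algebra literature.
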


\begin{corollary}[Permutation Equivariant Weight Matrices] 
	\label{diagweightmatclass}
	For all non-negative integers $l, k$ and positive integers $n$, 
	the weight matrix $W$ that appears in
	an $S_n$-equivariant linear layer function
	from $(\mathbb{R}^{n})^{\otimes k}$ to $(\mathbb{R}^{n})^{\otimes l}$
	must be of the form
	\begin{equation}
		W = \sum_{d_\pi \in P_k^l(n)} \lambda_\pi{D_\pi}
	\end{equation}
\end{corollary}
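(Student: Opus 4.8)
The plan is to derive this corollary directly from Theorem~\ref{diagbasisSn}, since almost all of the work is already contained in that statement. The first step is simply to identify the object under discussion. By definition, an $S_n$-equivariant linear layer function from $(\mathbb{R}^{n})^{\otimes k}$ to $(\mathbb{R}^{n})^{\otimes l}$ is a linear map satisfying the equivariance condition~\eqref{Gequivmapdefn} with $G(n) = S_n$. Expressing this map in the standard basis of matrix units~\eqref{standardbasisunits}, its weight matrix $W$ is precisely an element of $\Hom_{S_n}((\mathbb{R}^{n})^{\otimes k}, (\mathbb{R}^{n})^{\otimes l})$. This identification is the only conceptual step required, and it holds essentially by definition once one observes that equivariance of the layer function is exactly the commuting condition that defines membership in the Hom space.

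Second, I would invoke the surjectivity of the map $\Theta_{k,n}^l$ from~\eqref{surjectionSn}. Since $W$ lies in the codomain of this surjection, there exists a preimage $x \in P_k^l(n)$ with $\Theta_{k,n}^l(x) = W$. As $P_k^l(n)$ is by definition the $\mathbb{R}$-linear span of the set of all $(k,l)$-partition diagrams, we may write $x = \sum_{d_\pi} \lambda_\pi d_\pi$ for suitable scalars $\lambda_\pi \in \mathbb{R}$. Applying the linear map $\Theta_{k,n}^l$, and using both its linearity and its defining rule $d_\pi \mapsto D_\pi$, yields
\begin{equation}
	W = \Theta_{k,n}^l\!\left(\sum_{d_\pi} \lambda_\pi d_\pi\right) = \sum_{d_\pi} \lambda_\pi \Theta_{k,n}^l(d_\pi) = \sum_{d_\pi} \lambda_\pi D_\pi,
\end{equation}
which is exactly the asserted form.

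There is no real obstacle here: the substantive content --- that $\Theta_{k,n}^l$ is well defined and surjective --- is assumed as part of Theorem~\ref{diagbasisSn}, and the corollary is a purely formal consequence. I would note two minor points for completeness. First, only surjectivity is needed; the stronger basis statement~\eqref{klSnSpanningSet} is not required for the corollary, and indeed the coefficients $\lambda_\pi$ in the full sum over all diagrams are not unique, since linear independence holds only after restricting to diagrams with at most $n$ blocks. Second, the corollary asserts only the forward direction, that every equivariant weight matrix has this form; the reverse inclusion, that every such linear combination is $S_n$-equivariant, follows from the fact that each $D_\pi$ already lies in the Hom space, but this is not what is being claimed.
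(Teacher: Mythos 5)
Your argument is correct and is exactly the intended one: the paper gives no explicit proof of this corollary, treating it as an immediate consequence of the surjectivity of $\Theta_{k,n}^l$ in Theorem~\ref{diagbasisSn}, which is precisely what you use. Your additional remarks on non-uniqueness of the coefficients and on only needing surjectivity (not the basis statement) are accurate but not required.
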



\begin{theorem}
	[Spanning set when $G(n) = O(n)$]
	\cite[Theorem 6.5]{pearcecrumpB}
	\label{spanningsetO(n)}

	For all non-negative integers $l, k$ and any positive integer $n$,
	there is a surjection of vector spaces
	\begin{equation} \label{surjectionO(n)}
		\Phi_{k,n}^l : B_k^l(n) \rightarrow 
		\Hom_{O(n)}((\mathbb{R}^{n})^{\otimes k}, (\mathbb{R}^{n})^{\otimes l})
	\end{equation}
	that is given by
	\begin{equation}
		d_\beta \mapsto E_\beta
	\end{equation}
	for all $(k,l)$--Brauer diagrams $d_\beta$, where 
	$E_\beta \coloneqq D_\beta$ 
	is given in (\ref{mappeddiagbasisSn}).
	\noindent	
	In particular, the set
	\begin{equation} \label{klOnSpanningSet}
		\{E_\beta \mid d_\beta \text{ is a } (k,l) \text{--Brauer diagram} \}
	\end{equation}
	is a spanning set for
	$\Hom_{O(n)}((\mathbb{R}^{n})^{\otimes k}, (\mathbb{R}^{n})^{\otimes l})$
	in the standard basis of $\mathbb{R}^{n}$, 
	of size $0$ when $l+k$ is odd, and of size $(l+k-1)!!$ when $l+k$ is even.
\end{theorem}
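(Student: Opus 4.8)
The plan is to recognise this statement as the diagrammatic form of the First Fundamental Theorem (FFT) of invariant theory for $O(n)$, and to prove it in three movements: verifying that each $E_\beta$ is genuinely $O(n)$-equivariant, establishing that these maps span the whole Hom-space (the content of the FFT), and finally counting the diagrams.

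First I would check equivariance. By the formula in (\ref{mappeddiagbasisSn}), the matrix $E_\beta = D_\beta$ has entries $\delta_{\beta,(I,J)}$ that are products of Kronecker deltas $\delta_{i_x i_y}$, one for each two-element block $\{x,y\}$ of $\beta$. Reading $\Phi_{k,n}^l(d_\beta)$ as a tensor in $(\mathbb{R}^n)^{\otimes(l+k)}$, this is precisely a product of copies of the invariant bilinear form. Since every $g \in O(n)$ satisfies $g^\top g = I$, each such form, and hence each $E_\beta$, is fixed by the $O(n)$-action; a direct index computation using (\ref{Gequivmapdefn}) confirms $E_\beta \in \Hom_{O(n)}((\mathbb{R}^n)^{\otimes k}, (\mathbb{R}^n)^{\otimes l})$. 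Because a $(k,l)$-Brauer diagram is exactly an equivalence class of set partitions of $[l+k]$ into blocks of size two, the assignment $d_\beta \mapsto E_\beta$ is well defined and extends $\mathbb{R}$-linearly to a map $\Phi_{k,n}^l$ on all of $B_k^l(n)$.

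The crux is surjectivity, namely that every $O(n)$-equivariant map is a linear combination of the $E_\beta$. Here I would use the standard inner product to identify $\mathbb{R}^n$ with its dual as an $O(n)$-representation, giving an isomorphism $\Hom_{O(n)}((\mathbb{R}^n)^{\otimes k}, (\mathbb{R}^n)^{\otimes l}) \cong \left((\mathbb{R}^n)^{\otimes(l+k)}\right)^{O(n)}$. An invariant in $(\mathbb{R}^n)^{\otimes(l+k)}$ is the same datum as an $O(n)$-invariant multilinear form on $l+k$ vectors, and the FFT for $O(n)$ (due to Brauer, with Weyl's classical treatment) asserts that such invariants are generated by the pairwise inner products $\langle v_x, v_y \rangle$. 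Extracting the multilinear component forces each vector to appear exactly once, so the invariant is a sum of products of inner products that pair up all $l+k$ indices, that is, a linear combination of complete contractions indexed by perfect matchings of $[l+k]$. Translating back, each complete contraction is exactly one $E_\beta$, which yields surjectivity. I expect this step to be the main obstacle, since it rests on the FFT, whose own proof (via polarisation together with the averaging properties of the Reynolds operator, or via Capelli-type arguments) is the genuinely substantive ingredient; in the present setting it would be invoked as a known result rather than reproved.

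Finally I would count. When $l+k$ is odd there is no perfect matching of $[l+k]$, so there are no $(k,l)$-Brauer diagrams and the spanning set, hence the Hom-space itself, is empty, of size $0$. When $l+k$ is even, the perfect matchings of $[l+k]$ number $(l+k-1)!! = (l+k-1)(l+k-3)\cdots 3 \cdot 1$, giving the stated cardinality. I would stress that the result claims only a spanning set and not a basis: for $n$ small relative to $l+k$ the contraction tensors $E_\beta$ satisfy nontrivial linear relations (the classical second-fundamental-theorem relations, arising from antisymmetrising over $n+1$ indices), so linear independence genuinely fails and cannot be asserted.
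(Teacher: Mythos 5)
Your proposal is correct and follows the standard route — equivariance of each contraction tensor, surjectivity via the identification $\Hom_{O(n)}((\mathbb{R}^{n})^{\otimes k},(\mathbb{R}^{n})^{\otimes l})\cong\left((\mathbb{R}^{n})^{\otimes(l+k)}\right)^{O(n)}$ together with the First Fundamental Theorem for $O(n)$, and the perfect-matching count $(l+k-1)!!$ — which is essentially the argument underlying the cited source. Note that the paper itself does not prove this theorem but imports it from \citet[Theorem 6.5]{pearcecrumpB}, so there is no internal proof to diverge from; your remark that the $E_\beta$ form only a spanning set because of second-fundamental-theorem relations for small $n$ is also accurate and consistent with the paper's phrasing.
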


\begin{corollary}[Orthogonal Group Equivariant Weight Matrices] \label{Onweightmatclass}
	For all non-negative integers $l, k$ and positive integers $n$, 
	the weight matrix $W$ that appears in
	an $O(n)$-equivariant linear layer function
	from $(\mathbb{R}^{n})^{\otimes k}$ to $(\mathbb{R}^{n})^{\otimes l}$
	must be of the form
	\begin{equation}
		W = \sum_{d_\beta \in B_k^l(n)} \lambda_\beta{E_\beta}
	\end{equation}
\end{corollary}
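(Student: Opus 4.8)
The plan is to obtain this corollary as an immediate consequence of Theorem~\ref{spanningsetO(n)}. The crux is a definitional observation: an $O(n)$-equivariant linear layer function from $(\mathbb{R}^{n})^{\otimes k}$ to $(\mathbb{R}^{n})^{\otimes l}$ is, by the equivariance condition~(\ref{Gequivmapdefn}), precisely a linear map satisfying $\phi(\rho_k(g)[v]) = \rho_l(g)[\phi(v)]$ for all $g \in O(n)$, so its associated weight matrix $W$ is by definition an element of the vector space $\Hom_{O(n)}((\mathbb{R}^{n})^{\otimes k}, (\mathbb{R}^{n})^{\otimes l})$.

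First I would record this membership $W \in \Hom_{O(n)}((\mathbb{R}^{n})^{\otimes k}, (\mathbb{R}^{n})^{\otimes l})$. Next I would invoke Theorem~\ref{spanningsetO(n)}, which guarantees that the set $\{E_\beta \mid d_\beta \text{ is a } (k,l)\text{--Brauer diagram}\}$ spans this $\Hom$ space. Since any element of a vector space can be written as a finite linear combination of elements of a spanning set, there must exist scalars $\lambda_\beta \in \mathbb{R}$, one for each $(k,l)$--Brauer diagram, such that $W = \sum_{d_\beta \in B_k^l(n)} \lambda_\beta E_\beta$, which is exactly the claimed form. Equivalently, one could phrase this through the surjectivity of the map $\Phi_{k,n}^l$ of~(\ref{surjectionO(n)}): because $B_k^l(n)$ is the formal $\mathbb{R}$--linear span of the Brauer diagrams and $\Phi_{k,n}^l$ is onto, $W$ is the image under $\Phi_{k,n}^l$ of some element $\sum_{d_\beta} \lambda_\beta d_\beta$ of $B_k^l(n)$, and applying the linear map term by term yields the displayed expression.

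I do not anticipate a genuine obstacle here, since the statement is logically downstream of the spanning set result that is assumed. The only point I would flag for the reader is that the coefficients $\lambda_\beta$ need not be unique: Theorem~\ref{spanningsetO(n)} asserts only that $\{E_\beta\}$ is a spanning set, not a basis, and indeed when $n$ is small relative to $l+k$ there are linear dependencies among the $E_\beta$, reflecting the non-injectivity of $\Phi_{k,n}^l$. Since the corollary claims only the existence of such a decomposition, this non-uniqueness is immaterial to the argument.
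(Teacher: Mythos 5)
Your proposal is correct and matches the paper's (implicit) reasoning exactly: the corollary is stated as an immediate consequence of Theorem~\ref{spanningsetO(n)}, with the weight matrix lying in $\Hom_{O(n)}((\mathbb{R}^{n})^{\otimes k}, (\mathbb{R}^{n})^{\otimes l})$ by the equivariance condition and hence expressible in the spanning set $\{E_\beta\}$. Your remark on the non-uniqueness of the coefficients $\lambda_\beta$ is a correct and worthwhile observation, but does not change the argument.
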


\begin{theorem} 
	[Spanning set when $G(n) = Sp(n), n = 2m$]
	\cite[Theorem 6.6]{pearcecrumpB}
	\label{spanningsetSp(n)}

	For all non-negative integers $l, k$ and any positive integer $n$
	such that $n = 2m$, there is a surjection of vector spaces
	\begin{equation} \label{surjectionSp(n)}
		X_{k,n}^l : B_k^l(n) \rightarrow 
		\Hom_{Sp(n)}((\mathbb{R}^{n})^{\otimes k}, (\mathbb{R}^{n})^{\otimes l})
	\end{equation}
	that is given by
	\begin{equation}
		d_\beta \mapsto F_\beta
	\end{equation}
	for all $(k,l)$--Brauer diagrams $d_\beta$, where $F_\beta$ is 
	defined as follows.

	Associate the indices $i_1, i_2, \dots, i_l$ with the vertices in the top row of $d_\beta$, and $j_1, j_2, \dots, j_k$ with the vertices in the bottom row of $d_\beta$.
	Then, we have that
	\begin{equation} \label{matrixSp(n)}
		F_\beta 
		\coloneqq
		\sum_{I, J} 
		\gamma_{r_1, u_1}
		\gamma_{r_2, u_2}
		\dots
		\gamma_{r_{\frac{l+k}{2}}, u_{\frac{l+k}{2}}}
		E_{I,J}
	\end{equation}
	where the indices $i_p, j_p$ range over $1, 1', \dots, m, m'$,
	where $r_1, u_1, \dots, r_{\frac{l+k}{2}}, u_{\frac{l+k}{2}}$ is any permutation of the indices $i_1, i_2, \dots, i_l, j_1, j_2, \dots, j_k$ such that the vertices corresponding to
	$r_p, u_p$ 
	are in the same block of $\beta$, and
	\begin{equation} \label{gammarpup}
		\gamma_{r_p, u_p} \coloneqq
		\begin{cases}
			\delta_{r_p, u_p} & \text{if the vertices corresponding to } r_p, u_p \text{ are in different rows of } d_\beta \\
			\epsilon_{r_p, u_p} & \text{if the vertices corresponding to } r_p, u_p \text{ are in the same row of } d_\beta
    		\end{cases}
	\end{equation}
	Here, $\epsilon_{r_p, u_p}$ is given by
\begin{equation} \label{epsilondef1}
	\epsilon_{\alpha, \beta} = \epsilon_{{\alpha'}, {\beta'}} = 0
\end{equation}
\begin{equation} \label{epsilondef2}
	\epsilon_{\alpha, {\beta'}} = - \epsilon_{{\alpha'}, {\beta}} = \delta_{\alpha, \beta}
\end{equation}

	\noindent
	In particular, the set
	\begin{equation} \label{klSpnSpanningSet}
		\{F_\beta \mid d_\beta \text{ is a } (k,l) \text{--Brauer diagram} \}
	\end{equation}
	is a spanning set for
	$\Hom_{Sp(n)}((\mathbb{R}^{n})^{\otimes k}, (\mathbb{R}^{n})^{\otimes l})$, for $n = 2m$,
	in the symplectic basis of $\mathbb{R}^{n}$,
	of size $0$ when $l+k$ is odd, and of size $(l+k-1)!!$ when $l+k$ is even.
\end{theorem}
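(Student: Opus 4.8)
The plan is to prove this through the First Fundamental Theorem (FFT) of invariant theory for the symplectic group, combined with the self-duality of the defining representation. Write $V = \mathbb{R}^n$ with $n = 2m$, equipped with the nondegenerate symplectic form $\omega$ whose matrix in the symplectic basis is exactly the tensor $\epsilon$ specified in (\ref{epsilondef1})--(\ref{epsilondef2}). The non-degeneracy of $\omega$ provides an $Sp(n)$-equivariant isomorphism $V \cong V^{*}$, and this is the structural feature that makes the whole Brauer-diagram description possible; the vanishing of $\epsilon$ on same-type pairs in (\ref{epsilondef1}) and its antisymmetry in (\ref{epsilondef2}) are precisely the properties of an alternating form.

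First I would reduce the Hom space to a space of invariants. Using $V \cong V^{*}$ together with the canonical identification of a Hom space with a tensor product, there is a chain of natural $Sp(n)$-equivariant isomorphisms
\[
	\Hom_{Sp(n)}(V^{\otimes k}, V^{\otimes l})
	\cong \left(V^{\otimes l} \otimes (V^{*})^{\otimes k}\right)^{Sp(n)}
	\cong \left(V^{\otimes (l+k)}\right)^{Sp(n)}.
\]
Under this identification a standard matrix unit $E_{I,J}$ is transported to a tensor basis element of $V^{\otimes(l+k)}$, with the bottom indices $j_1, \dots, j_k$ repositioned by the form that implements $V \cong V^{*}$. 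Tracking this transport carefully is what produces the two cases of (\ref{gammarpup}): an edge joining a top vertex to a bottom vertex is a \emph{through strand}, for which the form used to identify $V$ with $V^{*}$ cancels against its inverse and leaves a Kronecker delta $\delta$, whereas an edge joining two vertices in the same row retains an explicit factor of $\omega = \epsilon$ (or of its inverse). This is exactly the dichotomy asserted by (\ref{gammarpup}).

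Second, I would invoke the FFT for $Sp(n)$: the invariant space $\left(V^{\otimes 2p}\right)^{Sp(n)}$ is spanned by the complete contractions indexed by perfect matchings of the $2p$ tensor slots, with each matched pair filled by the canonical $Sp(n)$-invariant associated to $\omega$, and $\left(V^{\otimes r}\right)^{Sp(n)} = 0$ whenever $r$ is odd. Transporting this statement back along the isomorphism above shows that the tensors $F_\beta$ span $\Hom_{Sp(n)}(V^{\otimes k}, V^{\otimes l})$, and hence that the linear map $X_{k,n}^{l}$ defined by $d_\beta \mapsto F_\beta$ on the basis of Brauer diagrams is surjective. Since $(k,l)$--Brauer diagrams are by definition the set partitions of $[l+k]$ into blocks of size exactly two, they are in bijection with perfect matchings of $[l+k]$; there are $(l+k-1)!!$ of these when $l+k$ is even and none when $l+k$ is odd. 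This yields the stated size and, in the odd case, recovers that the Hom space is zero.

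The step I expect to be most delicate is establishing that each $F_\beta$ is well defined together with consistent sign conventions. The formula (\ref{matrixSp(n)}) depends on a choice of ordering $(r_p, u_p)$ of the two endpoints of each block, and because $\epsilon$ is antisymmetric, swapping the endpoints of a same-row edge flips a sign. I would resolve this by fixing the convention that the endpoints of each block are read left-to-right (and top-vertex-first for a through strand), and then checking that the factors $\gamma_{r_p, u_p}$ associated to distinct, disjoint blocks commute, so that the product is independent of the order in which the blocks are listed. For the surjectivity conclusion itself this ambiguity is harmless, since rescaling a generator by $-1$ does not change a span; the care is needed only to pin down each $F_\beta$ as a specific matrix. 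Finally I would stress that the conclusion is a spanning set rather than a basis precisely because, once $l+k$ exceeds $n$, the Second Fundamental Theorem for $Sp(n)$ produces Pfaffian-type linear relations among the $F_\beta$, so $X_{k,n}^{l}$ has a nontrivial kernel in that regime.
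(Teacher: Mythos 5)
Your proposal is correct and follows essentially the same route as the source this paper cites for the result (the paper itself states Theorem \ref{spanningsetSp(n)} as background, deferring the proof to \cite[Theorem 6.6]{pearcecrumpB}): reduce $\Hom_{Sp(n)}((\mathbb{R}^{n})^{\otimes k},(\mathbb{R}^{n})^{\otimes l})$ to $\left((\mathbb{R}^{n})^{\otimes(l+k)}\right)^{Sp(n)}$ via the self-duality induced by the symplectic form, and invoke the First Fundamental Theorem to span the invariants by complete contractions indexed by perfect matchings, i.e.\ $(k,l)$--Brauer diagrams. Your observation that the antisymmetry of $\epsilon$ makes each $F_\beta$ well defined only up to a sign that must be fixed by an endpoint-ordering convention (and that this is harmless for the spanning claim) is a genuine subtlety that the theorem statement itself elides.
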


\begin{corollary}[Symplectic Group Equivariant Weight Matrices] \label{Spnweightmatclass}
	For all non-negative 
	integers $l, k$ and positive even integers $n$, 
	the weight matrix $W$ that appears in
	an $Sp(n)$-equivariant linear layer function
	from $(\mathbb{R}^{n})^{\otimes k}$ to $(\mathbb{R}^{n})^{\otimes l}$
	must be of the form
	\begin{equation}
		W = \sum_{d_\beta \in B_k^l(n)} \lambda_\beta{F_\beta}
	\end{equation}
\end{corollary}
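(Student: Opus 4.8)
The plan is to derive this statement directly from Theorem~\ref{spanningsetSp(n)}, since the weight matrix characterisation is essentially a reformulation of the spanning set result in the language of linear layer functions. First I would unpack the hypothesis: by the definitions in Section~\ref{sectLayerSpaces}, the defining property of an $Sp(n)$-equivariant linear layer function from $(\mathbb{R}^{n})^{\otimes k}$ to $(\mathbb{R}^{n})^{\otimes l}$ is precisely that its associated weight matrix $W$ satisfies the equivariance condition (\ref{Gequivmapdefn}) with $G(n) = Sp(n)$. Since $W$ is by assumption a linear map, this says exactly that $W$ is an element of the vector space $\Hom_{Sp(n)}((\mathbb{R}^{n})^{\otimes k}, (\mathbb{R}^{n})^{\otimes l})$.

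Next I would invoke Theorem~\ref{spanningsetSp(n)}, which asserts that the set $\{F_\beta \mid d_\beta \text{ is a } (k,l)\text{--Brauer diagram}\}$ is a spanning set for $\Hom_{Sp(n)}((\mathbb{R}^{n})^{\otimes k}, (\mathbb{R}^{n})^{\otimes l})$ in the symplectic basis of $\mathbb{R}^{n}$. Because $W$ lies in this Hom-space and the matrices $F_\beta$ span it, $W$ must be expressible as a finite $\mathbb{R}$-linear combination of the $F_\beta$; that is, there exist scalars $\lambda_\beta \in \mathbb{R}$, indexed by the $(k,l)$--Brauer diagrams $d_\beta \in B_k^l(n)$, with $W = \sum_{d_\beta \in B_k^l(n)} \lambda_\beta F_\beta$. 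This is exactly the claimed form.

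I expect essentially no obstacle here, since all of the substantive content --- in particular that each $F_\beta$ is itself $Sp(n)$-equivariant, and that these matrices exhaust the Hom-space --- is already carried by Theorem~\ref{spanningsetSp(n)}. The only genuinely new step is the routine unwinding of the definition of an equivariant layer function into membership of the Hom-space, which is immediate. This mirrors precisely the way Corollary~\ref{diagweightmatclass} and Corollary~\ref{Onweightmatclass} follow from Theorem~\ref{diagbasisSn} and Theorem~\ref{spanningsetO(n)} respectively. The one point I would flag for the reader is that, because the $F_\beta$ form a spanning set rather than a basis (the surjection $X_{k,n}^l$ need not be injective when $n = 2m$ is small relative to $l+k$), the coefficients $\lambda_\beta$ need not be unique; the corollary only asserts the existence of such a decomposition, so this non-uniqueness does not affect the argument.
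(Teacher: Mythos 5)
Your proposal is correct and matches the paper's (implicit) argument exactly: the paper states this corollary without proof as an immediate consequence of Theorem~\ref{spanningsetSp(n)}, relying on precisely the reasoning you give --- the weight matrix of an $Sp(n)$-equivariant linear layer is an element of $\Hom_{Sp(n)}((\mathbb{R}^{n})^{\otimes k}, (\mathbb{R}^{n})^{\otimes l})$, which the $F_\beta$ span. Your observation that the coefficients $\lambda_\beta$ need not be unique because the $F_\beta$ form only a spanning set is a worthwhile clarification, but it does not change the substance of the argument.
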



\begin{theorem} 
	[Spanning set when $G(n) = SO(n)$]
	\cite[Theorem 6.7]{pearcecrumpB}
	\label{spanningsetSO(n)}

	For all non-negative integers $l, k$ and positive even integers $n$,
	there is a surjection of vector spaces
	\begin{equation} \label{surjectionSO(n)}
		\Psi_{k,n}^l : D_k^l(n) \rightarrow 
		\Hom_{SO(n)}((\mathbb{R}^{n})^{\otimes k}, (\mathbb{R}^{n})^{\otimes l})
	\end{equation}
	that is given by
	\begin{equation}
		d_\beta \mapsto E_\beta
	\end{equation}
	if $d_\beta$ is a $(k,l)$--Brauer diagram, where $E_\beta$
	was defined in Theorem \ref{spanningsetO(n)},
	and by
	\begin{equation} \label{surjdalpha}
		d_\alpha \mapsto H_\alpha
	\end{equation}
	if $d_\alpha$ is a $(k+l)\backslash n$--diagram, where $H_\alpha$
	is defined as follows.

	Associate 
the indices $i_1, i_2, \dots, i_l$ with the vertices in the top row of $d_\alpha$, and $j_1, j_2, \dots, j_k$ with the vertices in the bottom row of $d_\alpha$.
	Suppose that there are $s$ free vertices in the top row. Then there are $n-s$ free vertices in the bottom row.
	Relabel the $s$ free indices in the top row from left-to-right by 
	$t_1, \dots, t_s$, and the $n-s$ free indices in the bottom row 
	from left-to-right by $b_1, \dots, b_{n-s}$. 
	Then, we have that
	\begin{equation} \label{SO(n)Halpha}
		H_\alpha \coloneqq
		\sum_{I \in [n]^l, J \in [n]^k} 
		\det(e_{T,B})
		\delta(R,U)
		E_{I,J}
	\end{equation}
	where $\det(e_{T,B})$ is the determinant of the $n \times n$ matrix
	\begin{equation} \label{detmapdefn}
		\lvert e_{t_1} \; e_{t_2} \; \cdots \; e_{t_s} \;
		 e_{b_1} \; \cdots \;  e_{b_{n-s}}\rvert
	\end{equation}
	and
	\begin{equation}
	\delta(R,U) \coloneqq 
		\delta_{r_1, u_1}
		\delta_{r_2, u_2}
		\dots
		\delta_{r_{\frac{l+k-n}{2}}, u_{\frac{l+k-n}{2}}}
	\end{equation}
	Here, $r_1, u_1, \dots, r_{\frac{l+k-n}{2}}, u_{\frac{l+k-n}{2}}$
	is any permutation of the indices 
	\begin{equation}	
	\{i_1, \dots, i_l, j_1, \dots, j_k\} \backslash \{t_1, \dots, t_s, b_1, \dots, b_{n-s}\}
	\end{equation}
	such that the vertices corresponding to $r_p, u_p$ are in the same block of $\alpha$. 

	In particular, the set
	\begin{equation} \label{SOn2SpanningSet}
		\{E_\beta\}_{\beta}
		\cup
		\{H_\alpha\}_{\alpha}
	\end{equation}
	where $d_\beta$ is a $(k,l)$--Brauer diagram, and $d_\alpha$ is a $(l+k) \backslash n$--diagram,
	is a spanning set for
	$\Hom_{SO(n)}((\mathbb{R}^{n})^{\otimes k}, (\mathbb{R}^{n})^{\otimes l})$
	in the standard basis of $\mathbb{R}^{n}$.
\end{theorem}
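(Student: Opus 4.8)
The plan is to reduce the statement to the First Fundamental Theorem (FFT) of invariant theory for $SO(n)$, via the self-duality of $\mathbb{R}^n$. Since $SO(n) \subseteq O(n)$ preserves the standard symmetric bilinear form, that form induces an isomorphism $\mathbb{R}^n \cong (\mathbb{R}^n)^*$ of $SO(n)$-representations. Tensoring, I obtain an isomorphism of $SO(n)$-representations
$$\Hom((\mathbb{R}^n)^{\otimes k}, (\mathbb{R}^n)^{\otimes l}) \cong (\mathbb{R}^n)^{\otimes l} \otimes ((\mathbb{R}^n)^*)^{\otimes k} \cong (\mathbb{R}^n)^{\otimes(l+k)},$$
under which the equivariant maps on the left correspond exactly to the invariant tensors on the right, giving $\Hom_{SO(n)}((\mathbb{R}^n)^{\otimes k}, (\mathbb{R}^n)^{\otimes l}) \cong \left((\mathbb{R}^n)^{\otimes(l+k)}\right)^{SO(n)}$. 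It therefore suffices to produce a spanning set for this invariant subspace and match it, factor by factor, with the images of diagrams under $\Psi_{k,n}^l$.

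Next I would invoke the FFT for $SO(n)$ (Weyl; see also Goodman--Wallach), which states that $\left((\mathbb{R}^n)^{\otimes(l+k)}\right)^{SO(n)}$ is spanned by two families of tensors: first, the complete contractions built solely from the invariant form $\delta$, i.e. a choice of perfect matching of the $l+k$ tensor factors into pairs; and second, the tensors built from a single copy of the Levi--Civita (alternating) tensor $\epsilon$ on some $n$ of the factors, together with a perfect $\delta$-matching of the remaining $l+k-n$ factors. The first family is nonempty precisely when $l+k$ is even, while the second requires $l+k-n$ to be non-negative and even. I would then observe that the combinatorial data of a matching in the first family is exactly a $(k,l)$--Brauer diagram, and the data of ``$n$ distinguished factors plus a matching of the rest'' in the second family is exactly an $(l+k)\backslash n$--diagram, whose $n$ free vertices index the $\epsilon$-contraction.

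The remaining work is to check that under the self-duality isomorphism these FFT generators are carried precisely to the diagram images $E_\beta$ and $H_\alpha$, which yields surjectivity of $\Psi_{k,n}^l$ and hence the spanning-set conclusion. For the first family this is immediate from Theorem \ref{spanningsetO(n)}, since the $\delta$-contraction associated to a Brauer diagram is exactly $E_\beta$ (and is already $O(n)$-, hence $SO(n)$-, equivariant). For the second family, I would verify that contracting the alternating tensor against the standard basis vectors sitting at the free vertices produces exactly the determinant factor $\det(e_{T,B})$ of (\ref{SO(n)Halpha}), while the $\delta$-matching of the non-free vertices produces $\delta(R,U)$; one also checks here that $H_\alpha$ is genuinely $SO(n)$-equivariant, which holds because $\epsilon$ transforms by $\det(g)=1$ for $g \in SO(n)$, so that $\Psi$ is well defined.

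I expect the main obstacle to be the bookkeeping in the determinant family: getting the signs and the left-to-right ordering right so that the Levi--Civita contraction matches the determinant in (\ref{detmapdefn}) with the top-row free indices $t_1, \dots, t_s$ listed before the bottom-row free indices $b_1, \dots, b_{n-s}$. The self-duality isomorphism reinterprets bottom-row factors as dual factors, and one must confirm that for $SO(n)$ this introduces no spurious sign, since here the determinant is a genuine invariant rather than a relative invariant as it is for $O(n)$. A secondary point is confirming that the two families together exhaust the invariants --- that no product of two or more $\epsilon$ factors is needed --- which follows from the identity rewriting $\epsilon \otimes \epsilon$ as an alternating sum of products of $\delta$'s, reducing any such product back to one of the two families already accounted for.
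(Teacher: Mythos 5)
The paper does not prove this theorem itself; it is quoted from \cite[Theorem 6.7]{pearcecrumpB}, whose argument runs exactly along the lines you describe --- self-duality of $\mathbb{R}^n$ under the invariant bilinear form to pass from $\Hom_{SO(n)}((\mathbb{R}^{n})^{\otimes k},(\mathbb{R}^{n})^{\otimes l})$ to $\bigl((\mathbb{R}^{n})^{\otimes(l+k)}\bigr)^{SO(n)}$, followed by the First Fundamental Theorem for $SO(n)$ (via Lehrer--Zhang's treatment of the Brauer category), with the $\delta$-pairings giving the $E_\beta$ and the single-$\epsilon$ contractions giving the $H_\alpha$. Your outline is correct, including the two points you rightly flag as needing care: the sign/ordering bookkeeping in matching the Levi--Civita contraction to $\det(e_{T,B})$, and the reduction of products of two or more $\epsilon$'s back to $\delta$'s so that one alternating factor suffices.
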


\begin{corollary}[Special Orthogonal Group Equivariant Weight Matrices] \label{SOnweightmatclass}
	For all \allowbreak{} non-negative integers 
	$l, k$ and positive integers $n$, 
	the weight matrix $W$ that appears in
	an $SO(n)$-equivariant linear layer function
	from $(\mathbb{R}^{n})^{\otimes k}$ to $(\mathbb{R}^{n})^{\otimes l}$
	must be of the form
	\begin{equation}
		W = 
		\sum_{d_\beta \in D_k^l(n)} \lambda_\beta{E_\beta}
		+	
		\sum_{d_\alpha \in D_k^l(n)} \lambda_\alpha{H_\alpha}
	\end{equation}
\end{corollary}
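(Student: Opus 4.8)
The plan is to read this off directly from Theorem~\ref{spanningsetSO(n)}, since the corollary is essentially a reformulation of that theorem's spanning-set conclusion in the language of neural network weight matrices. First I would recall that an $SO(n)$-equivariant linear layer function from $(\mathbb{R}^{n})^{\otimes k}$ to $(\mathbb{R}^{n})^{\otimes l}$ is precisely a linear map $\phi$ satisfying the equivariance condition~(\ref{Gequivmapdefn}) for $G(n) = SO(n)$; by definition its weight matrix $W$ is therefore an element of the vector space $\Hom_{SO(n)}((\mathbb{R}^{n})^{\otimes k},(\mathbb{R}^{n})^{\otimes l})$.

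The key step is then to invoke Theorem~\ref{spanningsetSO(n)}, which asserts that the surjection $\Psi_{k,n}^l : D_k^l(n) \rightarrow \Hom_{SO(n)}((\mathbb{R}^{n})^{\otimes k},(\mathbb{R}^{n})^{\otimes l})$ has image equal to the whole Hom-space, with spanning set $\{E_\beta\}_\beta \cup \{H_\alpha\}_\alpha$ indexed by the $(k,l)$--Brauer diagrams $d_\beta$ and the $(l+k)\backslash n$--diagrams $d_\alpha$. Because $W$ lies in this spanned space, there must exist real scalars $\lambda_\beta$ and $\lambda_\alpha$ expressing $W$ as the stated linear combination, where the first sum ranges over the Brauer-diagram generators of $D_k^l(n)$ (on which $E_\beta$ is defined) and the second over its $(l+k)\backslash n$--diagram generators (on which $H_\alpha$ is defined).

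There is no genuine obstacle at the level of the corollary: all of the mathematical content --- the explicit construction of the matrices $E_\beta$ and $H_\alpha$, the verification that each is $SO(n)$-equivariant, and above all the surjectivity of $\Psi_{k,n}^l$ --- is already packaged inside Theorem~\ref{spanningsetSO(n)}. I would simply remark that, because $\Psi_{k,n}^l$ is only a surjection and not an isomorphism (the relevant set is a spanning set rather than a basis), the coefficients $\lambda_\beta, \lambda_\alpha$ are guaranteed to exist but need not be unique; since the corollary asserts only existence of such a representation, this is exactly what is required.
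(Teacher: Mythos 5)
Your proposal is correct and matches the paper's treatment: the paper states this corollary without a separate proof, treating it as an immediate consequence of the spanning-set result in Theorem~\ref{spanningsetSO(n)}, exactly as you argue. Your added remark on the non-uniqueness of the coefficients $\lambda_\beta, \lambda_\alpha$ (since the set is only a spanning set, not a basis) is accurate and a worthwhile clarification.
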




\section{Equivariant Weight Matrices from Categories} \label{weightmatcat}

In the previous section,
we recalled the linear maps that can be used to obtain the equivariant
weight matrices between tensor power spaces of $\mathbb{R}^{n}$
for each of the four groups in question.
To motivate the content of this section, we note that,
for a given group $G(n)$, 
each of the weight matrices 
is in some sense similar, since they are determined 
in part by the layer spaces,
which are tensor power representations 
that differ only in their order.
Likewise, the vector spaces containing certain types of set partition diagrams
are also, in some sense, similar, in that, for a fixed value of $n$, 
the vector spaces
mostly differ by the values chosen for $l$ and $k$, 
which represent the number of vertices
in each row of a set partition diagram.
We would like to formalise this intuition 
of similarity across different values of $l$ and $k$
for both the vector spaces of equivariant linear maps
and the vector spaces containing certain types of set partition diagrams.
For this, the concepts that appear in category theory are ideal,
in that they are useful for abstracting away the specifics of structures 
in order to study their general properties.
From the vector spaces that have similar properties, we create so-called
monoidal categories, which are categories that have an additional operation
for composing objects and morphisms,
known as the tensor product,
and then use them to create so-called monoidal functors, 
which are functors that preserve the tensor product between monoidal categories.
We are particularly interested in the tensor product 
because the layer spaces
are tensor power spaces of $\mathbb{R}^{n}$, and they differ only
in the number of tensor products that are involved.

We wish to emphasise that we are not simply rewriting
existing results in a different language, but that, 
as an outcome of this process,
we obtain new insights into the equivariant neural networks for each group.
In particular, we show that set partition diagrams that appear in monoidal
categories have a string-like quality to them.
By pulling on the strings or dragging their ends to different locations, 
we can use the monoidal functors to obtain new results
that are relevant
for the weight matrices that appear in these group equivariant neural networks.
We summarise these new results at the end of this section.

\subsection{Strict $\mathbb{R}$--Linear Monoidal Categories and String Diagrams}

We begin by introducing categories and functors that are \textbf{monoidal}.
The monoidal property gives additional structure to the way 
in which objects and morphisms can be related.
In particular, monoidal categories have an additional operation, 
known as a \textbf{bifunctor}, or a \textbf{tensor product}, 
that enables objects and morphisms to be composed in a second way,
relative to the usual definition of composition that exists in every category.
We often call this additional composition \textbf{horizontal},
in contrast to the \textbf{vertical} composition that comes with any category. 
Crucially, monoidal functors preserve the tensor product 
across monoidal categories.

In this section, we assume knowledge of
the definition of a category 
and the definition of a functor between categories.
These definitions can be found in an introductory text on category theory,
such as \citet{leinster2014, maclane} or \citet{riehl2017}.



We assume throughout that all categories are \textbf{locally small}, which
means that the collection of morphisms between any two objects is a set.
In fact, all of the categories that we consider going forward 
have morphism sets that are vector spaces.
Hence, the morphisms between objects are actually linear maps.
For the definitions in this section, we follow the presentation that is given in 
\citet{Hu2019} and \citet{Savage2021}.


\begin{definition} \label{categorystrictmonoidal}
	A category $\mathcal{C}$ is said to be \textbf{strict monoidal} if it comes with
		a functor $\otimes: \mathcal{C} \times \mathcal{C} \rightarrow \mathcal{C}$, 
		known as a \textbf{bifunctor} or a \textbf{tensor product}, and
	a unit object $\mathds{1}$,
	such that, for all objects $X, Y, Z$ in $\mathcal{C}$, we have that
	\begin{equation}
		(X \otimes Y) \otimes Z = X \otimes (Y \otimes Z)
	\end{equation}
	\begin{equation}
		(\mathds{1} \otimes X) = X = (X \otimes \mathds{1})
	\end{equation}
	and, for all morphisms $f, g, h$ in $\mathcal{C}$, we have that
	\begin{equation} \label{assocbifunctor}
		(f \otimes g) \otimes h = f \otimes (g \otimes h)
	\end{equation}
	\begin{equation}
		(1_\mathds{1} \otimes f) = f = (f \otimes 1_\mathds{1})
	\end{equation}
	where $1_\mathds{1}$ is the identity morphism $\mathds{1} \rightarrow \mathds{1}$.
	We often use the tuple
	$(\mathcal{C}, \otimes_\mathcal{C}, \mathds{1}_\mathcal{C})$
	to refer to the strict monoidal category $\mathcal{C}$.
\end{definition}

We have started with the definition of a \textit{strict} monoidal category as opposed to
that of a monoidal category 
since we can assume that all
monoidal categories are strict, 
owing to a technical result known as Mac Lane's Coherence Theorem. 
See \citet{maclane} for more details.

\begin{definition} \label{categorylinear}
	A category $\mathcal{C}$ is said to be $\mathbb{R}$\textbf{--linear} if,
	for any two objects $X, Y$ in $\mathcal{C}$, the morphism space 
	$\Hom_{\mathcal{C}}(X,Y)$ is a vector space over $\mathbb{R}$, and
	the composition of morphisms is $\mathbb{R}$--bilinear.
\end{definition}

Combining Definitions \ref{categorystrictmonoidal} and \ref{categorylinear},
we get
\begin{definition}
	A category $\mathcal{C}$ is said to be \textbf{strict} $\mathbb{R}$\textbf{--linear monoidal} if it is a category that is both strict monoidal and $\mathbb{R}$--linear, such that the bifunctor $\otimes$ is $\mathbb{R}$--bilinear.
\end{definition}


We now consider functors between 
strict $\mathbb{R}$--linear monoidal categories 
that preserve the tensor product.
\begin{definition} \label{monoidalfunctordefn}
	Suppose that
	$(\mathcal{C}, \otimes_\mathcal{C}, \mathds{1}_\mathcal{C})$
	and
	$(\mathcal{D}, \otimes_\mathcal{D}, \mathds{1}_\mathcal{D})$
	are two strict $\mathbb{R}$--linear monoidal categories.

	A \textbf{strict} $\mathbb{R}$\textbf{--linear monoidal functor} 
	from $\mathcal{C}$ to $\mathcal{D}$ is a functor 
	$\mathcal{F}: \mathcal{C} \rightarrow \mathcal{D}$ 
	such that
	\begin{enumerate}
		\item for all objects $X, Y$ in $\mathcal{C}$,
			$\mathcal{F}(X \otimes_\mathcal{C} Y) =
			\mathcal{F}(X) \otimes_\mathcal{D} \mathcal{F}(Y)$
		\item for all morphisms $f, g$ in $\mathcal{C}$,
			$\mathcal{F}(f \otimes_\mathcal{C} g) =
			\mathcal{F}(f) \otimes_\mathcal{D} \mathcal{F}(g)$
		\item $\mathcal{F}(\mathds{1}_\mathcal{C}) = \mathds{1}_\mathcal{D}$, and
		\item for all objects $X, Y$ in $\mathcal{C}$, the map
		\begin{equation} \label{maphomsets}
			\Hom_{\mathcal{C}}(X,Y) 
			\rightarrow 
			\Hom_{\mathcal{D}}(\mathcal{F}(X),\mathcal{F}(Y))
		\end{equation}
		given by
		$f \mapsto \mathcal{F}(f)$
		is $\mathbb{R}$--linear.
	\end{enumerate}
\end{definition}


Strict monoidal categories are particularly interesting because 
they can be represented by a very useful diagrammatic language 
known as \textbf{string diagrams}.
We will see that, 
as this language is, in some sense, geometric in nature,
it is much easier to work with these diagrams 
compared with their equivalent 
algebraic form.

\begin{definition} [String Diagrams]
	Suppose that $\mathcal{C}$ is a strict monoidal category.
	Let $W, X, Y$ and $Z$ be objects in $\mathcal{C}$, and let
$f: X \rightarrow Y$, $g: Y \rightarrow Z$, and $h: W \rightarrow Z$ be morphisms in $\mathcal{C}$.
	Then we can represent the morphisms
	$1_{X}: X \rightarrow X$,
	$f: X \rightarrow Y$,
	$g \circ f: X \rightarrow Z$ and
	$f \otimes h: X \otimes W \rightarrow Y \otimes Z$
	as string diagrams in the following way:
	\begin{equation}
		\begin{aligned}
		\scalebox{0.75}{\tikzfig{background/stringdiagrams}}
		\end{aligned}
	\end{equation}
	In particular, the vertical composition of morphisms $g \circ f$ is obtained by placing $g$ above $f$, and the horizontal composition of morphisms $f \otimes h$ is obtained by horizontally placing $f$ to the left of $h$.

	We will often omit the labelling of the objects when they are clear or when they are not important.
\end{definition}

As an example of how useful string diagrams are when working with strict monoidal categories, the associativity of the bifunctor given in (\ref{assocbifunctor}) becomes immediately apparent.
Another, more involved, example is given by the interchange law that exists for any strict monoidal category. It can be expressed algebraically as
\begin{equation} \label{interchange}
	(\mathds{1} \otimes g) \circ (f \otimes \mathds{1})
	=
	f \otimes g
	=
	(f \otimes \mathds{1}) \circ (\mathds{1} \otimes g) 
\end{equation}
Without string diagrams, it is somewhat tedious to prove this result.
Indeed, for the left hand equality of (\ref{interchange}), we have that
\begin{equation}
	(\mathds{1} \otimes g) \circ (f \otimes \mathds{1})	
	=
	(\otimes (\mathds{1}, g)) 
	\circ 
	(\otimes (f, \mathds{1})) 
	=
	\otimes((\mathds{1}, g) \circ (f, \mathds{1}))
	=
	\otimes((f, g))
	=
	f \otimes g
\end{equation}
where we have used the definition of composition in 
$\mathcal{C} \times \mathcal{C}$ and the fact that
$\otimes : \mathcal{C} \times \mathcal{C} \rightarrow \mathcal{C}$ is a
functor.
A similar calculation also holds for the right hand equality of (\ref{interchange}).

However, with string diagram, the result is intuitively obvious, if we allow ourselves to deform the diagrams by pulling on the strings:
\begin{equation}
	\begin{aligned}
		\scalebox{0.75}{\tikzfig{background/interchangelaw}}
	\end{aligned}
\end{equation}


\subsection{Categorification}

Previously,
we defined a vector space for each non-negative integer $l$ and $k$
that is the $\mathbb{R}$--linear span of a 
certain subset of $(k,l)$--partition diagrams. 
However, it is clear that, for all values of $l$ and $k$,
these vector spaces are all similar in nature, in that the set partition diagrams only differ by the number of vertices that appear in each row and by the connections that are made between vertices.
Moreover, 
set partition diagrams
look like string diagrams.
Given that string diagrams represent strict monoidal categories, and that we have
a collection of vector spaces 
for certain subsets of set partition diagrams, this implies that we should have a number of strict $\mathbb{R}$--linear monoidal categories.
We formalise this intuition below.

\subsubsection{Partition Categories}

We assume throughout that $l$ and $k$ are non-negative integers 
and that $n$ is a positive integer.
In Section \ref{sectSetPart},
we defined the partition vector space $P_k^l(n)$,
which has a basis consisting of all possible $(k,l)$--partition diagrams.
In order to obtain a category from these vector spaces, we need to define the following two 
$\mathbb{R}$--bilinear operations on $(k,l)$--partition diagrams:
\begin{align}
	\text{composition:   } & \bullet: P_l^m(n) \times P_k^l(n) \rightarrow P_k^m(n) \label{compositionstatement} \\
	\text{tensor product:   } & \otimes: P_k^l(n) \times P_q^m(n) \rightarrow P_{k+q}^{l+m}(n) \label{tensorprodstatement}
\end{align}
They are given as follows:

\begin{definition}[Composition] \label{composition}
	Let $d_{\pi_1} \in P_k^l(n)$ and $d_{\pi_2} \in P_l^m(n)$. 
		First, we concatenate the diagrams, written $d_{\pi_2} \circ d_{\pi_1}$, by putting $d_{\pi_1}$ below $d_{\pi_2}$, concatenating the edges in the middle row of vertices, and then removing all connected components that lie entirely in the middle row of the concatenated diagrams. 
Let $c(d_{\pi_2}, d_{\pi_1})$ be the number of connected components that are removed from the middle row in $d_{\pi_2} \circ d_{\pi_1}$.
		Then the composition is defined, using infix notation, as
\begin{equation} 
	d_{\pi_2} \bullet d_{\pi_1} 
	\coloneqq 
	n^{c(d_{\pi_2}, d_{\pi_1})} (d_{\pi_2} \circ d_{\pi_1})
\end{equation}
\end{definition}

\begin{example} \label{partcomp}
	If $d_{\pi_2}$ is the 
	$(6,4)$--partition diagram 
	\begin{equation}
	\begin{aligned}
		\scalebox{0.6}{\tikzfig{category/composition1i}}
	\end{aligned}
	\end{equation}
	and
	$d_{\pi_1}$ is the 
	$(3,6)$--partition diagram 
	\begin{equation}
	\begin{aligned}
		\scalebox{0.6}{\tikzfig{category/composition1ii}}
	\end{aligned}
	\end{equation}
	then
	we have that $d_{\pi_2} \circ d_{\pi_1}$ is 
	the 
	$(3,4)$--partition diagram 
	\begin{equation} \label{partcompdiagram}
	\begin{aligned}
		\scalebox{0.6}{\tikzfig{category/composition}}
	\end{aligned}
	\end{equation}
	and so $d_{\pi_2} \bullet d_{\pi_1}$ is the diagram (\ref{partcompdiagram}) 
	multiplied by $n^2$, since two connected components were removed from 
	the middle row of $d_{\pi_2} \circ d_{\pi_1}$.	
\end{example}

\begin{definition}[Tensor Product] \label{tensorprod}
	Let $d_{\pi_1} \in P_k^l(n)$ and $d_{\pi_2} \in P_q^m(n)$. Then $d_{\pi_1} \otimes d_{\pi_2}$ is defined to be the $(k+q, l+m)$--partition diagram that is obtained by horizontally placing $d_{\pi_1}$ to the left of $d_{\pi_2}$ without any overlapping of vertices.
\end{definition}

\begin{example}
	The tensor product $d_{\pi_1} \otimes d_{\pi_2}$, for $d_{\pi_1}$ and $d_{\pi_2}$ given in Example \ref{partcomp}, is the $(9,10)$--partition diagram 
	\begin{equation}
	\begin{aligned}
		\scalebox{0.6}{\tikzfig{category/tensorprod}}
	\end{aligned}
	\end{equation}
\end{example}
It is clear that both the composition and the tensor product operations are associative.
Consequently, we have the following category:

\begin{definition} \label{partitioncategory}
	We define the \textbf{partition category} $\mathcal{P}(n)$ to be 
	the category whose objects are the non--negative integers,
	and, for any pair of objects $k$ and $l$, the morphism space 
	$\Hom_{\mathcal{P}(n)}(k,l)$ is $P_k^l(n)$.
	
	The vertical composition of morphisms is 
	the composition of partition diagrams given in Definition \ref{composition};
	the horizontal composition of morphisms is 
	the tensor product of partition diagrams given in Definition \ref{tensorprod};
	and the unit object is $0$.
\end{definition}
For $B_k^l(n)$,
we can inherit the composition and tensor product operations 
from the composition and tensor product operations for $P_k^l(n)$, 
giving the following category:

\begin{definition}
	We define the \textbf{Brauer category} $\mathcal{B}(n)$ to be the category
	whose objects are the same as those of $\mathcal{P}(n)$
	and, for any pair of objects $k$ and $l$, the morphism space 
	$\Hom_{\mathcal{B}(n)}(k,l)$ is $B_k^l(n)$.
	
	The vertical and horizontal composition of morphisms 
	and the unit object are the same as those defined for $\mathcal{P}(n)$.
\end{definition}

We claim the following result.

\begin{proposition}
	The partition category $\mathcal{P}(n)$ and
	the Brauer category $\mathcal{B}(n)$
	are strict $\mathbb{R}$--linear monoidal categories.
\end{proposition}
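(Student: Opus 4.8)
The plan is to check the defining axioms directly, leaning on the string-diagram picture to make the geometry transparent while being careful to track the scalar factors $n^{c(\cdot,\cdot)}$ that composition introduces. First, $\mathbb{R}$-linearity of the hom-spaces is immediate: by Definition \ref{partvecspace} each $\Hom_{\mathcal{P}(n)}(k,l) = P_k^l(n)$ is by construction the $\mathbb{R}$-span of the diagram basis, and both $\bullet$ and $\otimes$ are defined on basis diagrams and then extended $\mathbb{R}$-bilinearly, so composition and the tensor product are $\mathbb{R}$-bilinear by fiat. It therefore remains to verify, first, the category axioms (associativity of $\bullet$ and the existence of identities) and, second, the strict monoidal axioms of Definition \ref{categorystrictmonoidal} together with the fact that $\otimes$ is a bifunctor. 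Since $B_k^l(n) \subseteq P_k^l(n)$ and the diagrammatic operations manifestly send Brauer diagrams to Brauer diagrams (a concatenation of perfect matchings, after deleting closed loops, is again a perfect matching, and the identity diagram is itself a perfect matching), $\mathcal{B}(n)$ will be a monoidal subcategory of $\mathcal{P}(n)$ and will inherit every axiom; so I would prove everything for $\mathcal{P}(n)$ and merely remark on the inclusion at the end.

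For the category axioms, associativity of $\bullet$ amounts to two observations. Concatenation of diagrams is associative as a geometric operation, so the two bracketings of $d_{\pi_3} \circ d_{\pi_2} \circ d_{\pi_1}$ agree as diagrams; and the total number of closed components deleted from the interior rows is the same for both bracketings, since those components are intrinsic connected components of the fully stacked diagram and do not depend on the order of composition, so the scalar $n^{c}$ agrees on both sides. For identities I would take $1_k$ to be the partition diagram whose blocks pair the $i$-th top vertex with the $i$-th bottom vertex for each $i \in [k]$; composing any $d_\pi$ with $1_k$ on either side reproduces $d_\pi$ and deletes no interior component, so the scalar is $n^0 = 1$ and the identity laws $1_l \bullet d_\pi = d_\pi = d_\pi \bullet 1_k$ hold.

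For the strict monoidal structure, on objects $\otimes$ is addition of non-negative integers with unit $0$, so $(a \otimes b) \otimes c = a \otimes (b \otimes c)$ and $0 \otimes a = a = a \otimes 0$ are just the associativity and unit laws of $+$; on morphisms $\otimes$ is horizontal juxtaposition, whose associativity and the unit law against the empty diagram $1_0$ are immediate. The substantive point is that $\otimes$ is a bifunctor: it must preserve identities, $1_k \otimes 1_l = 1_{k+l}$, which is clear from the straight-through description of the identity diagrams, and it must respect composition, that is, the interchange condition $(f_2 \otimes g_2) \bullet (f_1 \otimes g_1) = (f_2 \bullet f_1) \otimes (g_2 \bullet g_1)$ must hold for all composable diagrams. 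Geometrically both sides are obtained by stacking the $f$-tower to the left of the $g$-tower, so the underlying diagrams coincide; the content is again in the scalars, and here I would observe that the interior components created on the left-hand side split into those lying wholly within the $f$-columns and those lying wholly within the $g$-columns, with no component straddling the two, so that $c(f_2 \otimes g_2, f_1 \otimes g_1) = c(f_2, f_1) + c(g_2, g_1)$ and the factor $n^{c}$ factorises to match the right-hand side. Bilinearity of the bifunctor then follows from the bilinear extension already built into its definition.

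I expect the only real obstacle to be the careful bookkeeping of the loop-counting scalars $n^{c(\cdot,\cdot)}$ rather than the geometry of the diagrams themselves: one must argue precisely that the deleted interior components are determined by the stacked diagram independently of bracketing (for associativity of $\bullet$) and that they never straddle the two factors (for the interchange condition). Both reduce to the elementary fact that a connected component of a disjoint union of two graphs lies entirely in one of them, but this is exactly the step that makes the strict monoidal axioms hold on the nose rather than merely up to a scalar, and it is where I would spend the most care.
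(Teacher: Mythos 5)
Your proposal is correct and follows the same overall strategy as the paper's proof (direct verification of the axioms), but it is substantially more complete, and the comparison is worth spelling out. The paper's proof is a short appeal to the definitions: it observes that the bifunctor on objects is addition of natural numbers, that the tensor product of diagrams is associative by construction, and that the hom-spaces are vector spaces with bilinear composition; associativity of $\bullet$ is deferred to the one-line remark preceding the definition of $\mathcal{P}(n)$, the identity laws are not checked, and the interchange law making $\otimes$ a bifunctor is not mentioned at all. You correctly identify the two places where the real content sits: that the number of deleted interior components of a triple composite is independent of bracketing, and that no deleted component straddles the two factors of a horizontal juxtaposition, so that $c(f_2 \otimes g_2, f_1 \otimes g_1) = c(f_2, f_1) + c(g_2, g_1)$ and the scalars $n^{c(\cdot,\cdot)}$ match on the nose rather than up to a factor. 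These are precisely the checks the paper's proof silently assumes, and your reduction to the fact that a connected component of a disjoint union lies in one factor is the right argument for the interchange law. Your handling of $\mathcal{B}(n)$ as a monoidal subcategory closed under both operations likewise supplies the reason behind the paper's ``effectively the same'' remark. One small caution on your associativity step: a connected component of the fully stacked triple composite can meet both interior rows, so the claim that the deletion count is ``intrinsic'' requires tracking how such a component is removed in pieces across the two composition steps (the pieces deleted at the first step are exactly the full-stack components confined to the upper interior row, and the remainder is deleted at the second step); this works out, and is the standard argument for partition categories, but it is slightly more delicate than the disjoint-union observation that handles the interchange law.
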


\begin{proof}
	We prove this result for the partition category since the proof for 
	the Brauer category is effectively the same.

	$\mathcal{P}(n)$ is a strict monoidal category because the bifunctor
	on objects reduces to the addition of natural numbers, which is associative
	and the bifunctor on morphisms is the tensor product of linear
	combinations of partition diagrams that is given in Definition \ref{composition},
	which is associative by definition.

	$\mathcal{P}(n)$ is $\mathbb{R}$--linear because the morphism space 
	between any two objects is by definition a vector space,
	and the composition of morphisms is $\mathbb{R}$--bilinear 
	by definition.
	For the same reason, the bifunctor is also $\mathbb{R}$--bilinear.
\end{proof}

We can also obtain a strict $\mathbb{R}$--linear monoidal category 
$\mathcal{BG}(n)$ from the Brauer--Grood vector spaces $D_k^l(n)$.
We will call $\mathcal{BG}(n)$ the \textbf{Brauer--Grood category}, 
although it sometimes appears 
in the literature under different names, such as
the Jellyfish Brauer category in \cite{comes}.
Showing that $\mathcal{BG}(n)$ is strict $\mathbb{R}$--linear monoidal  
is a long and arduous computation that has previously appeared in
\citet{LehrerZhang, LehrerZhang2} -- namely, it 
is the result of
combining the proof of \citet[Theorem 2.6]{LehrerZhang}
together with the proof of \citet[Theorem 6.1]{LehrerZhang2}.
Hence, for brevity, we omit it here.

Instead, we show only part of the definition of the horizontal composition,
since it will be used in Section \ref{algorithmsect}.
To define the horizontal composition in full, we would need to define it on 
four possible cases of ordered pairs of diagrams. 
One of those ordered pairs is $(d_\beta, d_\alpha)$, where $d_\beta$
is a $(k,l)$--Brauer diagram 
and $d_\alpha$ is an $(m+q) \backslash n$--diagram.
In this case, the horizontal composition gives
the $(l+m+k+q) \backslash n$--diagram that is obtained by 
horizontally placing $d_\beta$ to the left of $d_\alpha$ 
without any overlapping of vertices.

\begin{example}
	Recalling the $(7,5)$--Brauer diagram $d_\beta$ and 
	the $(5+6) \backslash 3$--diagram $d_\alpha$
	that was given in Example \ref{exBrauerGrooddiag}, we see that
	$d_\beta \otimes d_\alpha$ is the 
	the $(10+13) \backslash 3$--diagram
	\begin{equation}
		\begin{aligned}
			\scalebox{0.6}{\tikzfig{background/algotensprod}}
		\end{aligned}
	\end{equation}
\end{example}

For the sake of completeness, 
we provide the framework of the definition of the Brauer--Grood category below.
\begin{definition}
	The \textbf{Brauer--Grood category} 
	$\mathcal{BG}(n)$ is the category
	whose objects are the same as those of $\mathcal{P}(n)$
	and, for any pair of objects $k$ and $l$, the morphism space 
	$\Hom_{\mathcal{BG}(n)}(k,l)$ is $D_k^l(n)$.
	[We have omitted the full definition of
	the vertical composition of morphisms 
	and the horizontal composition of morphisms.]
	The unit object is $0$.
\end{definition}

%


	We will refer to these categories in general as \textbf{partition categories}. 
	When we wish to specifically reference the partition category $\mathcal{P}(n)$,
	we will explicitly write $\mathcal{P}(n)$ to clarify that we mean this particular category.

\subsubsection{Group Representation Categories}

It is not just the partition vector spaces that are similar 
for different values of $l$ and $k$.
The tensor power spaces of $\mathbb{R}^{n}$ that make up the layer spaces
and, in particular, the equivariant linear maps between them,
are also similar for different values of $l$ and $k$.
To form a category from these representations and the linear maps between them,
we first need to define a category for all representations of a group.

\begin{definition}
	Let $G$ be a group. Then $\Rep(G)$ is the category
	whose objects are pairs $(V, \rho_V)$,
	where $\rho_V: G \rightarrow GL(V)$ is a representation of $G$,
	and,
	for any pair of objects 
	$(V, \rho_V)$ and $(W, \rho_W)$,
	the morphism space, 
	$\Hom_{\Rep(G)}((V, \rho_V), (W, \rho_W))$,
	is precisely the vector space of $G$--equivariant linear maps $V \rightarrow W$,
	$\Hom_{G}(V, W)$.

	The vertical composition of morphisms is given by the 
	composition of linear maps, the horizontal composition of morphisms 
	is given by the tensor product of linear maps, 
	both of which are associative operations, 
	and the unit object is given by $(\mathbb{R}, 1_\mathbb{R})$, 
	where $1_\mathbb{R}$ is the one-dimensional trivial representation of $G$.
\end{definition}

If $G = G(n)$ is a subgroup of $GL(n)$ 
(and, in particular, one of $S_n, O(n), SO(n)$ or $Sp(n)$), 
then we have the following category. 

\begin{definition} \label{CGcategory}
	If $G(n)$ is a subgroup of $GL(n)$, then 
	the \textbf{tensor power representation category}
	$\mathcal{C}(G(n))$ is the category
	whose objects are pairs 
	$\{((\mathbb{R}^n)^{\otimes k},\rho_k)\}_{k \in \mathbb{Z}_{\geq 0}}$, 
	where $\rho_k: G(n) \rightarrow GL((\mathbb{R}^n)^{\otimes k})$ 
	is the representation of $G(n)$ given 
	in Section \ref{sectLayerSpaces},
	and, for any pair of objects
	$((\mathbb{R}^n)^{\otimes k},\rho_k)$ and 
	$((\mathbb{R}^n)^{\otimes l},\rho_l)$,
	the morphism space 
	is precisely
	$\Hom_{G(n)}((\mathbb{R}^n)^{\otimes k}, (\mathbb{R}^n)^{\otimes l})$.

	The vertical and horizontal composition of morphisms 
	together with the unit object are inherited from $\Rep(G(n))$.

\end{definition}

\begin{proposition}
	For any subgroup $G(n)$ of $GL(n)$, 
	$\mathcal{C}(G(n))$ is a full subcategory of 
	the category of representations of $G(n)$, $\Rep(G(n))$; that is,
	for every pair of objects in $\mathcal{C}(G(n))$, every morphism
	between them in $\Rep(G(n))$ is a morphism in $\mathcal{C}(G(n))$.
	In particular, $\mathcal{C}(G(n))$ is a strict $\mathbb{R}$--linear monoidal category.
\end{proposition}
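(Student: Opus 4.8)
The plan is to exploit the fact that $\mathcal{C}(G(n))$ has, by construction, exactly the morphism spaces it ought to have as a subcategory of $\Rep(G(n))$, so that fullness is essentially immediate; the only genuine content is to verify that the class of objects together with the inherited operations really do form a strict $\mathbb{R}$--linear monoidal structure, that is, that everything stays inside $\mathcal{C}(G(n))$.

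First I would establish that $\mathcal{C}(G(n))$ is a subcategory. Its objects $((\mathbb{R}^n)^{\otimes k}, \rho_k)$ are representations of $G(n)$, hence objects of $\Rep(G(n))$, and for any two such objects the morphism space is declared to be $\Hom_{G(n)}((\mathbb{R}^n)^{\otimes k}, (\mathbb{R}^n)^{\otimes l})$, which is precisely the set of $G(n)$--equivariant linear maps and hence exactly $\Hom_{\Rep(G(n))}$ between these objects. Closure under composition and the presence of identities are inherited from $\Rep(G(n))$, since the identity on $(\mathbb{R}^n)^{\otimes k}$ is equivariant and the composite of two equivariant maps between tensor powers is again an equivariant map between tensor powers. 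Fullness then follows at once, because $\Hom_{\mathcal{C}(G(n))} = \Hom_{G(n)} = \Hom_{\Rep(G(n))}$ on every pair of objects, with no morphisms left out.

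Next I would verify closure under the monoidal operations. The key observation is that, for the representation $\rho_k$ defined on the standard basis in Section \ref{sectLayerSpaces}, one has $(\mathbb{R}^n)^{\otimes k} \otimes (\mathbb{R}^n)^{\otimes l} = (\mathbb{R}^n)^{\otimes (k+l)}$ together with $\rho_k \otimes \rho_l = \rho_{k+l}$, since $g$ acts as $ge_{i_1} \otimes \dots \otimes ge_{i_{k+l}}$ in either case. Hence the tensor product of two objects of $\mathcal{C}(G(n))$ is again an object of $\mathcal{C}(G(n))$, and at the level of the underlying integers the bifunctor reduces to addition. The unit object $(\mathbb{R}, 1_\mathbb{R})$ of $\Rep(G(n))$ is $((\mathbb{R}^n)^{\otimes 0}, \rho_0)$, which is the $k=0$ object of $\mathcal{C}(G(n))$. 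Finally, the tensor product of two equivariant maps between tensor powers is again an equivariant map between tensor powers, so $\mathcal{C}(G(n))$ is closed under $\otimes$ on morphisms as well.

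The remaining axioms require no new work: strict associativity and unitality of $\otimes$ on objects and morphisms, the $\mathbb{R}$--linearity of the hom--spaces, the $\mathbb{R}$--bilinearity of composition, and the $\mathbb{R}$--bilinearity of $\otimes$ all hold in $\Rep(G(n))$ and are therefore inherited by the full subcategory $\mathcal{C}(G(n))$, which we have just shown to be closed under the relevant operations. The only point that needs any care, and hence the main if modest obstacle, is confirming the identity $\rho_k \otimes \rho_l = \rho_{k+l}$ and the equivariance of a tensor product of equivariant maps; both are direct consequences of the definition of $\rho_k$ on basis elements and of the commutation of the $G(n)$--action with $\otimes$.
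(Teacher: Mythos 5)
Your proposal is correct and follows essentially the same route as the paper: fullness is immediate because the morphism spaces of $\mathcal{C}(G(n))$ are defined to coincide with those of $\Rep(G(n))$, and the strict $\mathbb{R}$--linear monoidal structure is inherited from the tensor product of vector spaces and linear maps. The only difference is that you spell out the closure checks (in particular $\rho_k \otimes \rho_l = \rho_{k+l}$ and the equivariance of a tensor product of equivariant maps) which the paper leaves implicit; this is a welcome but minor addition of rigour rather than a different argument.
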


\begin{proof}
	It is clear from the definitions that $\mathcal{C}(G(n))$ is a subcategory of $\Rep(G(n))$.
	$\mathcal{C}(G(n))$ 
	is a full subcategory
	of $\Rep(G(n))$, 
	as, for any pair of objects  
	$((\mathbb{R}^n)^{\otimes k},\rho_k)$ and $((\mathbb{R}^n)^{\otimes l},\rho_l)$
	in $C(G(n))$,
	the morphism space in 
	$\mathcal{C}(G(n))$ is the same as the one in $\Rep(G(n))$.

	$\mathcal{C}(G(n))$ can immediately be seen to be a 
	strict monoidal category
	because the bifunctor on objects is
	the tensor product of vector spaces,
	which is associative,
	and the bifunctor on morphisms is 
	the tensor product
	on linear maps between vector spaces,
	which is also associative.
	
	$\mathcal{C}(G(n))$ is $\mathbb{R}$--linear because the morphism space 
	for any two objects 
	is a vector space over $\mathbb{R}$ 
	and the composition of morphisms is $\mathbb{R}$--bilinear because
	composition is $\mathbb{R}$--bilinear for linear maps on vector spaces.
	It is also clear that the bifunctor $\otimes$ is $\mathbb{R}$--bilinear
	since it is the standard tensor product for vector spaces.
\end{proof}

\subsection
{Monoidal Functors from Partition Categories to Group Representation Categories}
	\label{fullstrictmonfunct}

In Section \ref{sectCharacterisation},
we reviewed a number of linear maps 
between certain partition vector spaces and
the vector space of equivariant linear maps 
between tensor power representations 
for four subgroups $G(n)$ of $GL(n)$.
We now show that these linear maps are the result of something stronger, 
namely that they are the linear maps that come from the existence of
a strict $\mathbb{R}$--linear monoidal functor
between a partition category and its associated
tensor power representation category. 
In each of the following four theorems, 
we state that the functors are \textbf{full}; 
that is, the underlying map between morphism spaces, 
having chosen a pair of objects in the domain category, is surjective.




\begin{theorem} \label{partfunctor}
	There exists a full, strict $\mathbb{R}$--linear monoidal functor
	\begin{equation}
		\Theta : \mathcal{P}(n) \rightarrow \mathcal{C}(S_n)
	\end{equation}
	that is defined on the objects of $\mathcal{P}(n)$ by 
	$\Theta(k) \coloneqq ((\mathbb{R}^{n})^{\otimes k}, \rho_k)$ 
	and, for any objects $k,l$ of $\mathcal{P}(n)$, the map
	\begin{equation} \label{partmorphism}
		\Hom_{\mathcal{P}(n)}(k,l) 
		\rightarrow 
		\Hom_{\mathcal{C}(S_n)}(\Theta(k),\Theta(l))
	\end{equation}
	is precisely the map 
	\begin{equation} \label{partmorphismmap}
		\Theta_{k,n}^l : P_k^l(n) \rightarrow 
		\Hom_{S_n}((\mathbb{R}^{n})^{\otimes k}, (\mathbb{R}^{n})^{\otimes l})
	\end{equation}
	given in Theorem \ref{diagbasisSn}.
\end{theorem}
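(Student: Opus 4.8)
The plan is to verify directly that the assignment $\Theta$, defined on objects by $\Theta(k) \coloneqq ((\mathbb{R}^{n})^{\otimes k}, \rho_k)$ and on each morphism space by the map $\Theta_{k,n}^l$ of Theorem \ref{diagbasisSn}, satisfies the four requirements of a strict $\mathbb{R}$--linear monoidal functor listed in Definition \ref{monoidalfunctordefn}, together with the two basic functor axioms (preservation of composition and of identities). Two of the requirements come essentially for free: the fourth condition of Definition \ref{monoidalfunctordefn}, that each map on hom--spaces is $\mathbb{R}$--linear, together with the claimed fullness, are precisely the content of Theorem \ref{diagbasisSn}, which already asserts that $\Theta_{k,n}^l$ is a surjection of vector spaces. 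So the real work lies in the functor axioms and in the first three monoidal conditions.

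First I would dispatch the object--level structure. Since $(\mathbb{R}^{n})^{\otimes(k+q)}$ is canonically identified with $(\mathbb{R}^{n})^{\otimes k} \otimes (\mathbb{R}^{n})^{\otimes q}$, and the diagonal action defining $\rho_{k+q}$ agrees, under this identification, with $\rho_k \otimes \rho_q$, we obtain $\Theta(k+q) = \Theta(k) \otimes \Theta(q)$, which is condition 1; this is compatible with the fact that the bifunctor on the objects of $\mathcal{P}(n)$ is just addition. Condition 3 holds because $\Theta(0) = ((\mathbb{R}^{n})^{\otimes 0}, \rho_0) = (\mathbb{R}, 1_\mathbb{R})$, which is exactly the unit object of $\mathcal{C}(S_n)$. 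Preservation of identities is a short check: the identity morphism on $k$ in $\mathcal{P}(n)$ is the partition diagram whose blocks each join the $j$-th top vertex to the $j$-th bottom vertex, and its image under \eqref{mappeddiagbasisSn} forces $I = J$, so it is the identity matrix on $(\mathbb{R}^{n})^{\otimes k}$.

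The hard part will be preservation of composition, which amounts to the single nontrivial identity
\begin{equation}
	D_{\pi_2} \circ D_{\pi_1} = n^{c(d_{\pi_2}, d_{\pi_1})}\, D_{d_{\pi_2} \circ d_{\pi_1}},
\end{equation}
where the left-hand composition is of linear maps and the $n^{c}$ factor on the right is precisely the one appearing in $d_{\pi_2} \bullet d_{\pi_1}$ from Definition \ref{composition}. I would prove it by an index--counting argument: applying $D_{\pi_1}$ and then $D_{\pi_2}$ to a standard basis vector $e_J$ and summing over the intermediate index $K$ ranging over the middle row, the two Kronecker deltas force $K$ to be constant on every block of $\pi_1$ and of $\pi_2$ meeting the middle row. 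Grouping the summation by the connected components of the concatenated diagram $d_{\pi_2} \circ d_{\pi_1}$, each component that reaches the top or bottom row survives as a block of $d_{\pi_2} \circ d_{\pi_1}$ and contributes exactly the constraint encoded by the coefficient of $d_{\pi_2} \circ d_{\pi_1}$, whereas each component lying entirely in the middle row carries a free index summed over $[n]$ and so contributes a factor of $n$. Since there are $c(d_{\pi_2}, d_{\pi_1})$ such middle components by definition, the identity follows. This is the delicate step, precisely because of the bookkeeping of which components survive and the loop--counting that produces the $n^{c}$ scalar; it is the diagrammatic incarnation of the composition rule in the partition algebra.

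Finally I would verify condition 2, that $\Theta$ preserves the tensor product of morphisms, namely $\Theta_{k+q,n}^{l+m}(d_{\pi_1} \otimes d_{\pi_2}) = \Theta_{k,n}^l(d_{\pi_1}) \otimes \Theta_{q,n}^m(d_{\pi_2})$. Since the tensor product of diagrams in Definition \ref{tensorprod} places $d_{\pi_1}$ and $d_{\pi_2}$ side by side without identifying any vertices, the block set of $d_{\pi_1} \otimes d_{\pi_2}$ is the disjoint union of those of $d_{\pi_1}$ and $d_{\pi_2}$. Hence, on a basis element indexed by a pair of concatenated tuples, the defining coefficient of $d_{\pi_1} \otimes d_{\pi_2}$ factors as a product $\delta_{\pi_1,(I,J)}\,\delta_{\pi_2,(I',J')}$, which is exactly the corresponding matrix entry of the Kronecker product $D_{\pi_1} \otimes D_{\pi_2}$. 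Extending $\mathbb{R}$--bilinearly to arbitrary morphisms then gives condition 2, and combined with the earlier steps this shows that $\Theta$ is a full, strict $\mathbb{R}$--linear monoidal functor.
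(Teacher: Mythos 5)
Your proposal is correct and follows essentially the same route as the paper's proof: the composition axiom is established by the same index-counting argument over the middle-row tuple $K$, with each wholly-middle connected component contributing a free factor of $n$ to yield the $n^{c(d_{\pi_2},d_{\pi_1})}$ scalar, and the tensor-product axiom by the same observation that the delta coefficient of $d_{\pi_1}\otimes d_{\pi_2}$ factors as $\delta_{\pi_1,(I,J)}\,\delta_{\pi_2,(X,Y)}$, matching the Kronecker product entrywise. The remaining checks (identities, unit object, fullness and $\mathbb{R}$--linearity via Theorem \ref{diagbasisSn}) are likewise handled exactly as in the paper.
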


\begin{theorem} \label{brauerO(n)functor}
	There exists a full, strict $\mathbb{R}$--linear monoidal functor
	\begin{equation}
		\Phi : \mathcal{B}(n) \rightarrow \mathcal{C}(O(n))
	\end{equation}
	that is defined on the objects of $\mathcal{B}(n)$ by 
	$\Phi(k) \coloneqq ((\mathbb{R}^{n})^{\otimes k}, \rho_k)$
	and, for any objects $k,l$ of $\mathcal{B}(n)$, the map
	\begin{equation}	
		\Hom_{\mathcal{B}(n)}(k,l) 
		\rightarrow 
		\Hom_{\mathcal{C}(O(n))}(\Phi(k),\Phi(l))
	\end{equation}
	is the map
	\begin{equation} 
		\Phi_{k,n}^l : B_k^l(n) \rightarrow 
		\Hom_{O(n)}((\mathbb{R}^{n})^{\otimes k}, (\mathbb{R}^{n})^{\otimes l})
	\end{equation}
	given in Theorem \ref{spanningsetO(n)}.
\end{theorem}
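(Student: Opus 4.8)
The plan is to verify, in turn, each clause in the definition of a full, strict $\mathbb{R}$--linear monoidal functor (Definition \ref{monoidalfunctordefn}), noting at the outset that several of the required properties come essentially for free from Theorem \ref{spanningsetO(n)}. Fullness is exactly the content of the spanning-set assertion there: the image of $\Phi_{k,n}^l$ spans $\Hom_{O(n)}((\mathbb{R}^n)^{\otimes k},(\mathbb{R}^n)^{\otimes l})$, so the induced map on each morphism space is surjective. Likewise, $\mathbb{R}$--linearity of each map (\ref{maphomsets}) is immediate, since $\Phi_{k,n}^l$ is already a linear surjection of vector spaces. Preservation of objects and of the unit is also routine: since the bifunctor on objects of $\mathcal{B}(n)$ is addition and the bifunctor on objects of $\mathcal{C}(O(n))$ is the tensor product of vector spaces, we have $\Phi(k+l) = ((\mathbb{R}^n)^{\otimes(k+l)},\rho_{k+l}) = \Phi(k)\otimes\Phi(l)$, and $\Phi(0) = (\mathbb{R},1_{\mathbb{R}})$ is the unit of $\mathcal{C}(O(n))$.

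The substantive claims are that $\Phi$ preserves the identity and the two compositions, and by $\mathbb{R}$--linearity it suffices to check these on basis diagrams. For the identity on $k$, the relevant Brauer diagram consists of vertical strands, whose image under $\Phi$ is $\sum_{I \in [n]^k} E_{I,I}$, the identity matrix on $(\mathbb{R}^n)^{\otimes k}$. For the horizontal composition, if $d_{\beta_1} \in B_k^l(n)$ and $d_{\beta_2} \in B_q^m(n)$, then the blocks of the juxtaposed diagram $d_{\beta_1}\otimes d_{\beta_2}$ are precisely the disjoint union of the blocks of $\beta_1$ and of $\beta_2$ on the relabelled vertices; since $E_\beta$ is a sum over index tuples subject to one Kronecker-delta constraint per block, the constraints split into two independent families, giving $E_{\beta_1 \otimes \beta_2} = E_{\beta_1}\otimes E_{\beta_2}$ as a Kronecker product, which is exactly the tensor product of the corresponding linear maps in $\mathcal{C}(O(n))$.

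The crux, and the step I expect to be the main obstacle, is preservation of the vertical composition, where the scalar factor $n^{c(\beta_2,\beta_1)}$ from Definition \ref{composition} must be reproduced on the matrix side. Fixing $d_{\beta_1} \in B_k^l(n)$ and $d_{\beta_2} \in B_l^m(n)$, I would expand the matrix product $E_{\beta_2} \circ E_{\beta_1}$ by summing over the intermediate index $K \in [n]^l$, writing its $(I,J)$ entry as $\sum_{K} \delta_{\beta_2,(I,K)}\,\delta_{\beta_1,(K,J)}$. Each $\delta$ is a product of one Kronecker delta per edge of the respective diagram, so the summand is nonzero only when the indices are constant along each connected component of the concatenated diagram $d_{\beta_2}\circ d_{\beta_1}$. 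Summing over $K$ then separates into two kinds of contribution: indices lying on a path that survives to the boundary are forced equal, reproducing the constraint $\delta_{\beta_2\circ\beta_1,(I,J)}$ of the composite Brauer diagram, whereas each connected component lying entirely in the middle row is an unconstrained index summed over $[n]$, contributing a free factor of $n$. Since there are exactly $c(\beta_2,\beta_1)$ such components, the entry equals $n^{c(\beta_2,\beta_1)}\,\delta_{\beta_2\circ\beta_1,(I,J)}$, which is precisely the $(I,J)$ entry of $\Phi(d_{\beta_2}\bullet d_{\beta_1})$. A small point to check along the way is that composing two Brauer diagrams again yields a Brauer diagram, which holds because every vertex has degree one, so the surviving components are single strands pairing boundary vertices.

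Alternatively, since the matrix $E_\beta$ assigned to a Brauer diagram coincides with the matrix $D_\beta$ of (\ref{mappeddiagbasisSn}) and $\mathcal{B}(n)$ is a monoidal subcategory of $\mathcal{P}(n)$, one could deduce compatibility with both compositions directly from the functor $\Theta$ of Theorem \ref{partfunctor}, reserving the new input of Theorem \ref{spanningsetO(n)} solely for fullness; the loop-counting computation above is what underlies that functor in any case.
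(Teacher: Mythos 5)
Your proof is correct and takes essentially the same route as the paper: the paper gives no separate proof of this theorem, stating only that it is almost identical to the proof of Theorem \ref{partfunctor}, and your argument is precisely that proof transplanted to Brauer diagrams --- fullness from the spanning-set theorem, the factor $n^{c(\beta_2,\beta_1)}$ arising from summing each middle-row component freely over $[n]$, and the splitting of the per-block delta constraints for the tensor product. Your closing remark that one could instead restrict $\Theta$ along $\mathcal{B}(n)\hookrightarrow\mathcal{P}(n)$, since $E_\beta = D_\beta$, is a legitimate shortcut but rests on the same underlying computation.
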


\begin{theorem} \label{brauerSp(n)functor}
	There exists a full, strict $\mathbb{R}$--linear monoidal functor
	\begin{equation}
		X : \mathcal{B}(n) \rightarrow \mathcal{C}(Sp(n))
	\end{equation}
	that is defined on the objects of $\mathcal{B}(n)$ by 
	$X(k) \coloneqq ((\mathbb{R}^{n})^{\otimes k}, \rho_k)$
	and, for any objects $k,l$ of $\mathcal{B}(n)$, the map
	\begin{equation}	
		\Hom_{\mathcal{B}(n)}(k,l) 
		\rightarrow 
		\Hom_{\mathcal{C}(Sp(n))}(\Phi(k),\Phi(l))
	\end{equation}
	is the map
	\begin{equation} 
		X_{k,n}^l : B_k^l(n) \rightarrow 
		\Hom_{Sp(n)}((\mathbb{R}^{n})^{\otimes k}, (\mathbb{R}^{n})^{\otimes l})
	\end{equation}
	given in Theorem \ref{spanningsetSp(n)}.
\end{theorem}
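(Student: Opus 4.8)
The plan is to verify that $X$ satisfies functoriality together with the four conditions of Definition \ref{monoidalfunctordefn}, using Theorem \ref{spanningsetSp(n)} to dispatch the easy parts and concentrating the real work on compatibility with vertical composition. Since Theorem \ref{spanningsetSp(n)} already provides a surjective $\mathbb{R}$--linear map $X_{k,n}^l : B_k^l(n) \to \Hom_{Sp(n)}((\mathbb{R}^n)^{\otimes k},(\mathbb{R}^n)^{\otimes l})$, the $\mathbb{R}$--linearity of the hom-space map (condition 4) and the fullness asserted in the theorem are immediate. On objects, $X(k+l) = (\mathbb{R}^n)^{\otimes(k+l)} = (\mathbb{R}^n)^{\otimes k}\otimes(\mathbb{R}^n)^{\otimes l} = X(k)\otimes X(l)$, and $X(0) = (\mathbb{R}^n)^{\otimes 0} = \mathbb{R} = \mathds{1}_{\mathcal{C}(Sp(n))}$, so conditions 1 and 3 hold trivially.

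First I would check that $X$ preserves the tensor product on morphisms (condition 2). By the $\mathbb{R}$--bilinearity of both $\otimes$ operations it suffices to treat single Brauer diagrams $d_{\beta_1}$ and $d_{\beta_2}$. Since $d_{\beta_1}\otimes d_{\beta_2}$ is the horizontal juxtaposition of Definition \ref{tensorprod}, its blocks are exactly the disjoint union of the blocks of $\beta_1$ and $\beta_2$, with the row-membership and the left-to-right order of each vertex unchanged. Comparing the defining product (\ref{matrixSp(n)}) block by block then factors as $F_{\beta_1\otimes\beta_2} = F_{\beta_1}\otimes F_{\beta_2}$, which is precisely condition 2.

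Next I would treat functoriality, namely preservation of identities and of vertical composition, the latter being the crux. The identity on $k$ is the Brauer diagram with $k$ through-strings, so every block joins vertices in different rows, every $\gamma$ factor in (\ref{matrixSp(n)}) is a $\delta$, and $F_{\mathrm{id}} = \sum_I E_{I,I}$ is the identity matrix. For composition I would, again by bilinearity, fix $d_{\beta_1}\in B_k^l(n)$ and $d_{\beta_2}\in B_l^m(n)$ and compute the matrix product $F_{\beta_2}F_{\beta_1}$ by summing over the shared middle index. I would track how the blocks of $\beta_2\circ\beta_1$ are assembled from through-paths crossing the concatenated diagram, showing that each such path contributes exactly one $\gamma$ factor to $F_{\beta_2\circ\beta_1}$, while each of the $c(d_{\beta_2},d_{\beta_1})$ connected components removed entirely within the middle row (Definition \ref{composition}) contributes a scalar loop value. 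The aim is to prove this loop value equals $n$, so that $F_{\beta_2}F_{\beta_1} = n^{c}F_{\beta_2\circ\beta_1} = X(d_{\beta_2}\bullet d_{\beta_1})$.

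The hard part will be controlling the signs introduced by the antisymmetric form $\epsilon$ of (\ref{epsilondef1})--(\ref{epsilondef2}). Unlike the orthogonal case of Theorem \ref{brauerO(n)functor}, where the symmetric $\delta$ makes the order within each cup/cap pair irrelevant, here swapping $r_p$ and $u_p$ flips the sign of $\gamma_{r_p,u_p}$, so one must first fix the orientation convention under which $F_\beta$ is well defined and then check that this convention is stable under concatenation. Concretely, I would verify that, under the concatenation of Definition \ref{composition}, contracting the two $\epsilon$ tensors around a closed middle loop yields the same-order sum $\sum_{a,b}\epsilon_{a,b}\epsilon_{a,b} = n$ rather than the opposite-order sum $\sum_{a,b}\epsilon_{a,b}\epsilon_{b,a} = -n$, pinning the removed loops to the factor $n^{c}$, and that the signs accumulated along a through-path of alternating caps and cups in $\beta_1$ and $\beta_2$ telescope to the single $\gamma$ factor prescribed for the corresponding block of $\beta_2\circ\beta_1$. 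Once this orientation bookkeeping is shown to be consistent, the remaining identities are routine and establish that $X$ is a full, strict $\mathbb{R}$--linear monoidal functor.
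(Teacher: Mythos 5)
Your skeleton is exactly the paper's: the paper proves Theorem \ref{partfunctor} in detail and then declares the proofs of Theorems \ref{brauerO(n)functor} and \ref{brauerSp(n)functor} ``almost identical,'' and your treatment of objects, the unit, linearity, fullness, identities and tensor products of morphisms consists of the same routine checks. Where you go beyond the paper is in refusing to take the composition axiom on faith and isolating the sign bookkeeping forced by the antisymmetry of $\epsilon$. That instinct is correct --- but the verification you defer does not close with the conventions as written, so this is a genuine gap rather than a routine to-do. Take $k=l=2$, let $d_e$ be the Brauer diagram with blocks $\{1,2\}$ and $\{3,4\}$ and $d_\sigma$ the one with blocks $\{1,4\}$ and $\{2,3\}$. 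Concatenation gives $d_\sigma\bullet d_e=d_e$ with no removed middle components, so functoriality demands $F_\sigma F_e=F_e$. But by (\ref{gammarpup}) all $\gamma$ factors of $F_\sigma$ are $\delta$'s, so $F_\sigma$ is the plain flip; meanwhile $F_e(e_{j_1}\otimes e_{j_2})=\epsilon_{j_1,j_2}\,\Omega$ with $\Omega=\sum_{a,b}\epsilon_{a,b}\,e_a\otimes e_b$, and the flip sends $\Omega$ to $\sum_{a,b}\epsilon_{b,a}\,e_a\otimes e_b=-\Omega$. Hence $F_\sigma F_e=-F_e\neq F_e$ for all $m\geq 1$. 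The ``any permutation'' ambiguity in (\ref{matrixSp(n)}) cannot rescue this: replacing $F_e$ by $-F_e$ changes both sides of the required identity by the same factor. So your loop computation is right ($\sum_{a,b}\epsilon_{a,b}^2=n$, matching the $n^{c}$ of Definition \ref{composition}), but the through-path signs do not telescope as you hope; they already fail on the simplest crossing.

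The repair, which neither your proposal nor the paper's one-line proof supplies, is the standard sign twist from the symplectic (``purely odd'') setting: the image of a Brauer diagram must carry a compensating factor of $-1$ for each crossing of strands (equivalently, transposition diagrams must map to \emph{minus} the flip). With that modification both $F_\sigma F_e=F_e$ and $F_eF_e=nF_e$ hold, and the telescoping argument you outline for general through-paths then goes through. As it stands, you should either build this sign into the definition of $X$ on morphisms (and re-verify that the modified images still land in $\Hom_{Sp(n)}((\mathbb{R}^{n})^{\otimes k},(\mathbb{R}^{n})^{\otimes l})$ and still span it, which they do since each generator is only rescaled by $\pm1$), or confirm that the source of Theorem \ref{spanningsetSp(n)} already encodes such a convention, and only then run the composition check. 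Until one of these is done, the axiom you correctly identified as the crux remains unproved --- and, read literally, false.
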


\begin{theorem} \label{brauerSO(n)functor}
	There exists a full, strict $\mathbb{R}$--linear monoidal functor
	\begin{equation}
		\Psi : \mathcal{BG}(n) \rightarrow \mathcal{C}(SO(n))
	\end{equation}
	that is defined on the objects of $\mathcal{BG}(n)$ by 
	$\Psi(k) \coloneqq ((\mathbb{R}^{n})^{\otimes k}, \rho_k)$
	and, for any objects $k,l$ of $\mathcal{B}(n)$, the map
	\begin{equation}	
		\Hom_{\mathcal{BG}(n)}(k,l) 
		\rightarrow 
		\Hom_{\mathcal{C}(SO(n))}(\Phi(k),\Phi(l))
	\end{equation}
	is the map
	\begin{equation} 
		\Psi_{k,n}^l : D_k^l(n) \rightarrow 
		\Hom_{SO(n)}((\mathbb{R}^{n})^{\otimes k}, (\mathbb{R}^{n})^{\otimes l})
	\end{equation}
	given in Theorem \ref{spanningsetSO(n)}.
\end{theorem}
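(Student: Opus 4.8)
The plan is to verify in turn the four conditions of Definition \ref{monoidalfunctordefn} together with fullness, treating the object assignment and the $\mathbb{R}$--linearity as immediate and concentrating the effort on the two composition laws. The object part is purely formal: since $(\mathbb{R}^n)^{\otimes(k+l)} = (\mathbb{R}^n)^{\otimes k} \otimes (\mathbb{R}^n)^{\otimes l}$ and $(\mathbb{R}^n)^{\otimes 0} = \mathbb{R}$, the assignment $\Psi(k) = ((\mathbb{R}^n)^{\otimes k}, \rho_k)$ satisfies $\Psi(k+l) = \Psi(k) \otimes \Psi(l)$ and $\Psi(0) = \mathds{1}_{\mathcal{C}(SO(n))}$, which are conditions (1) and (3). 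Condition (4), the $\mathbb{R}$--linearity of each $\Psi_{k,n}^l$, holds by construction, since the map is defined by its values $E_\beta$ and $H_\alpha$ on the spanning diagrams and then extended linearly. Fullness is exactly the spanning-set assertion of Theorem \ref{spanningsetSO(n)}: the images $\{E_\beta\} \cup \{H_\alpha\}$ span $\Hom_{SO(n)}((\mathbb{R}^n)^{\otimes k}, (\mathbb{R}^n)^{\otimes l})$, so the morphism map is surjective.

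I would also record at this stage that each $E_\beta$ and each $H_\alpha$ is genuinely $SO(n)$--equivariant, so that $\Psi$ lands in the stated morphism spaces: for $E_\beta$ this is inherited from the $O(n)$ case of Theorem \ref{brauerO(n)functor} via the inclusion $SO(n) \subset O(n)$, and for $H_\alpha$ it is part of the content of Theorem \ref{spanningsetSO(n)}. The substance of the proof is then condition (2) — preservation of the tensor product — together with preservation of vertical composition, that is, showing $\Psi(d \otimes d') = \Psi(d) \otimes \Psi(d')$ and $\Psi(d_2 \bullet d_1) = \Psi(d_2) \circ \Psi(d_1)$; by the $\mathbb{R}$--bilinearity of both the diagrammatic operations and the operations on linear maps, it suffices to check these on single diagrams.

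Two families of cases are straightforward and I would dispatch them first. Preservation of the tensor product is immediate whenever at most one factor carries determinant data: horizontal juxtaposition places vertices and blocks side by side without interaction, and the defining index data — matrix-unit contractions for $E_\beta$, and the determinant $\det(e_{T,B})$ together with the contraction $\delta(R,U)$ for $H_\alpha$ — is multiplicative over disjoint blocks, so the resulting matrix on the standard basis $E_{I,J}$ is exactly the Kronecker product $\Psi(d) \otimes \Psi(d')$. Likewise, for the vertical composition of two Brauer diagrams the identity reduces to the classical observation that gluing middle vertices corresponds to contracting shared tensor indices, with each closed loop removed from the middle row contributing a factor $\mathrm{tr}(\id_{\mathbb{R}^n}) = n$, matching the factor $n^{c(d_2,d_1)}$ of Definition \ref{composition}.

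The main obstacle is every case in which two pieces of determinant data interact: the vertical composition of diagrams whose free vertices meet in the middle row, and the tensor product of two $(l+k)\backslash n$--diagrams. In each such case the analysis forces one to expand a product of two Levi--Civita/determinant tensors, and the standard $\varepsilon$--$\delta$ (Laplace-type) identity rewriting $\det(e_{T,B})\,\det(e_{T',B'})$ after contraction as a signed sum of products of Kronecker deltas must be shown to reproduce exactly the Brauer and Grood diagrams produced by the composition rules of $\mathcal{BG}(n)$, bubble factors included. My plan is to reduce the whole verification to a finite monoidal generating set of $\mathcal{BG}(n)$ — the cap, cup, crossing, and the single ``jellyfish'' determinant morphism with $n$ free legs — together with its defining relations, so that functoriality need only be checked on generators and relations. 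These determinant relations are precisely the long computation carried out by \citet{LehrerZhang, LehrerZhang2}, which I would invoke rather than reproduce, after confirming that the generator images under $\Psi$ agree with the $E_\beta$ and $H_\alpha$ of Theorem \ref{spanningsetSO(n)}.
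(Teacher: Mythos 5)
Your proposal is correct and follows essentially the same route as the paper: the paper does not prove this theorem directly but defers it entirely to \citet[Theorem 6.1]{LehrerZhang2}, which is exactly the source you invoke for the determinant--determinant interactions that you correctly identify as the only genuinely hard cases. The routine verifications you lay out explicitly (object assignment, unit, linearity, fullness from the spanning-set theorem, and the purely Brauer composition and tensor-product cases) mirror the template the paper establishes in its proof of Theorem \ref{partfunctor} for the other three groups.
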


We only show the proof of Theorem \ref{partfunctor} in full since the proofs of 
Theorems \ref{brauerO(n)functor} and \ref{brauerSp(n)functor} 
are almost identical.
The proof of Theorem \ref{brauerSO(n)functor} can be found in 
\citet[Theorem 6.1]{LehrerZhang2}. \\

\begin{proof}[Proof of Theorem \ref{partfunctor}]
	We prove this theorem in a series of steps. \\

	\noindent
	\textbf{Step 1}: We need to show that 
	$\Theta : \mathcal{P}(n) \rightarrow \mathcal{C}(S_n)$ is a functor.

	We prove this step in three parts.
	

\begin{itemize}
	\item $\Theta$ maps the objects of $\mathcal{P}(n)$ to the objects of
		$\mathcal{C}(S_n)$ by definition.
	\item It is enough to show that $\Theta(g)\Theta(f) = \Theta(g \bullet f)$ on arbitrary basis elements of arbitrary morphism spaces where
		the codomain of $f$ is the domain of $g$
		because the morphism spaces are vector spaces. 

		Let $f = d_{\pi_1}$ be a $(k,l)$--partition diagram, and
		let $g = d_{\pi_2}$ be a $(l,m)$--partition diagram.
		Then, by Definition \ref{composition}, we have that
		\begin{equation}
			d_{\pi_2} \bullet d_{\pi_1} 
			=
			n^{c(d_{\pi_2}, d_{\pi_1})} d_{\pi_3}
		\end{equation}
		where $d_{\pi_3}$ be the composition $d_{\pi_2} \circ d_{\pi_1}$
		expressed as a $(k,m)$--partition diagram.
		Then 
		\begin{equation} \label{catpartbulletmatmult}
			\Theta(g \bullet f) 
			= 
			\Theta(d_{\pi_2} \bullet d_{\pi_1}) 
			=
			n^{c(d_{\pi_2}, d_{\pi_1})} D_{\pi_3}
		\end{equation}
		We also have that
		\begin{align}
			\Theta(g)\Theta(f)
			=
			D_{\pi_2}D_{\pi_1}
			& =
			\left(
			\sum_{I \in [n]^m, K \in [n]^l}
			\delta_{\pi_2, (I,K)}
			E_{I,K}
			\right)
			\left(
			\sum_{L \in [n]^l, J \in [n]^k}
			\delta_{\pi_1, (L,J)}
			E_{L,J} 
			\right)
			\\
			& =
			\sum_{I \in [n]^m, K \in [n]^l, J \in [n]^k}
			\delta_{\pi_2, (I,K)}
			\delta_{\pi_1, (K,J)}
			E_{I,K}E_{K,J} \label{catpartcompmultmat}
		\end{align}

		For fixed $I, J$, consider
		\begin{equation} \label{catpartcompdeltas}
			\sum_{K \in [n]^l}
			\delta_{\pi_2, (I,K)}
			\delta_{\pi_1, (K,J)}
		\end{equation}
		This is equal to
		\begin{equation}
			n^{c(d_{\pi_2}, d_{\pi_1})}
			\delta_{\pi_3, (I,J)}
		\end{equation}
		Indeed, for fixed $I,J$, 
		$\delta_{\pi_3, (I,J)}$ is $1$
		if and only if 
		both $\delta_{\pi_2, (I,K)}$ and 
		$\delta_{\pi_1, (K,J)}$ are $1$
		for some $K \in [n]^l$
		since $d_{\pi_3}$ is the composition
		$d_{\pi_2} \circ d_{\pi_1}$.
		The number of such $K$ is determined
		only by the vertices that appear in connected components
		that are removed from the middle row of 
		$d_{\pi_2} \circ d_{\pi_1}$, since, for fixed $I,J$,
		only these vertices can be freely chosen
		if we want both $\delta_{\pi_2, (I,K)}$ and
		$\delta_{\pi_1, (K,J)}$ to be $1$.
		However, since the entries in each connected component must 
		be the same for both $\delta_{\pi_2, (I,K)}$ and 
		$\delta_{\pi_1, (K,J)}$ to be $1$,
		this implies that the number of $K \in [n]^l$ such that
		both $\delta_{\pi_2, (I,K)}$ and 
		$\delta_{\pi_1, (K,J)}$ are $1$
		is 
		$n^{c(d_{\pi_2}, d_{\pi_1})}$.

		Hence (\ref{catpartcompmultmat}) becomes
		\begin{equation}
			\sum_{I \in [n]^m, J \in [n]^k}
			n^{c(d_{\pi_2}, d_{\pi_1})}
			\delta_{\pi_3, (I,J)}
			E_{I,J}
			=
			n^{c(d_{\pi_2}, d_{\pi_1})} E_{\pi_3}
		\end{equation}
		and so, by (\ref{catpartbulletmatmult}),
		we have that $\Theta(g)\Theta(f) = \Theta(g \bullet f)$, as required.
	\item 
		For each object $k$ in $\mathcal{P}(n)$,
		the identity morphism $1_{k}$ is the 
		$(k,k)$--partition diagram $d_\pi$
		where $\pi$ is the set partition
		\begin{equation}
			\{1, k+1 \mid 2, k+2 \mid \dots \mid k, 2k\}
		\end{equation}
		of $[2k]$, and its image under $d_\pi \mapsto D_\pi$ is
		the $n^k \times n^k$ identity matrix, as required. 
\end{itemize}

\noindent
\textbf{Step 2}: We need to show that $\Theta : \mathcal{P}(n) \rightarrow \mathcal{C}(S_n)$ is full. \\

	The functor $\Theta$ is full because,
	for all objects $k, l$ in $\mathcal{P}(n)$,
	the map (\ref{partmorphism})
	is (\ref{partmorphismmap}), by definition, and
	this map is surjective by Theorem \ref{diagbasisSn}. \\

\noindent
\textbf{Step 3}: We need to show that $\Theta : \mathcal{P}(n) \rightarrow \mathcal{C}(S_n)$ is strict $\mathbb{R}$--linear monoidal. \\

	To show that $\Theta$ is strict $\mathbb{R}$--linear monoidal,
	we need to show that $\Theta$ satisfies the conditions 
	given in Definition \ref{monoidalfunctordefn}.
	The picture to have in mind for point 2 below is that 
	the tensor product on set partition diagrams places 
	the diagrams side-by-side, without any overlapping of vertices.
	\begin{enumerate}
		\item Let $k, l$ be any two objects in $\mathcal{P}(n)$. Then 
\begin{align}
	\Theta(k \otimes l) 
	& = \Theta(k + l) \\
	& = ((\mathbb{R}^{n})^{\otimes k+l}, \rho_{k+l}) \\
	& = ((\mathbb{R}^{n})^{\otimes k} \otimes (\mathbb{R}^{n})^{\otimes l}, \rho_k \otimes \rho_l) \\
	& = \Theta(k) \otimes \Theta(l)
\end{align}
		\item	
			It is enough to prove this condition on arbitrary basis 
			elements of arbitrary morphism spaces 
			as the morphism spaces are vector spaces.
		Suppose that $f: k \rightarrow l$ and $g: q \rightarrow m$ are two basis elements in $\Hom_{\mathcal{P}(n)}(k,l)$ and $\Hom_{\mathcal{P}(n)}(q,m)$ respectively.
			Then $f = d_\pi$ for some set partition $\pi$ of $[l+k]$, 
			and $g = d_\tau$ for some set partition $\tau$ 
			of $[m+q]$. 

			As $\Hom_{\mathcal{P}(n)}(k,l) = P_k^l(n)$
			and $\Hom_{\mathcal{P}(n)}(q,m) = P_q^m(n)$, we have, 
			by Definition \ref{tensorprod}, 
			that $f \otimes g$ is an element of 
			$P_{k+q}^{l+m}(n) = \Hom_{\mathcal{P}(n)}(k+q,l+m)$.
			
			In particular, $f \otimes g = d_{\omega}$ for the 
			set partition $\omega \coloneqq \pi \cup \tau$
			of $[l+m+k+q]$. 
			
			By Theorem \ref{diagbasisSn}, we have that 
			$\Theta(f) = D_\pi$, $\Theta(g) = D_\tau$, 
			and $\Theta(f \otimes g) = D_\omega$.

			We now show that 
			$\Theta(f) \otimes \Theta(g) = \Theta(f \otimes g)$. 
			Indeed, we have that
\begin{align} 
	\Theta(f) \otimes \Theta(g)
	& =
	D_\pi \otimes D_\tau \\
	& =
	\left(
	\sum_{I \in [n]^l, J \in [n]^k}
	\delta_{\pi, (I,J)}
	E_{I,J}
	\right)
	\otimes
	\left(
	\sum_{X \in [n]^m, Y \in [n]^q}
	\delta_{\tau, (X,Y)}
	E_{X,Y} 
	\right)
	\\
	& =
	\sum_{(I,X) \in [n]^{l+m}, (J,Y) \in [n]^{k+q}}
	\delta_{\omega, (I,X),(J,Y))}
	E_{(I,X),(J,Y)} \label{deltaprod} \\
	& = D_\omega \\
	& = \Theta(f \otimes g)
\end{align}
where (\ref{deltaprod}) holds because $S_\pi((I,J)) \cup S_\tau((X,Y))
				= S_\omega((I,X),(J,Y))$.
			\item It is clear from the statement of the theorem that
				$\Theta$ sends the unit object $0$ in $\mathcal{P}(n)$ to $(\mathbb{R}, 1_\mathbb{R})$, which is the unit object in $\mathcal{C}(S_n)$.
			\item This is immediate because the map (\ref{partmorphismmap}) is $\mathbb{R}$--linear by Theorem \ref{diagbasisSn}. 
		\myqedhere
	\end{enumerate} 
\end{proof}

%
	%

\subsection{Implications for Group Equivariant Neural Networks} \label{implicationsgroupequiv}

The theorems in Section \ref{fullstrictmonfunct}
imply several key statements that are crucial
for the group equivariant neural networks 
under consideration.
These statements can be formulated as follows.

\begin{enumerate}
	\item To understand and work with any matrix in 
		$\Hom_{G(n)}((\mathbb{R}^{n})^{\otimes k}, (\mathbb{R}^{n})^{\otimes l})$,
		it is enough to work with the subset of $(k,l)$--partition diagrams 
		that correspond to $G(n)$.
		This is because 
		we can express any matrix in terms of the set of spanning set elements for
		$\Hom_{G(n)}((\mathbb{R}^{n})^{\otimes k}, (\mathbb{R}^{n})^{\otimes l})$,
		and these correspond
		bijectively with the subset of $(k,l)$--partition diagrams that corresponds to $G(n)$. 
		We can recover the matrix itself by 
		applying the appropriate monoidal functor to the set partition diagrams because the monoidal functors are full.
	\item We can manipulate the connected components and vertices of 
		$(k,l)$--partition diagrams 
		to obtain new set partition diagrams.
		Indeed, as the partition categories are strict monoidal, this
		means that
		the $(k,l)$--partition diagrams in a partition category 
		are string diagrams.
		Statement 1 immediately implies that we will obtain new matrices 
		between tensor power spaces of $\mathbb{R}^{n}$ 
		that are equivariant to $G(n)$
		from the resulting set partition diagrams.
	\item If a $(k,l)$--partition diagram can be decomposed as a tensor product of smaller set partition diagrams, then the corresponding matrix can also be decomposed as a tensor product of smaller sized matrices, each of which is equivariant to $G(n)$. This is because the functors are monoidal.
\end{enumerate}

In the following section, we present our main contribution.
We use
the monoidal functors 
that we have established for each of the four groups,
together with the implications for 
the group equivariant neural networks given above,
to construct a fast multiplication algorithm for passing a vector 
through a linear layer function that appears in such a network.

 


\section{Algorithm for Computing with Equivariant Weight Matrices} \label{algorithmsect}

Suppose that 
we would like to pass a vector
$v \in (\mathbb{R}^{n})^{\otimes k}$
through a linear layer function that lives in 
$\Hom_{G(n)}((\mathbb{R}^{n})^{\otimes k}, (\mathbb{R}^{n})^{\otimes l})$.
A naive implementation of the matrix multiplication 
between the weight matrix and the vector 
has a time complexity equal to $O(n^{l+k})$.
We show that
we can construct a fast multiplication algorithm 
that reduces the time complexity exponentially
for each of the groups $G(n)$ in question.

It is important to highlight that,
since we have at least a spanning set of the vector space
$\Hom_{G(n)}((\mathbb{R}^{n})^{\otimes k}, (\mathbb{R}^{n})^{\otimes l})$ 
for each group $G(n)$ that appeared in the previous chapter,
it is enough to describe an algorithm for how to multiply $v$ by a spanning set
element since we can extend the result by linearity.
This linearity 
provides an extra advantage 
in that it enables the fast matrix multiplication of
an input vector $v \in (\mathbb{R}^{n})^{\otimes k}$
with a weight matrix in 
$\Hom_{G(n)}((\mathbb{R}^{n})^{\otimes k}, (\mathbb{R}^{n})^{\otimes l})$
to be executed in parallel, namely by performing the fast 
matrix multiplication
on each of the spanning set
elements that make up the weight matrix.

We begin by introducing a new class of set partition diagrams
that we have termed \textbf{algorithmically planar}.

\subsection{Algorithmically Planar Set Partition Diagrams}

Recall that each spanning set element in
$\Hom_{G(n)}((\mathbb{R}^{n})^{\otimes k}, (\mathbb{R}^{n})^{\otimes l})$
corresponds to a set partition diagram
under a monoidal functor for $G(n)$,
and that a set partition diagram is a string diagram
since it is a morphism in a monoidal category.
This leads to the following question that we use as motivation:
can we use the string-like property of each set partition diagram
to gain control over how the matrix multiplication operation is performed 
between the spanning set element and the vector?

There are a few guiding principles that we use to construct the algorithm.

Ideally, we would like to break the matrix multiplication operation down 
into a series of computations, each of which is indecomposable.
Given the monoidal functor relationships between 
set partition diagrams and matrix operations,
we know that the smallest possible computations that are available to us 
correspond precisely with the smallest indecomposable set partition diagrams.
In general, set partition diagrams that consist only of one block 
are the smallest indecomposable diagrams.
The exception is for $(l+k) \backslash n$--diagrams, where
the smallest indecomposable diagrams are either those consisting of a single block of two vertices
or a diagram consisting of all of the free vertices taken together.
Moreover, a single block can live either entirely in the top row, 
entirely in the bottom row, or in both rows. 
Hence, we see that there are only a few classes of indecomposable diagrams that are smallest.
As a result, there are only a few classes of
indecomposable matrix operations that are smallest.

We can say more.
Since the matrices that we obtain must come from set partition diagrams,
the most efficient matrices that we can construct 
are a Kronecker product of indecomposable matrices.
But also, since indecomposable matrices come from indecomposable set partition diagrams, 
this implies that the most efficient matrices that we can construct come from
set partition diagrams that decompose into a tensor product
of the smallest indecomposable set partition diagrams.
Moreover, we would like to choose the order in which the indecomposable matrices 
appear in the Kronecker product because each indecomposable 
matrix performs a certain operation 
and we would like them to be executed in an order that is as efficient as possible. 
For example, it is best to zero out as many terms as possible in the input vector first, if such an operation can be performed, since this reduces the number of elements that 
need to be operated on 
going forward.
After this, it is better to perform tensor contraction operations before performing copying operations, because this reduces the number of elements that need to be copied. 
It is clear that choosing any other order for these operations would make the overall algorithm less efficient.
We will see that each of these operations can be understood easily in their equivalent diagram form.

Consequently, from the set partition diagram that corresponds to the spanning set element,
we would like to create another set partition diagram that 
decomposes into a tensor product of the smallest possible indecomposable set partition diagrams
and are ordered such that the indecomposable matrix operations 
that appear in the corresponding Kronecker product
will be performed most efficiently.


We call these special set partition diagrams algorithmically planar, and by construction they are the most efficient diagrams for performing matrix multiplication. They are defined as follows.

\begin{definition} \label{algplanardefnSn}
	A $(k,l)$--partition diagram is said to be 
	\textbf{algorithmically planar}
	if it satisfies the following conditions:
	\begin{itemize}
		\item The connected components that are solely in the bottom row are
			lined up sequentially,
			starting from the far right hand side, such that the vertices in each connected component are in consecutive order.
			We also require that these connected components are ordered by their size, 
			starting on the far right hand side with the largest
			and decreasing in size as we move towards the left, for reasons that will become clear
			when we conduct a time complexity analysis 
			of the multiplication algorithm for the symmetric group.
		\item The connected components that are solely in the top row are
			lined up sequentially,
			starting from the far left hand side, such that the vertices in each connected component are in consecutive order.
			(Note that these connected components can be in any order.)
		\item The connected components between different rows are such 
			that no two subsets cross.
	\end{itemize}
\end{definition}

\begin{example}
	The $(6,5)$--partition diagram
	\begin{equation}
		\begin{aligned}
		\scalebox{0.6}{\tikzfig{algorithm/algoplanar1}}
		\end{aligned}
	\end{equation}
	is algorithmicallly planar. 
	However, 
	the $(6,5)$--partition diagram
	\begin{equation}
		\begin{aligned}
		\scalebox{0.6}{\tikzfig{algorithm/algoplanar2}}
		\end{aligned}
	\end{equation}
	is not algorithmically planar because the connected component 
	consisting solely of vertex $5$ is not adjacent 
	to a connected component that lives solely in the top row,
	and the $(6,5)$--partition diagram
	\begin{equation}
		\begin{aligned}
		\scalebox{0.6}{\tikzfig{algorithm/algoplanar3}}
		\end{aligned}
	\end{equation} 
	is also not algorithmically planar because the vertices
	in the connected component $\{2, 4\}$
	are not in consecutive order.
\end{example}

\begin{definition}
	A $(k,l)$--Brauer diagram is said to be 
	\textbf{algorithmically planar}
	if it is an algorithmically planar 
	$(k,l)$--partition diagram 
	that is also a Brauer diagram.
\end{definition}

\begin{example}
	The $(7,5)$--Brauer diagram
	\begin{equation}
		\begin{aligned}
			\scalebox{0.6}{\tikzfig{algorithm/algoplanar4}}
		\end{aligned}
	\end{equation}
	is algorithmicallly planar, whereas the
	$(7,5)$--Brauer diagram
	\begin{equation}
		\begin{aligned}
			\scalebox{0.6}{\tikzfig{algorithm/algoplanar7}}
		\end{aligned}
	\end{equation}
	is not algorithmicallly planar.
\end{example}

\begin{definition}
	An $(l+k) \backslash n$--diagram
	is said to be 
	\textbf{algorithmically planar} 
	if it satisfies the following conditions:
	\begin{itemize}
		\item The free vertices in the bottom row start from the far right hand side and are lined up sequentially.
		\item The free vertices in the top row also start from the far right hand side and are lined up sequentially.
		\item The connected components that are solely in the bottom row are lined up sequentially, starting from the right hand side to the left of the free vertices, such that the vertices in each connected component are in consecutive order.
		\item The connected components that are solely in the top row are lined up sequentially, starting from the far left hand side, such that the vertices in each connected component are in consecutive order.
		\item The connected components between different rows are 
			such that no two subsets cross.
	\end{itemize}
\end{definition}

\begin{example}
	The $(5+6) \backslash 3$--diagram
	\begin{equation}
		\begin{aligned}
		\scalebox{0.6}{\tikzfig{algorithm/algoplanar5}}
		\end{aligned}
	\end{equation}
	is algorithmicallly planar. 
	However, the $(5+6) \backslash 3$--diagram
	\begin{equation}
		\begin{aligned}
		\scalebox{0.6}{\tikzfig{algorithm/algoplanar6}}
		\end{aligned}
	\end{equation}
	is not algorithmically planar because the free vertex labelled $1$ is not at the far right hand side of the top row.
\end{example}

\begin{remark}
	It is clear that an algorithmically planar set partition diagram is \textbf{planar}, that is, none of the connected components in the diagram cross.
\end{remark}


\subsection{Multiplication Algorithm}

In Algorithm 1, we outline a procedure called \textbf{MatrixMult} 
that performs the matrix multiplication of $v$ by a spanning set element in 
$\Hom_{G(n)}((\mathbb{R}^{n})^{\otimes k}, (\mathbb{R}^{n})^{\otimes l})$ 
using the guiding principles that were given in the previous section.
We assume that we have the set partition diagram that is associated with the spanning set element. 
Note that in the description of the algorithm, we have used $d_\pi$ to represent a generic 
$(k,l)$--partition diagram; however, the type of set partition diagrams that we can use as 
input depends entirely upon the group $G(n)$. For example, if $G(n) = O(n)$, 
then only $(k,l)$--Brauer 
diagrams are valid as inputs to the procedure.


\begin{figure*}[tb]
	\begin{tcolorbox}[colback=melon!10, colframe=melon!40, coltitle=black, title={{\bfseries Algorithm 1:} \textbf{MatrixMult}($G(n), d_\pi, v$)}]
	\textbf{Inputs:}
	\begin{itemize}
		\item $G(n)$ is a group.
		\item $d_\pi$ is an appropriate $(k,l)$--partition diagram for $G(n)$.
		\item $v \in (\mathbb{R}^{n})^{\otimes k}$.
	\end{itemize}
		\textbf{Perform the following steps:}
	\begin{enumerate}
		\item $\sigma_k, d_\pi, \sigma_l \leftarrow \textbf{Factor}(G(n), d_\pi)$
		\item $v \leftarrow \textbf{Permute}(v, \sigma_k)$
		\item $w \leftarrow \textbf{PlanarMult}(G(n), d_\pi, v)$
		\item $w \leftarrow \textbf{Permute}(w, \sigma_l)$	
	\end{enumerate}
	\textbf{Output:} $w \in (\mathbb{R}^{n})^{\otimes l}$.
	\end{tcolorbox}
	\label{alg1}
\end{figure*}

We now describe each of the subprocedures that appear in Algorithm 1 in more detail.

\textbf{Factor} takes as input a $(k,l)$--partition diagram that is appropriate for the group $G(n)$.
The procedure uses the string-like property of these diagrams to output three diagrams whose 
composition is equivalent to the original input diagram.
The first is a $(k,k)$--partition diagram that corresponds to a permutation $\sigma_k \in S_k$. 
Specifically, if $i$ is a vertex in the top row, 
then the vertex in the bottom row that is connected to it is $k + \sigma_k(i)$.
The second is a $(k,l)$--partition diagram that is algorithmically planar.
The third is an $(l,l)$--partition diagram 
which can be interpreted as a permutation $\sigma_l \in S_l$ in the same way as the first diagram,
replacing $k$ with $l$.

\textbf{Permute} takes as input a vector
$w \in (\mathbb{R}^{n})^{\otimes m}$,
for some $n, m$, 
that is expressed in the standard basis of $\mathbb{R}^{n}$,
and a permutation $\sigma$ in $S_m$,
and outputs another vector in 
$(\mathbb{R}^{n})^{\otimes m}$ where only the indices of the basis vectors~-- 
and not the indices of the coefficients of $w$~-- have been permuted 
according to $\sigma$.
Said differently, \textbf{Permute} performs the following operation, which is extended linearly:
\begin{equation} \label{permuteop}
	\sigma \cdot w_Ie_I \coloneqq w_I(e_{i_{\sigma(1)}} \otimes e_{i_{\sigma(2)}} \otimes \dots \otimes e_{i_{\sigma(m)}})
\end{equation}
Note that in Algorithm 1, $m$ will be equal to either $k$ or $l$.

\textbf{PlanarMult} takes as input an algorithmically planar set partition diagram of the form returned by \textbf{Factor}
together with a vector, and performs a fast matrix multiplication on this vector.
Since the set partition diagram is algorithmically planar, we first use the monoidal property of 
the category in which it is a morphism to decompose it as a tensor product of smaller set partition diagrams.
Next, we apply the monoidal functor that is appropriate for $G(n)$ to express this tensor product of diagrams 
as a Kronecker product of smaller matrices.
Finally, we perform matrix multiplication by applying these smaller matrices to the input vector from 
``right-to-left, diagram-by-diagram" -- to be described in more detail for each group below -- returning another vector as output.

\begin{remark}
	In effect, the \textbf{Factor} procedure takes as input a $(k,l)$--partition diagram that is appropriate for the group $G(n)$ and swaps as many pairs of vertices as necessary in each row in order to obtain an algorithmically planar $(k,l)$--partition diagram. 
	The composition of the swaps in each row is represented by a permutation diagram. 
\end{remark}


\begin{remark}
	We note that the implementation of the \textbf{Factor} and \textbf{PlanarMult} procedures 
	vary according to the group $G(n)$ and the type of set partition diagrams that correspond to $G(n)$,
	although they share many commonalities.
	We describe the implementation of these procedures 
	for each of the groups below.  
\end{remark}

\begin{remark}
	For each group $G(n)$, we also analyse the time complexity of the implementation of
	\textbf{MatrixMult} and compare it with the time complexity of the naive implementation
	of matrix multiplication.
	Note that in our analysis, we are viewing memory operations, such as permuting basis vectors and making copies of coefficients, as having no cost.
	Hence, we view the procedures \textbf{Factor} and \textbf{Permute} as having no cost. 
	As a result, it is enough to consider the computational cost of \textbf{PlanarMult} only.
\end{remark}


\subsubsection{Symmetric Group $S_n$}

	The implementation of Algorithm 1 that we give here for the symmetric group effectively recovers 
	the algorithm that was presented in \citet[Appendix C]{godfrey}; however, we take an entirely different approach 
	that uses monoidal categories instead.
	The major difference in implementation between the two versions relates to how, in our terminology,
	the connected components between vertices in different rows of $d_\pi$ are pulled into the middle diagram in \textbf{Factor}.
	We choose to make sure that the connected components do not cross, making the resulting diagram algorithmically planar, 
	whereas in \citet{godfrey} they choose to connect the vertices in different rows such that the 
	left-most vertices in the top row of the new diagram connect to the right-most vertices in the bottom row, that is,
	they make the connected components cross in ``opposites".
	While this does not make any significant difference in terms of performing the matrix multiplication for the symmetric group --
	indeed, there is only a difference in how the tensor indices are ordered~--
	our decision to make the middle diagram in the composition algorithmically planar leads to significant performance improvements when we extend 
	our approach to the other groups below, since for these groups, we will show that these operations reduce to the identity transformation.

\paragraph{Implementation}


\begin{figure*}[tb]
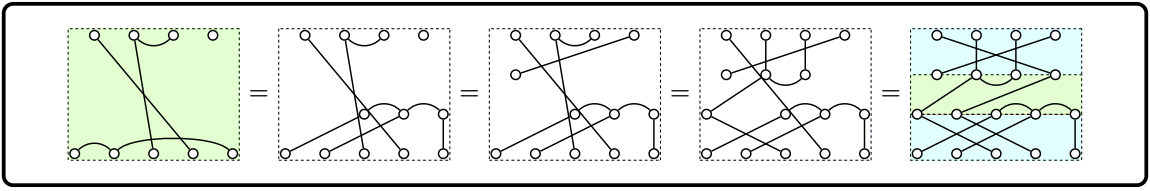

	\begin{tcolorbox}[colback=white!02, colframe=black]
	\begin{center}
		\scalebox{0.35}{\tikzfig{algorithm/symmfactoringSn}}
	\end{center}
	\end{tcolorbox}
	\caption{
		We use the string-like property of $(k,l)$--partition diagrams to \textbf{Factor} them 
	as a composition of a permutation in $S_k$, an algorithmically planar $(k,l)$--partition diagram, 
	and a permutation in $S_l$. Here, $k=5$ and $l=4$.
			}
	\label{symmfactoringSn}
\end{figure*}


We provide specific implementations of \textbf{Factor} and \textbf{PlanarMult}
for the symmetric group $S_n$.

\textbf{Factor}: The input is a $(k,l)$--partition diagram $d_\pi$. 
We drag and bend the strings representing the connected components of 
$d_\pi$ to obtain three diagrams whose composition is equivalent to $d_\pi$:
a $(k,k)$--partition diagram that represents a permutation $\sigma_k$ in the symmetric group $S_k$; 
an algorithmically planar
$(k,l)$--partition diagram; 
and a $(l,l)$--partition diagram that represents a permutation $\sigma_l$ in the symmetric group $S_l$.

To obtain the algorithmically planar $(k,l)$--partition diagram, 
we drag and bend the strings in any way such that 
\begin{itemize}
	\item the connected components that are solely in the bottom row of $d_\pi$ are pulled up to be next to each other
		in the far right hand side of the bottom row of the algorithmically planar $(k,l)$--partition diagram,
		such that the connected components are ordered by their size, in decreasing order from right to left,
	\item the connected components that are solely in the top row of $d_\pi$ are pulled down to be next to each other
		in the far left hand side of the top row of the algorithmically planar $(k,l)$--partition diagram, and
	\item the connected components between vertices in different rows of $d_\pi$ are bent to be in between the other vertices of the algorithmically planar $(k,l)$--partition diagram such that no two subsets of connected components cross.
\end{itemize}
We give an example of this procedure in Figure~\ref{symmfactoringSn}.

\textbf{PlanarMult}: 
	We take as input the algorithmically planar $(k,l)$--partition diagram $d_\pi$ having at most $n$ blocks 
	that is the output of \textbf{Factor}, and a vector $v \in (\mathbb{R}^{n})^{\otimes k}$ that is the output 
	of \textbf{Permute}, as per Algorithm 1.
	
	Let $t$ be the number of blocks that are solely in the top row of $d_\pi$,
	let $d$ be the number of blocks that connect vertices in different rows of $d_\pi$, and
	let $b$ be the number of blocks that are solely in the bottom row of $d_\pi$.

	Given how \textbf{Factor} constructs the algorithmically planar $(k,l)$--partition diagram $d_\pi$, 
	the set partition $\pi$ corresponding to $d_\pi$ will be of the form
	\begin{equation} \label{symmpartfactor}
		\left(\bigcup_{i = 1}^{t} T_i \right)
		\bigcup
		\left(\bigcup_{i = 1}^{d} D_i \right)
		\bigcup
		\left(\bigcup_{i = 1}^{b} B_i \right)
	\end{equation}
	such that
	\begin{equation}
		|B_1| \leq |B_2| \leq \dots \leq |B_b|
	\end{equation}
	where we have used $T_i$ to refer to a top row block, $D_i$ to refer to a different row block, and $B_i$ to refer to a bottom row block.
	If we let
	\begin{equation}
		D_i \coloneqq D_i^{U} \cup D_i^{L}
	\end{equation}
	where $D_i^{U}$ is the subset of $D_i$ whose vertices are in the top row of $d_\pi$, and
	$D_i^{L}$ is the subset of $D_i$ whose vertices are in the bottom row of $d_\pi$,
	then we have that the vertices in each subset are given by
	\begin{itemize}
		\item $T_i = \left\{
				\sum_{j=1}^{i-1} |T_j| + 1, \dots, \sum_{j=1}^{i} |T_j|
			\right\}$ for all $i = 1 \rightarrow t$
		\item $D_i^{U} = \left\{
				\sum_{j=1}^{t} |T_j| + \sum_{j=1}^{i-1} |D_j^{U}| + 1, \dots, 
		\sum_{j=1}^{t} |T_j| + \sum_{j=1}^{i} |D_j^{U}|
			\right\}$ for all $i = 1 \rightarrow d$,
		\item $D_i^{L} = \left\{
				l + \sum_{j=1}^{i-1} |D_j^{L}| + 1, \dots, 
			l + \sum_{j=1}^{i} |D_j^{L}|
			\right\}$ for all $i = 1 \rightarrow d$, and
		\item $B_i = \left\{
			l + \sum_{j=1}^{d} |D_j^{L}| + 
			\sum_{j=1}^{i-1} |B_j| + 1, \dots, 
			l + \sum_{j=1}^{d} |D_j^{L}| +
			\sum_{j=1}^{i} |B_j|
			\right\}$ for all $i = 1 \rightarrow b$.
	\end{itemize}
	Note, in particular, that
	\begin{equation} \label{symmupper}
		l = \sum_{j = 1}^{t} |T_j| + \sum_{j = 1}^{d} |D_j^{U}|
	\end{equation}
	and
	\begin{equation} \label{symmlower}
		k = \sum_{j = 1}^{d} |D_j^{L}| + \sum_{j = 1}^{b} |B_j|
	\end{equation}
	Next, we take $d_\pi$ and express it as a tensor product of three types of set partition diagrams.
	The right-most type is itself a tensor product of diagrams corresponding to the $B_i$; 
	the middle type is a single diagram corresponding to $\bigcup_{i = 1}^{d} D_i$; 
	and the left-most type is itself a tensor product of diagrams corresponding to the $T_i$.

	We apply the monoidal functor $\Theta$ to this tensor product decomposition of diagrams, which returns a Kronecker product of matrices.
	We perform the matrix multiplication by applying the matrices ``right-to-left, diagram-by-diagram", as follows.

	\textbf{Step 1: Apply each matrix corresponding to a bottom row block diagram, one-by-one, starting from the one that corresponds to $B_b$ and ending with the one that corresponds to $B_1$.}

	Suppose that we are performing the part of the matrix multiplication that corresponds to $B_i$, for some $i = 1 \rightarrow b$.
	The input will be a vector
	$w \in (\mathbb{R}^{n})^{\otimes k - \sum_{j = i+1}^{b} |B_j|}$.

	We can express $w$ in the standard basis of $\mathbb{R}^{n}$ as
	\begin{equation}
		w = \sum_{L \in [n]^{k - \sum_{j = i+1}^{b} |B_j|}}
			w_Le_L
	\end{equation}

	This will be mapped to the vector
	$r \in (\mathbb{R}^{n})^{\otimes k - \sum_{j = i}^{b} |B_j|}$,
	where $r$ is of the form
	\begin{equation} \label{rMSncoeff}
		r = \sum_{M \in [n]^{k - \sum_{j = i}^{b} |B_j|}} r_Me_M
	\end{equation}
	and
	\begin{equation} \label{indicesj}
		r_M = \sum_{j \in [n]} w_{M,j, \dots, j}
	\end{equation}
	where the number of indices $j$ in (\ref{indicesj}) is $|B_i|$.

	At the end of this process, we obtain a vector in 
	$(\mathbb{R}^{n})^{\otimes k - \sum_{j = 1}^{b} |B_j|}$.
	
	Note that the matrices corresponding to bottom row 
	connected components
	are merely performing indexing and summation operations, that is, ultimately, tensor contractions.

	\textbf{Step 2: Now apply the matrix corresponding to the middle diagram, that is, to the set $\bigcup_{i = 1}^{d} D_i$.}
	
	The input will be a vector
	$w \in 
	(\mathbb{R}^{n})^{\otimes k - \sum_{j = 1}^{b} |B_j|}$.
	Using (\ref{symmlower}), we can express $w$ in the standard basis of $\mathbb{R}^{n}$ as
	\begin{equation}
		w = 
		\sum_{j_1, \dots, j_d \in [n]} 
		w_{j_1, \dots, j_1, j_2, \dots, j_2, \dots, j_d, \dots, j_d}
		\bigotimes_{k = 1}^{d}
		\left(
		\bigotimes_{p = 1}^{|D_k^{L}|} e_{j_k}
		\right)
	\end{equation}
	where each index $j_k$ in the coefficient is repeated $|D_k^{L}|$ times.
	
	This will be mapped to the vector 
	$r \in (\mathbb{R}^{n})^{\otimes \sum_{j = i}^{d} |D_j^{U}|}$,
	where $r$ is of the form
	\begin{equation}
		r = 
		\sum_{j_1, \dots, j_d \in [n]} 
		r_{j_1, \dots, j_1, j_2, \dots, j_2, \dots, j_d, \dots, j_d}
		\bigotimes_{k = 1}^{d}
		\left(
		\bigotimes_{p = 1}^{|D_k^{U}|} e_{j_k}
		\right)
	\end{equation}
	where each index $j_k$ in the coefficient is repeated $|D_k^{U}|$ times, and
	\begin{equation} \label{transfercoeffSn}
		r_{j_1, \dots, j_1, j_2, \dots, j_2, \dots, j_d, \dots, j_d}
		=
		w_{j_1, \dots, j_1, j_2, \dots, j_2, \dots, j_d, \dots, j_d}
	\end{equation}
	These operations are called transfer operations, which is a term that first appeared in \cite{pan22}.

	\textbf{Step 3: Finally, apply each matrix corresponding to a top row block diagram, one-by-one, starting from the one that corresponds to $T_t$ and ending with the one that corresponds to $T_1$.}
	
	Suppose that we are performing the part of the matrix multiplication that corresponds to $T_i$, for some $i = 1 \rightarrow t$.

	Then we begin with a vector  
	$x \in (\mathbb{R}^{n})^{\otimes l - \sum_{j=1}^{i} |T_j|}$
	that is of the form
	\begin{equation}
		x = 
		\sum_{l_{i+1}, \dots, l_{t} \in [n]}
		\sum_{j_1, \dots, j_d \in [n]} 
		x_{j_1, j_2, \dots, j_d}
		\bigotimes_{q = i+1}^{t}
		\left(
		\bigotimes_{m = 1}^{|T_q|} e_{l_q}
		\right)
		\bigotimes_{k = 1}^{d}
		\left(
		\bigotimes_{p = 1}^{|D_k^{U}|} e_{j_k}
		\right)
	\end{equation}
	where $x_{j_1, j_2, \dots, j_d}$ is the coefficient 
	$r_{j_1, \dots, j_1, j_2, \dots, j_2, \dots, j_d, \dots, j_d}$
	appearing in (\ref{transfercoeffSn}).
	
	This will be mapped to the vector
	$y \in (\mathbb{R}^{n})^{\otimes l - \sum_{j=1}^{i-1} |T_j|}$,
	where $y$ is of the form
	\begin{equation}
		y = 
		\sum_{l_i, l_{i+1}, \dots, l_{t} \in [n]}
		\sum_{j_1, \dots, j_d \in [n]} 
		x_{j_1, j_2, \dots, j_d}
		\bigotimes_{q = i}^{t}
		\left(
		\bigotimes_{m = 1}^{|T_q|} e_{l_q}
		\right)
		\bigotimes_{k = 1}^{d}
		\left(
		\bigotimes_{p = 1}^{|D_k^{U}|} e_{j_k}
		\right)
	\end{equation}
	At the end of this process, we obtain a vector in 
	$(\mathbb{R}^{n})^{\otimes l}$, by (\ref{symmupper}),
	which is of the form
	\begin{equation}
		\sum_{l_1, \dots, l_{t} \in [n]}
		\sum_{j_1, \dots, j_d \in [n]} 
		x_{j_1, j_2, \dots, j_d}
		\bigotimes_{q = 1}^{t}
		\left(
		\bigotimes_{m = 1}^{|T_q|} e_{l_q}
		\right)
		\bigotimes_{k = 1}^{d}
		\left(
		\bigotimes_{p = 1}^{|D_k^{U}|} e_{j_k}
		\right)
	\end{equation}
	This is the vector that is returned by \textbf{PlanarMult} for the symmetric group $S_n$.

	To summarise, the implementation of \textbf{PlanarMult} for the symmetric group $S_n$ takes 
	as input
	the algorithmically planar $(k,l)$--partition diagram 
	that comes from \textbf{Factor} and expresses it as a tensor product of three types of set partition diagrams.
	The right-most type is itself a tensor product of set partition diagrams having only vertices in the bottom row, 
	where each diagram represents a single block.
	These diagrams correspond to tensor contraction operations under the functor $\Theta$.
	The middle type is a set partition diagram that consists of all of the connected components in the 
	algorithmically planar $(k,l)$--partition diagram between vertices in different rows.
	These diagrams correspond to transfer operations under the functor $\Theta$.
	The left-most type is itself a tensor product of set partition diagrams having only vertices in the top row, 
	where each diagram represents a single block.
	These diagrams correspond to indexing operations that perform copies under the functor $\Theta$.

	In 
	Figure \ref{tensorproddecompSn}, we present
	the 
	tensor product decomposition 
	of the algorithmically planar $(5,4)$--partition diagram 
	that is shown in Figure \ref{symmfactoringSn}.


\begin{figure*}[tb]
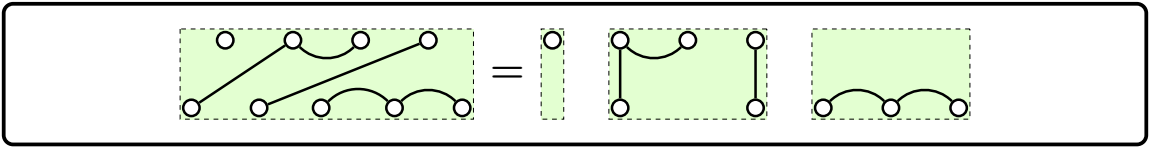

	\begin{tcolorbox}[colback=white!02, colframe=black]
	\begin{center}
		\scalebox{0.6}{\tikzfig{algorithm/tensorproddecompSn}}
	\end{center}
	\end{tcolorbox}
	\caption{The decomposition of the algorithmically planar $(5,4)$--partition diagram that appears in Figure \ref{symmfactoringSn} into a tensor product of smaller partition diagrams. 
	These diagrams correspond, from right-to-left, to tensor contraction, transfer, and copying operations under the functor $\Theta$.
	This tensor product decomposition is used in \textbf{PlanarMult} for the symmetric group $S_n$.}
	\label{tensorproddecompSn}	
\end{figure*}


To show diagrammatically how the matrix multiplication is performed,
we see that the tensor product of the three types of set partition diagrams
corresponds to a Kronecker product of smaller matrices under the functor $\Theta$, defined in Theorem \ref{partfunctor},
by the monoidal property of $\Theta$.
We would like to apply each smaller matrix
to the input vector 
from right-to-left, diagram-by-diagram.
To do this, we first deform the entire tensor product decomposition of set partition diagrams by pulling each individual diagram up one level higher than the previous one, going from right-to-left,
and then apply the functor $\Theta$ at each level. 
The newly inserted strings correspond to an identity matrix, hence only the matrices corresponding to the original tensor product decomposition act on the input vector at each stage. 

Figure \ref{planarmultSn} gives an example
of how the computation takes place at each stage
for the tensor product decomposition given in Figure \ref{tensorproddecompSn}, 
using its equivalent diagram form. 


\begin{figure*}[tb]
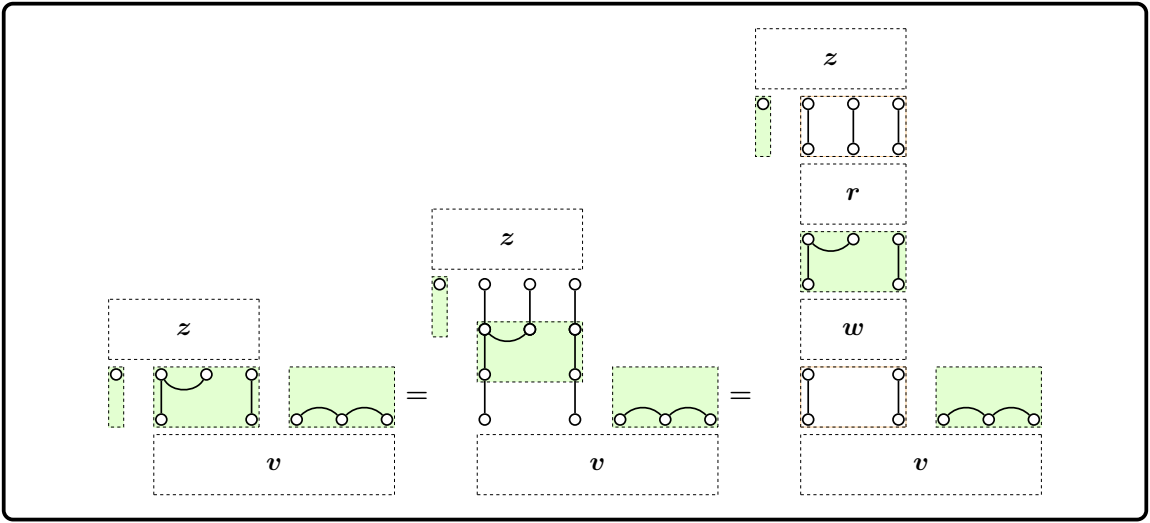

	\begin{tcolorbox}[colback=white!02, colframe=black]
	\begin{center}
		\scalebox{0.4}{\tikzfig{algorithm/planarmultSn}}
	\end{center}
	\end{tcolorbox}
	\caption{We show how matrix multiplication is implemented in \textbf{PlanarMult} for $S_n$ using the tensor product decomposition of the algorithmically planar 
	$(5,4)$--partition diagram given in Figure \ref{tensorproddecompSn} as an example.
	We perform the matrix multiplication as follows:
	first, we deform the entire tensor product decomposition diagram by pulling each individual diagram up one level higher than the previous one, going from right-to-left, and then we apply the functor $\Theta$ at each level. 
	Finally, we perform matrix multiplication at each level to obtain the final output vector.}
	\label{planarmultSn}
\end{figure*}


\begin{example}
	Suppose that we wish to perform the multiplication of $D_\pi$
	by $v \in (\mathbb{R}^{n})^{\otimes 5}$, where
	$D_\pi$ corresponds to the $(5,4)$--partition diagram $d_\pi$ 
	given in Figure \ref{symmfactoringSn} under $\Theta$,
	and $v$ is given by
		\begin{equation}
			\sum_{L \in [n]^5} v_Le_L
		\end{equation}
	We know that $D_\pi$ is a matrix in
	$\Hom_{S_n}((\mathbb{R}^{n})^{\otimes 5}, (\mathbb{R}^{n})^{\otimes 4})$.

	First, we apply the procedure \textbf{Factor}, which returns the three diagrams given in Figure~\ref{symmfactoringSn}.
	The first diagram corresponds to the permutation $(13)(24)$ in $S_5$; hence, the result of \textbf{Permute}$(v, (13)(24))$ is the vector
		\begin{equation}
			\sum_{L \in [n]^5} 
			v_{l_1, l_2, l_3, l_4, l_5}
			\left(
				e_{l_3} \otimes 
				e_{l_4} \otimes 
				e_{l_1} \otimes
				e_{l_2} \otimes
				e_{l_5} 
			\right)
		\end{equation}
	Next we apply \textbf{PlanarMult} with the decomposition given in Figure \ref{tensorproddecompSn}.

	Step 1: Apply the matrices that correspond to the bottom row blocks.

	We obtain the vector
		\begin{equation}
			w = 
			\sum_{l_3, l_4 \in [n]} 
			w_{l_3, l_4}
			\left(
				e_{l_3} \otimes 
				e_{l_4}
			\right)
		\end{equation}
	where
		\begin{equation}
			w_{l_3, l_4}
			=
			\sum_{j \in [n]}
			v_{j, j, l_3, l_4, j}
		\end{equation}

	Step 2: Apply the matrices that correspond to the middle diagram.

	We obtain the vector
		\begin{equation}
			r = 
			\sum_{l_3 \in [n]}
			\sum_{l_4 \in [n]}
			r_{l_3, l_3, l_4}
			\left(	
				e_{l_3} \otimes 
				e_{l_3} \otimes 
				e_{l_4}
			\right)
		\end{equation}
	where
		\begin{equation}
			r_{l_3, l_3, l_4}
			=
			w_{l_3, l_4}
		\end{equation}	

	Step 3: Apply the matrices that correspond to the top row blocks.

	We obtain the vector
		\begin{equation}
			z =
			\sum_{m \in [n]}	
			\sum_{l_3 \in [n]}
			\sum_{l_4 \in [n]}
			r_{l_3, l_3, l_4}
			\left(
				e_{m} \otimes 
				e_{l_3} \otimes 
				e_{l_3} \otimes 
				e_{l_4}
			\right)
		\end{equation}
	Substituting in, we get that
		\begin{equation}
			z =
			\sum_{m \in [n]}	
			\sum_{l_3 \in [n]}
			\sum_{l_4 \in [n]}
			w_{l_3, l_4}
			\left(
				e_{m} \otimes 
				e_{l_3} \otimes 
				e_{l_3} \otimes 
				e_{l_4}
			\right)
		\end{equation}
	and hence
		\begin{equation}
			z =
			\sum_{m \in [n]}	
			\sum_{l_3 \in [n]}
			\sum_{l_4 \in [n]}
			\sum_{j \in [n]}
			v_{j, j, l_3, l_4, j}
			\left(
				e_{m} \otimes 
				e_{l_3} \otimes 
				e_{l_3} \otimes 
				e_{l_4}
			\right)
		\end{equation}
	Finally, as the third diagram returned from \textbf{Factor} 
	corresponds to the permutation $(14)$ in $S_4$,
	we perform \textbf{Permute}$(z, (14))$,
	which returns the vector
		\begin{equation}
			z =
			\sum_{m \in [n]}	
			\sum_{l_3 \in [n]}
			\sum_{l_4 \in [n]}
			\sum_{j \in [n]}
			v_{j, j, l_3, l_4, j}
			\left(
				e_{l_4} \otimes 
				e_{l_3} \otimes 
				e_{l_3} \otimes 
				e_{m}
			\right)
		\end{equation}
	This is the vector that is returned by \textbf{MatrixMult}.	
\end{example}

\paragraph{Time Complexity}

We look at each step of the implementation that was given above.

	\textbf{Step 1:} For the part of the matrix multiplication that corresponds to $B_i$, for some $i = 1 \rightarrow b$, we map a vector in 
	$(\mathbb{R}^{n})^{\otimes k - \sum_{j = i+1}^{b} |B_j|}$
	to a vector in
	$(\mathbb{R}^{n})^{\otimes k - \sum_{j = i}^{b} |B_j|}$.

	Since the matrix corresponds to a bottom row block, 
	for each tuple $M$ of indices in the output coefficient $r_M$, 
	as in (\ref{indicesj}),
	there are only $n$ terms to multiply (an improvement over $n^{|B_i|}$), 
	and consequently only $n-1$ additions.

	Hence, in total, there are
	\begin{equation} \label{Snmultalgo}
		\sum_{i = 1}^{b}
		n^{k - \sum_{j = b+1-i}^{b} |B_j|} \cdot n
	\end{equation}
	multiplications and
	\begin{equation} \label{Snaddalgo}
		\sum_{i = 1}^{b}
		n^{k - \sum_{j = b+1-i}^{b} |B_j|} \cdot (n-1)
	\end{equation}
	additions.
	Note that, in calculating the overall time complexity for this step, 
	the highest order term is given by
	the size of $B_b$.
	Hence, the worst case occurs when $|B_b| = 1$, giving
	an overall time complexity of $O(n^{k})$.
	However, 
	assuming that Step 1 must take place, 
	the best case occurs when $|B_b| = k$, that is, there is only one bottom row block of size $k$,
	giving an overall time complexity of $O(n)$.

	\textbf{Step 2:} As these are transfer operations, there is no cost since we are simply 
	copying elements from an array.

	\textbf{Step 3:} Here we are copying arrays, hence there is no cost.

	Consequently, in the worst case, 
	we have reduced the overall time complexity from 
	$O(n^{l+k})$ to 
	$O(n^{k})$. 
	If Step 1 must occur, then in the best case, we have 
	reduced the overall time complexity from 
	$O(n^{l+k})$ to $O(n)$.
	However, the true best case occurs when Step 1 does not take place at all, 
	that is, when the number of bottom row blocks $b$ is zero, because,
	in this case, the computation is effectively free! 
	
	This analysis also shows why, in Definition \ref{algplanardefnSn} 
	of an algorithmically planar $(k,l)$--partition diagram, we have chosen
	to order the connected components that are solely
	in the bottom row by their size, in decreasing order from right to left. 
	We want to obtain the best overall time complexity for the multiplication algorithm,
	which, for the symmetric group, is determined by the number of additions and multiplications 
	that occur in Step 1.
	It is clear 
	from (\ref{Snmultalgo}) and (\ref{Snaddalgo})
	that the time complexity is best when the 
	blocks are ordered in decreasing size order from right to left.




\subsubsection{Orthogonal Group $O(n)$}

The implementation of Algorithm 1 for the orthogonal group $O(n)$ 
is similar to the implementation for the symmetric group $S_n$ as a result
of the similarity between the monoidal functors that are associated with each group.
In fact, the implementation for the orthogonal group will be less involved because
the only valid set partition diagrams that can be input into the \textbf{MatrixMult}
procedure for this group are Brauer diagrams.



\paragraph{Implementation}

We provide specific implementations of \textbf{Factor} and \textbf{PlanarMult}
for the orthogonal group $O(n)$.

\textbf{Factor}: The input is a $(k,l)$--Brauer diagram $d_\beta$.
We drag and bend the strings representing the connected components of $d_\beta$ 
to obtain a factoring of $d_\beta$ into three diagrams whose composition is equivalent to $d_\beta$:
a $(k,k)$--Brauer diagram that represents a permutation $\sigma_k$ in the symmetric group $S_k$; 
another $(k,l)$--Brauer diagram that is algorithmically planar; 
and a $(l,l)$--Brauer diagram that represents a permutation $\sigma_l$ in the symmetric group $S_l$.

To obtain the algorithmically planar $(k,l)$--Brauer diagram, 
we drag and bend the strings in any way such that 
\begin{itemize}
	\item the pairs that are solely in the bottom row of $d_\beta$ are pulled up to be next to each other
		in the far right hand side of the bottom row of the algorithmically planar $(k,l)$--Brauer diagram,
	\item the pairs that are solely in the top row of $d_\beta$ are pulled down to be next to each other
		in the far left hand side of the top row of the algorithmically planar $(k,l)$--Brauer diagram, and
	\item the pairs between vertices in different rows of $d_\beta$ are bent to be in between the other 
		vertices of the algorithmically planar $(k,l)$--Brauer diagram such that no two pairings in the algorithmically 
		planar diagram intersect each other.
\end{itemize}


\begin{figure*}[tb]
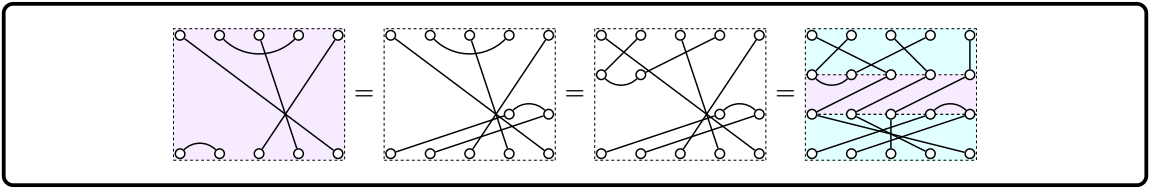

	\begin{tcolorbox}[colback=white!02, colframe=black]
	\begin{center}
		\scalebox{0.35}{\tikzfig{algorithm/symmfactoringOn}}
	\end{center}
	\end{tcolorbox}
	\caption{
		We use the string-like aspect of $(k,l)$--Brauer diagrams to \textbf{Factor} them as a composition of a permutation in $S_k$, an algorithmically planar $(k,l)$--Brauer diagram, and a permutation in $S_l$. Here $k = l = 5$.
	}
	\label{symmfactoringOn}
\end{figure*}


We give an example of this procedure in Figure \ref{symmfactoringOn}.

\textbf{PlanarMult}:
	We take as input the planar $(k,l)$--Brauer diagram $d_\beta$ that is the output of \textbf{Factor}, and a vector $v \in (\mathbb{R}^{n})^{\otimes k}$ that is the output of \textbf{Permute}, as per Algorithm 1.
	
	Let $t$ be the number of pairs that are solely in the top row of $d_\beta$,
	let $d$ be the number of pairs that are in different rows of $d_\beta$, and
	let $b$ be the number of pairs that are solely in the bottom row of $d_\beta$.

	Then it is clear that
	\begin{equation}
		2t + d = l \quad \text{and} \quad 2b + d = k
	\end{equation}
	and so
	\begin{equation}
		2t + 2d + 2b = l+k 
	\end{equation}

	Given how \textbf{Factor} constructs the planar $(k,l)$--Brauer diagram $d_\beta$, the Brauer partition $\beta$ corresponding to $d_\beta$ will be of the form
	\begin{equation} \label{brauerpartfactor}
		\left(\bigcup_{i = 1}^{t} T_i \right)
		\bigcup
		\left(\bigcup_{i = 1}^{d} D_i \right)
		\bigcup
		\left(\bigcup_{i = 1}^{b} B_i \right)
	\end{equation}
	where we have used $T_i$ to refer to a top row pair, $D_i$ to refer to a different row pair, and $B_i$ to refer to a bottom row pair.
	In particular, we have that
	\begin{itemize}
		\item $T_i = \left\{2i-1, 2i\right\}$ for all $i = 1 \rightarrow t$
		\item $D_i = \left\{l+i-d, l+i\right\}$ for all $i = 1 \rightarrow d$, and
		\item $B_i = \left\{l+k-2b+2i-1, l+k-2b+2i\right\}$ for all $i = 1 \rightarrow b$.
	\end{itemize}

	Next, we take $d_\beta$ and express it as a tensor product of three types of Brauer diagrams.
	The right-most type is itself a tensor product of diagrams corresponding to the $B_i$; the middle type is a single diagram corresponding to $\bigcup_{i = 1}^{d} D_i$; and the left-most type is itself a tensor product of diagrams corresponding to the $T_i$.

	We apply the monoidal functor $\Phi$ to this tensor product decomposition of diagrams, which returns a Kronecker product of matrices.
	We perform the matrix multiplication by applying the matrices 
	``right-to-left, diagram-by-diagram", as follows.

	\textbf{Step 1: Apply each matrix corresponding to a bottom row pair diagram, one-by-one, starting from the one that corresponds to $B_b$ and ending with the one that corresponds to $B_1$.}

	Suppose that we are performing the part of the matrix multiplication that corresponds to $B_i$, for some $i = 1 \rightarrow b$.
	The input will be a vector
	$w \in (\mathbb{R}^{n})^{\otimes k - 2(b-i)}$.

	We can express $w$ in the standard basis of $\mathbb{R}^{n}$ as
	\begin{equation}
		w = \sum_{L \in [n]^{k-2(b-i)}} w_Le_L
	\end{equation}

	This will be mapped to the vector
	$r \in (\mathbb{R}^{n})^{\otimes k - 2(b-i) - 2}$,
	where $r$ is of the form
	\begin{equation}
		r = \sum_{M \in [n]^{k-2(b-i)-2}} r_Me_M
	\end{equation}
	and
	\begin{equation} \label{rMcoeff}
		r_M = \sum_{j \in [n]} w_{M,j,j}
	\end{equation}
	At the end of this process, we obtain a vector in 
	$(\mathbb{R}^{n})^{\otimes k - 2b}$.

	As before, the matrices corresponding to bottom row pairs are merely performing tensor contractions.

	\textbf{Step 2: Now apply the matrix corresponding to the middle diagram, that is, to the set $\bigcup_{i = 1}^{d} D_i$.}
	
	The input will be a vector
	$w \in (\mathbb{R}^{n})^{\otimes k - 2b}$.
	
	Expressing $w$ in the standard basis of $\mathbb{R}^{n}$ as
	\begin{equation}
		w = \sum_{L \in [n]^{k-2b}} w_Le_L
	\end{equation}
	the multiplication of the matrix merely returns $w$ itself!
	
	Hence the transfer operations for the orthogonal group are simply the identity map.

	\textbf{Step 3: Finally, apply each matrix corresponding to a top row pair diagram, one-by-one, starting from the one that corresponds to $T_t$ and ending with the one that corresponds to $T_1$.}
	
	Suppose that we are performing the part of the matrix multiplication that corresponds to $T_i$, for some $i = 1 \rightarrow t$.

	Then we begin with a vector  
	$w \in (\mathbb{R}^{n})^{\otimes k - 2b + 2(t - i)}$ 
	that is of the form
	\begin{equation}
		w = \sum_{J \in [n]^{t-i}}\sum_{L \in [n]^{k-2b}} v_L \left(e_{j_1} \otimes e_{j_1} \otimes \dots \otimes e_{j_{t-i}} \otimes e_{j_{t-i}} \otimes e_L\right)
	\end{equation}
	where $v_L$ is the coefficient of $e_L$ appearing in the vector at the end of Step 2.
	
	This will be mapped to the vector
	$r \in (\mathbb{R}^{n})^{\otimes k - 2b+2(t-i) + 2}$,
	where $r$ is of the form
	\begin{equation}
		r = \sum_{m \in [n]}\sum_{J \in [n]^{t-i}}\sum_{L \in [n]^{k-2b+2(t-i)}} v_L \left(e_m \otimes e_m \otimes e_{j_1} \otimes e_{j_1} \otimes \dots \otimes e_{j_{t-i}} \otimes e_{j_{t-i}} \otimes e_L\right)
	\end{equation}
	At the end of this process, we obtain a vector in 
	$(\mathbb{R}^{n})^{\otimes l}$, since $k - 2b + 2t = l$, which is of the form
	\begin{equation}
		\sum_{J \in [n]^t}\sum_{L \in [n]^{k-2b}} v_L \left(e_{j_1} \otimes e_{j_1} \otimes \dots \otimes e_{j_t} \otimes e_{j_t} \otimes e_L\right)
	\end{equation}
	This is the vector that is returned by \textbf{PlanarMult} for the orthogonal group $O(n)$.

	To summarise, the implementation of \textbf{PlanarMult} for the orthogonal group $O(n)$
	takes as input 
	the algorithmically planar $(k,l)$--Brauer diagram that comes from \textbf{Factor} and 
	expresses it as a tensor product of three types of Brauer diagrams.
	The right-most type is itself a tensor product of Brauer diagrams, where each diagram has only two connected vertices in the bottom row.
	These diagrams correspond to tensor contraction operations under the functor $\Phi$ given in Theorem \ref{brauerO(n)functor}.
	The middle type is a Brauer diagram that consists of all of the pairs in the planar $(k,l)$--Brauer diagram between vertices in different rows. This diagram corresponds to the identity under the functor $\Phi$.
	The left-most type is a tensor product of Brauer diagrams having only two connected vertices in the top row. These diagrams correspond to indexing operations that perform copies under the functor $\Phi$. 


\begin{figure*}[tb]
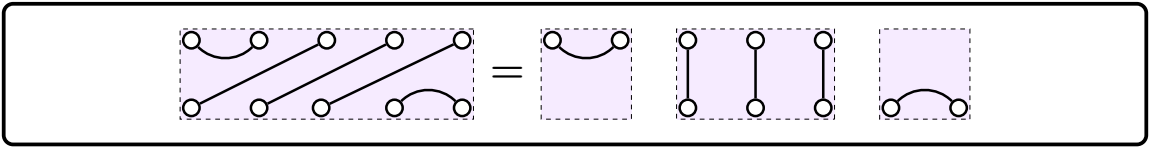

	\begin{tcolorbox}[colback=white!02, colframe=black]
	\begin{center}
		\scalebox{0.6}{\tikzfig{algorithm/tensorproddecompOn}}
	\end{center}
	\end{tcolorbox}
	\caption{
		The tensor product decomposition of the planar $(5,5)$--Brauer diagram that appears in Figure \ref{symmfactoringOn}.
		}
	\label{tensorproddecompOn}
\end{figure*}


In
Figure \ref{tensorproddecompOn},
we present the tensor product decomposition 
of the algorithmically planar $(5,5)$--Brauer diagram 
that is shown in Figure \ref{symmfactoringOn}.

The diagrammatic representation of the matrix multiplication step 
is very similar to the one for the symmetric group, 
in that to obtain the matrices we perform the same deformation of the 
tensor product decomposition of diagrams before applying the functor $\Phi$
at each level.
We give an example in Figure \ref{planarmultOn} of how the computation takes place at each stage 
for the tensor product decomposition given in Figure \ref{tensorproddecompOn}, using its equivalent diagram form.

\begin{example} \label{orthogAlgoEx}
	Suppose that we wish to perform the multiplication of $E_\beta$
	by $v \in (\mathbb{R}^{n})^{\otimes 5}$, where
	$E_\beta$ corresponds to the $(5,5)$--Brauer diagram $d_\beta$ given in Figure \ref{symmfactoringOn} under $\Phi$,
	and $v$ is given by
		\begin{equation}
			\sum_{L \in [n]^5} v_Le_L
		\end{equation}

	We know that $E_\beta$ is a matrix in
	$\Hom_{O(n)}((\mathbb{R}^{n})^{\otimes 5}, (\mathbb{R}^{n})^{\otimes 5})$.

	First, we apply the procedure \textbf{Factor}, which returns the three diagrams given in Figure~\ref{symmfactoringOn}.
	The first diagram corresponds to the permutation $(1524)$ in $S_5$, hence, the result of \textbf{Permute}$(v, (1524))$ is the vector
		\begin{equation}
			\sum_{L \in [n]^5} 
			v_{l_1, l_2, l_3, l_4, l_5}
			\left(
				e_{l_5} \otimes 
				e_{l_4} \otimes 
				e_{l_3} \otimes
				e_{l_1} \otimes
				e_{l_2} 
			\right)
		\end{equation}
	Now we apply \textbf{PlanarMult} with the decomposition given in Figure \ref{tensorproddecompOn}.

	Step 1: Apply the matrices that correspond to the bottom row pairs.

	We obtain the vector
		\begin{equation}
			w = 
			\sum_{l_5, l_4, l_3 \in [n]} 
			w_{l_5, l_4, l_3}
			\left(
				e_{l_5} \otimes 
				e_{l_4} \otimes 
				e_{l_3}
			\right)
		\end{equation}
	where
		\begin{equation}
			w_{l_5, l_4, l_3}
			=
			\sum_{j \in [n]}
			v_{j, j, l_3, l_4, l_5}
		\end{equation}

	Step 2: Apply the matrices that correspond to the middle diagram.

	As the transfer operations are the identity mapping, we get $w$.	
	
	Step 3: Apply the matrices that correspond to the top row.

	We obtain the vector
		\begin{equation}
			z =
			\sum_{m \in [n]}	
			\sum_{l_5, l_4, l_3 \in [n]} 
			w_{l_5, l_4, l_3}
			\left(
				e_{m} \otimes e_{m} \otimes
				e_{l_5} \otimes 
				e_{l_4} \otimes 
				e_{l_3}
			\right)
		\end{equation}
	Substituting in, we get that
		\begin{equation}
			z =
			\sum_{m \in [n]}	
			\sum_{l_5, l_4, l_3 \in [n]} 
			\sum_{j \in [n]}
			v_{j, j, l_3, l_4, l_5}
			\left(
				e_{m} \otimes 
				e_{m} \otimes
				e_{l_5} \otimes 
				e_{l_4} \otimes 
				e_{l_3}
			\right)
		\end{equation}
	Finally, as the third diagram returned from \textbf{Factor} 
	corresponds to the permutation $(1342)$ in $S_5$,
	we perform \textbf{Permute}$(z, (1342))$,
	which returns the vector
		\begin{equation}
			\sum_{m \in [n]}	
			\sum_{l_5, l_4, l_3 \in [n]} 
			\sum_{j \in [n]}
			v_{j, j, l_3, l_4, l_5}
			\left(
				e_{l_5} \otimes 
				e_{m} \otimes 
				e_{l_4} \otimes 
				e_{m} \otimes
				e_{l_3}
			\right)
		\end{equation}
	This is the vector that is returned by \textbf{MatrixMult}.	
\end{example}


\begin{figure*}[tb]
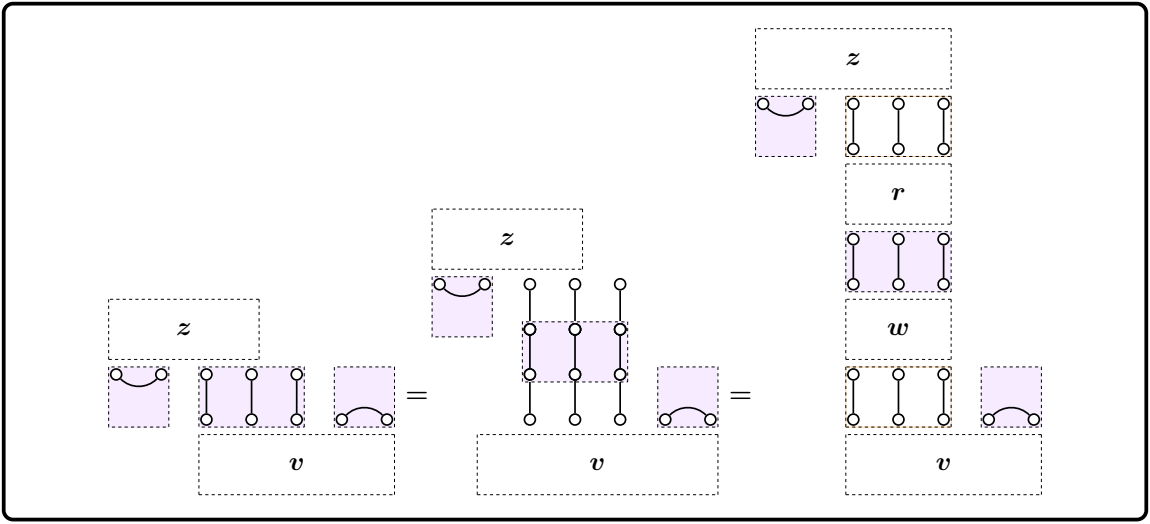

	\begin{tcolorbox}[colback=white!02, colframe=black]
	\begin{center}
		\scalebox{0.4}{\tikzfig{algorithm/planarmultOn}}
	\end{center}
	\end{tcolorbox}
		\caption{We show how matrix multiplication is implemented in \textbf{PlanarMult} for $O(n), Sp(n)$ and $SO(n)$ 
		using the tensor product decomposition of the planar 
		$(5,5)$--Brauer diagram
		given in Figure \ref{tensorproddecompOn}
		as an example.
		Effectively, we perform the matrix multiplication by applying the matrices ``right-to-left, diagram-by-diagram".
		In reality, we perform the matrix multiplication as follows:
		first, we deform the entire tensor product decomposition diagram by pulling each individual diagram up one level higher than the previous one, going from right--to-left, and then we apply the functor that corresponds to the group at each level. 
		Finally, we perform matrix multiplication at each level to obtain the final output vector.}
	\label{planarmultOn}
\end{figure*}


\paragraph{Time Complexity}

	We look at each step of the implementation that was given above.

	\textbf{Step 1:} For the part of the matrix multiplication that corresponds to $B_i$, for some $i = 1 \rightarrow b$, we map a vector in 
	$(\mathbb{R}^{n})^{\otimes k - 2(b-i)}$ 
	to a vector in
	$(\mathbb{R}^{n})^{\otimes k - 2(b-i) - 2}$.

	Since the matrix corresponds to a bottom row pair that is connected, for each tuple $M$ of indices in the output coefficient $r_M$, as in (\ref{rMcoeff}),
	there are only $n$ terms to multiply (an improvement over $n^2$), and consequently only $n-1$ additions.

	Hence, in total, there are
	\begin{equation}
		\sum_{i = 1}^{b}
		n^{k - 2(b-i) - 2} \cdot n
	\end{equation}
	multiplications and
	\begin{equation}
		\sum_{i = 1}^{b}
		n^{k - 2(b-i) -2} \cdot (n-1)
	\end{equation}
	additions, for an overall time complexity of $O(n^{k-1})$.

	\textbf{Step 2:} This corresponds to the identity transformation, hence there is no cost, as we do not need to perform this operation.

	\textbf{Step 3:} Here we are copying arrays, hence there is no cost.

	Consequently, 
	we have reduced the overall time complexity from 
	$O(n^{l+k})$ to 
	$O(n^{k-1})$.


\subsubsection{Symplectic Group $Sp(n)$}

The implementation of Algorithm 1 for the symplectic group $Sp(n)$
is related to the implementation for
the orthogonal group $O(n)$ because the 
only valid set partition diagrams that can be input into the procedure 
in each case are Brauer diagrams.
The difference in the implementations comes from the 
difference in the monoidal functor that is associated with each group.

\paragraph{Implementation}


The implementation of the \textbf{Factor} procedure is the same as for the orthogonal group.


\textbf{PlanarMult}:
	Again, we take as input the planar $(k,l)$--Brauer diagram $d_\beta$ that is the output 
	of \textbf{Factor}, and a vector $v \in (\mathbb{R}^{n})^{\otimes k}$ that is the output of \textbf{Permute}, as per Algorithm~1.

	The tensor product decomposition of $d_\beta$ is the same as for the orthogonal group $O(n)$; 
	in particular, the Brauer partition $\beta$ corresponding to $d_\beta$ is of the form (\ref{brauerpartfactor}).
	The important difference here is that we apply the monoidal functor $X$, instead of $\Phi$, 
	to this tensor product decomposition of diagrams, which returns a different Kronecker product of matrices.
	Again, we perform the same steps for the matrix multiplication, but this time the vectors returned at each stage are different.

	\textbf{Step 1: Apply each matrix corresponding to a bottom row pair diagram, one-by-one, 
	starting from the one that corresponds to $B_b$ and ending with the one that corresponds to $B_1$.}

	Suppose that we are performing the part of the matrix multiplication that corresponds to $B_i$, for some $i = 1 \rightarrow b$.
	The input will be a vector
	$w \in (\mathbb{R}^{n})^{\otimes k - 2(b-i)}$.

	We can express $w$ in the standard basis of $\mathbb{R}^{n}$ as
	\begin{equation}
		w = \sum_{L \in [n]^{k-2(b-i)}} w_Le_L
	\end{equation}

	This will be mapped to the vector
	$r \in (\mathbb{R}^{n})^{\otimes k - 2(b-i) - 2}$ 
	where $r$ is of the form
	\begin{equation}
		r = \sum_{M \in [n]^{k-2(b-i)-2}} r_Me_M
	\end{equation}
	where
	\begin{equation} 
		r_M = \sum_{j_{k-2(b-i)-1}, j_{k-2(b-i)} \in [n]} 
		\epsilon_{j_{k-2(b-i)-1},j_{k-2(b-i)}}
		w_{M,j_{k-2(b-i)-1},j_{k-2(b-i)}}
	\end{equation}
	Recall that $\epsilon_{j_{k-2(b-i)-1},j_{k-2(b-i)}}$ was defined 
	in (\ref{epsilondef1}) and (\ref{epsilondef2}). 

	At the end of this process, we obtain a vector in 
	$(\mathbb{R}^{n})^{\otimes k - 2b}$.
	
	As before, these matrices corresponding to bottom row pairs are performing tensor contractions.

	\textbf{Step 2: Now apply the matrix corresponding to the middle diagram, that is, to the set $\bigcup_{i = 1}^{d} D_i$.}

	This will be exactly the same as for the orthogonal group, by the definition of $\gamma_{r_p, u_p}$ given in (\ref{gammarpup}).

	\textbf{Step 3: Finally, apply each matrix corresponding to a top row pair diagram, one-by-one, 
	starting from the one that corresponds to $T_t$ and ending with the one that corresponds to $T_1$.}

	Suppose that we are performing the part of the matrix multiplication that corresponds to $T_i$, for some $i = 1 \rightarrow t$.

	Then we begin with a vector  
	$w \in (\mathbb{R}^{n})^{\otimes k - 2b + 2(t - i)}$ 
	that is of the form
	\begin{equation}
		w = 
		\sum_{J \in [n]^{2(t-i)}}\sum_{L \in [n]^{k-2b}} 
		\epsilon_J
		v_L 
		\left(
		e_J \otimes e_L\right)
	\end{equation}
	where 
	\begin{equation}
		\epsilon_J
		\coloneqq
		\epsilon_{j_1, j_2}
		\dots
		\epsilon_{j_{2(t-i)-1}, j_{2(t-i)}}
	\end{equation}
	and 
	where $v_L$ is the coefficient of $e_L$ appearing in the vector at the end of Step 2.
	
	This will be mapped to the vector
	$r \in (\mathbb{R}^{n})^{\otimes k - 2b+2(t-i) + 2}$,
	where $r$ is of the form
	\begin{equation}
		r = 
		\sum_{M \in [n]^2}
		\sum_{J \in [n]^{2(t-i)}}\sum_{L \in [n]^{k-2b}} 
		\epsilon_{M,J}
		v_L 
		\left(
		e_M \otimes e_J	\otimes e_L\right)
	\end{equation}
	where
	\begin{equation}
		\epsilon_{M,J}
		\coloneqq
		\epsilon_{m_1, m_2}
		\epsilon_{j_1, j_2}
		\dots
		\epsilon_{j_{2(t-i)-1}, j_{2(t-i)}}
	\end{equation}
	At the end of this process, we obtain a vector in 
	$(\mathbb{R}^{n})^{\otimes l}$, since $k - 2b + 2t = l$, which is of the form
	\begin{equation}
		\sum_{J \in [n]^{2t}}\sum_{L \in [n]^{k-2b}} 
		\epsilon_{J}
		v_L \left(
		e_J \otimes e_L\right)
	\end{equation}
	where $\epsilon_{J}$ is redefined to be
	\begin{equation}
		\epsilon_{j_1, j_2}
		\dots
		\epsilon_{j_{2t-1}, j_{2t}}
	\end{equation}
	This is the vector that is returned by \textbf{PlanarMult} for the symplectic group $Sp(n)$.

\begin{example}
	Suppose that we wish to perform the multiplication of $F_\beta$ 
	by $v \in (\mathbb{R}^{n})^{\otimes 5}$,
	for the same $(5,5)$--Brauer diagram $d_\beta$ given in Figure \ref{symmfactoringOn},
	and $v$ is given by
		\begin{equation}
			\sum_{L \in [n]^5} v_Le_L
		\end{equation}

	We know that $F_\beta$ is a matrix in
	$\Hom_{Sp(n)}((\mathbb{R}^{n})^{\otimes 5}, (\mathbb{R}^{n})^{\otimes 5})$.

	The \textbf{Factor} and \textbf{Permute} steps are the same as for Example \ref{orthogAlgoEx},
	hence, prior to the \textbf{PlanarMult} step, we have the vector
		\begin{equation}
			\sum_{L \in [n]^5} 
			v_{l_1, l_2, l_3, l_4, l_5}
			\left(
				e_{l_5} \otimes 
				e_{l_4} \otimes 
				e_{l_3} \otimes
				e_{l_1} \otimes
				e_{l_2} 
			\right)
		\end{equation}
	We now apply \textbf{PlanarMult} with the decomposition given in Figure \ref{tensorproddecompOn}.

	Step 1: Apply the matrices that correspond to the bottom row pairs.

	We obtain the vector
		\begin{equation}
			w = 
			\sum_{l_5, l_4, l_3 \in [n]} 
			w_{l_5, l_4, l_3}
			\left(
				e_{l_5} \otimes 
				e_{l_4} \otimes 
				e_{l_3}
			\right)
		\end{equation}
	where
		\begin{equation}
			w_{l_5, l_4, l_3}
			=
			\sum_{j_1, j_2 \in [n]}
			\epsilon_{j_1, j_2}
			v_{j_1, j_2, l_3, l_4, l_5}
		\end{equation}

	Step 2: Apply the matrices that correspond to the middle diagram.

	As the transfer operations are the identity mapping, we get $w$.	
	
	Step 3: Apply the matrices that correspond to the top row.

	We obtain the vector
		\begin{equation}
			z =
			\sum_{m_1, m_2 \in [n]}	
			\sum_{l_5, l_4, l_3 \in [n]} 
			\epsilon_{m_1, m_2}
			w_{l_5, l_4, l_3}
			\left(
				e_{m_1} \otimes e_{m_2} \otimes
				e_{l_5} \otimes 
				e_{l_4} \otimes 
				e_{l_3}
			\right)
		\end{equation}
	Substituting in, we get that
		\begin{equation}
			z =
			\sum_{m_1, m_2 \in [n]}	
			\sum_{l_5, l_4, l_3 \in [n]} 
			\sum_{j_1, j_2 \in [n]}
			\epsilon_{m_1, m_2}
			\epsilon_{j_1, j_2}
			v_{j_1, j_2, l_3, l_4, l_5}
			\left(
				e_{m_1} \otimes 
				e_{m_2} \otimes
				e_{l_5} \otimes 
				e_{l_4} \otimes 
				e_{l_3}
			\right)
		\end{equation}
	Finally, as the third diagram returned from \textbf{Factor} 
	corresponds to the permutation $(1342)$ in $S_5$,
	we perform \textbf{Permute}$(z, (1342))$,
	which returns the vector
		\begin{equation}
			\sum_{m_1, m_2 \in [n]}	
			\sum_{l_5, l_4, l_3 \in [n]} 
			\sum_{j_1, j_2 \in [n]}
			\epsilon_{m_1, m_2}
			\epsilon_{j_1, j_2}
			v_{j_1, j_2, l_3, l_4, l_5}
			\left(
				e_{l_5} \otimes 
				e_{m_1} \otimes 
				e_{l_4} \otimes 
				e_{m_2} \otimes
				e_{l_3}
			\right)
		\end{equation}
	This is the vector that is returned by \textbf{MatrixMult}.	
\end{example}

\paragraph{Time Complexity}
	The time complexity is exactly the same as for the orthogonal group.


\subsubsection{Special Orthogonal Group $SO(n)$}

We can 
perform matrix multiplication between
either
$E_\beta$ or $H_\alpha
\in \Hom_{SO(n)}((\mathbb{R}^{n})^{\otimes k}, (\mathbb{R}^{n})^{\otimes l})$
and
$v \in (\mathbb{R}^{n})^{\otimes k}$,
where $d_\beta$ is a $(k,l)$--Brauer diagram,
and $d_\alpha$ is an $(l+k) \backslash n$--diagram.
Given that
the implementation for the $E_\beta$ case is the same as for the orthogonal group, 
by Theorem~\ref{spanningsetSO(n)}, we
only consider the $H_\alpha$ case below.

\paragraph{Implementation}
	\hphantom{x}

\textbf{Factor}:
The input is an $(l+k) \backslash n$--diagram $d_\alpha$.
As before, we drag and bend the strings representing the connected components 
of $d_\alpha$ to obtain a factoring of $d_\alpha$ into the three diagrams, 
except this time the middle diagram will be an algorithmically 
planar $(l+k) \backslash n$--diagram.

To obtain the algorithmically planar 
$(l+k) \backslash n$--diagram
we want to drag and bend the strings in any way such that 
\begin{itemize}
	\item the free vertices in the top row of $d_\alpha$ are pulled down to the far right of the top row of the algorithmically planar $(l+k) \backslash n$--diagram, maintaining their order,
	\item the free vertices in the bottom row of $d_\alpha$ are pulled up to the far right of the bottom row of the algorithmically planar $(l+k) \backslash n$--diagram, maintaining their order,
	\item the pairs in the bottom row of $d_\alpha$ are pulled up to be next to each other in the right hand side of the bottom row of the algorithmically planar $(l+k) \backslash n$--diagram, but next to and to the left of the free vertices in the bottom row of the algorithmically planar $(l+k) \backslash n$--diagram, 
	\item the pairs in the top row of $d_\alpha$ are pulled down to be next to each other in the far left hand side of the top row of the algorithmically planar $(l+k) \backslash n$--diagram,
	\item the pairs connecting vertices in different rows of $d_\alpha$ are ordered in the algorithmically planar $(l+k) \backslash n$--diagram in between the other vertices such that no two pairings in the algorithmically planar diagram intersect each other.
\end{itemize}
We give an example of this procedure in Figure \ref{symmfactoringSOn}.


\begin{figure*}[tb]
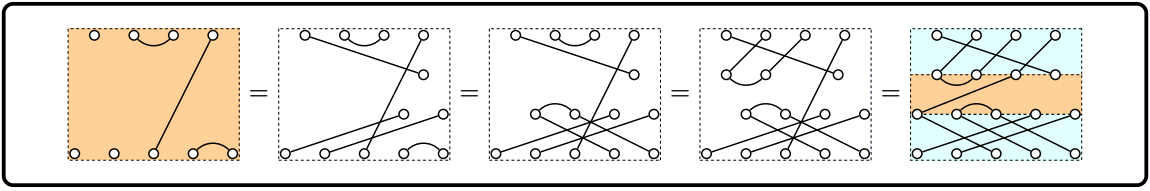

	\begin{tcolorbox}[colback=white!02, colframe=black]
	\begin{center}
		\scalebox{0.35}{\tikzfig{algorithm/symmfactoringSOn}}
	\end{center}
	\end{tcolorbox}
		\caption{We use the string-like aspect of 
		$(l+k) \backslash n$--diagrams
		to \textbf{Factor} them as a composition of a permutation in $S_k$, 
		an algorithmically planar 
		$(l+k) \backslash n$--diagram,
		and a permutation in $S_l$.
		Here, $k = 5$ and $l = 4$.
		}	
	\label{symmfactoringSOn}
\end{figure*}


\textbf{PlanarMult}:
	We take as input the algorithmically planar $(l+k) \backslash n$--diagram $d_\alpha$ that
	is the output of \textbf{Factor}, and a vector $v \in (\mathbb{R}^{n})^{\otimes k}$ 
	that is the output of \textbf{Permute}, as per Algorithm 1.

	We need new subsets and notation to consider the impact of the 
	free vertices in the $(l+k) \backslash n$--diagram $d_\alpha$
	on the implementation for \textbf{PlanarMult}.

	As before,
	let $t$ be the number of pairs that are solely in the top row of $d_\alpha$,
	let $d$ be the number of pairs that are in different rows of $d_\alpha$, and
	let $b$ be the number of pairs that are solely in the bottom row of $d_\alpha$. 
	Now, let $s$ be the number of free vertices in the top row of $d_\alpha$. 
	Hence there are $n-s$ free vertices in the bottom row of $d_\alpha$.

	Then it is clear that
	\begin{equation} \label{SO(n)restrictions}
		2t + d + s = l \quad \text{and} \quad 2b + d + n - s = k
	\end{equation}
	and so
	\begin{equation}
		2t + 2d + 2b + n = l+k 
	\end{equation}

	Given how \textbf{Factor} constructs the algorithmically planar $(l+k) \backslash n$--diagram $d_\alpha$, 
	the set partition $\alpha$ corresponding to $d_\alpha$ will be of the form
	\begin{equation}
		\left(\bigcup_{i = 1}^{t} T_i \right)
		\bigcup
		\left(\bigcup_{i = 1}^{d} D_i \right)
		\bigcup
		\left(\bigcup_{i = 1}^{s} TF_i \right)
		\bigcup
		\left(\bigcup_{i = 1}^{b} B_i \right)
		\bigcup
		\left(\bigcup_{i = 1}^{n-s} BF_i \right)
	\end{equation}
	where we have used $T_i$ to refer to a top row pair, 
	$D_i$ to refer to a different row pair, 
	$TF_i$ to refer to a top row free vertex,
	$B_i$ to refer to a bottom row pair, and
	$BF_i$ to refer to a bottom row free vertex.
	In particular, we have that
	\begin{itemize}
		\item $T_i = \left\{2i-1, 2i\right\}$ for all $i = 1 \rightarrow t$,
		\item $D_i = \left\{l+i-d-s, l+i\right\}$ for all $i = 1 \rightarrow d$,
		\item $TF_i = \left\{l-s+i\right\}$ for all $i = 1 \rightarrow s$, 
		\item $B_i = \left\{l+d+2i-1, l+d+2i\right\}$ for all $i = 1 \rightarrow b$, and
		\item $BF_i = \left\{l+d+2b+i\right\}$ for all $i = 1 \rightarrow n-s$.
	\end{itemize}

	We take $d_\alpha$ and express it as a tensor product of four types of set partition diagrams.

	The right-most type is a diagram consisting of all the free vertices.
	Consequently, it corresponds to 
	$	
		\left(\bigcup_{i = 1}^{s} TF_i \right)
		\bigcup
		\left(\bigcup_{i = 1}^{n-s} BF_i \right)
	$.
	The type to its left is itself a tensor product of diagrams corresponding to the $B_i$; the next type is a single diagram corresponding to
	$\left(\bigcup_{i = 1}^{d} D_i\right)$; and, finally, the left-most type is itself a tensor product of diagrams corresponding to the $T_i$.

	We now apply the monoidal functor $\Psi$ to this tensor product decomposition of diagrams, which returns a Kronecker product of matrices.
	We perform the matrix multiplication by applying the matrices ``right-to-left, diagram-by-diagram", as follows.

	\textbf{Step 1: Apply the matrix that corresponds to the free vertices, that is, to 
	$	
		\left(\bigcup_{i = 1}^{s} TF_i \right)
		\bigcup
		\left(\bigcup_{i = 1}^{n-s} BF_i \right)
	$.}

	The input will be a vector
	$v \in (\mathbb{R}^{n})^{\otimes k}$.

	As $k = 2b + d + (n-s)$, we can express $v$ 
	in the standard basis of $\mathbb{R}^{n}$ as
	\begin{equation} \label{SOnStep1,1}
		v = 
		\sum_{J \in [n]^{2b + d}} 
		\sum_{B \in [n]^{n-s}} 
		v_{J,B}e_{J,B}
	\end{equation}
	This will be mapped to the vector $w \in (\mathbb{R}^{n})^{\otimes 2b+d+s}$, where $w$ is of the form
	\begin{equation} \label{SOnStep1,2}
		w =
		\sum_{J \in [n]^{2b + d}} 
		\sum_{T \in [n]^{s}} 
		w_{J,T}e_{J,T}
	\end{equation}
	where
	\begin{equation} \label{SOnStep1,3}
		w_{J,T} =
		\sum_{B \in [n]^{n-s}} 
		v_{J,B}
		\det(e_{T,B})
	\end{equation}

	\textbf{Step 2: Apply each matrix corresponding to a bottom row pair diagram, one-by-one, starting from the one that corresponds to $B_b$ and ending with the one that corresponds to $B_1$.}

	This is exactly the same as Step 1 for the orthogonal group.
	We obtain a vector $y \in (\mathbb{R}^{n})^{\otimes d+s}$

	\textbf{Step 3: Now apply the matrix corresponding to the middle diagram, that is, to the set 
	$\left(\bigcup_{i = 1}^{d} D_i\right)$.}

	This is exactly the same as Step 2 for the orthogonal group.
	We obtain a vector $r \coloneqq y \in (\mathbb{R}^{n})^{\otimes d+s}$

	\textbf{Step 4: Finally, apply each matrix corresponding to a top row pair diagram, one-by-one, starting from the one that corresponds to $T_t$ and ending with the one that corresponds to $T_1$.}
	
	This is exactly the same as Step 3 for the orthogonal group.
	We obtain a vector $z \in (\mathbb{R}^{n})^{\otimes 2t+d+s}$.
	As $2t+d+s = l$ by (\ref{SO(n)restrictions}),
	we have that $z \in (\mathbb{R}^{n})^{\otimes l}$, as required.

	To summarise, the implementation of \textbf{PlanarMult} 
	for $SO(n)$ differs slightly from the implementation of \textbf{PlanarMult} 
	for the groups that have come before.
	In the case where the spanning set element corresponds to a Brauer diagram, the implementation
	is the same as for the orthogonal group $O(n)$.
	Otherwise, the implementation takes as input 
	the algorithmically planar $(l+k) \backslash n$--diagram
	that comes from \textbf{Factor}, but expresses it as a tensor product of 
	\textit{four} types of set partition diagrams. 
	The right-most is a diagram consisting of all of the free vertices. 
	This corresponds to the determinant map 
	that is given in 
	(\ref{detmapdefn}).
	The three other types of diagrams that appear in the decomposition and 
	their corresponding operations are exactly the same as for the orthogonal group $O(n)$.

In
Figure \ref{tensorproddecompSOn}, we present an example of the tensor product decomposition for the
algorithmically planar $(4+5) \backslash 3$--diagram given in Figure \ref{symmfactoringSOn}.

The diagrammatic representation of the
matrix multiplication step is also very similar to the orthogonal group, 
in that to obtain the matrices we perform the same 
deformation of the tensor product decomposition of diagrams 
before applying the functor $\Psi$, defined in Theorem \ref{brauerSO(n)functor}, at each level.
Note, in particular, that we need to attach identity strings to the free vertices 
that appear in the top row.

We give an example in Figure \ref{planarmultSOn} of how the computation takes place at each stage 
for the tensor product decomposition given in Figure \ref{tensorproddecompSOn}, using its equivalent diagram form. 


\begin{figure*}[tb]
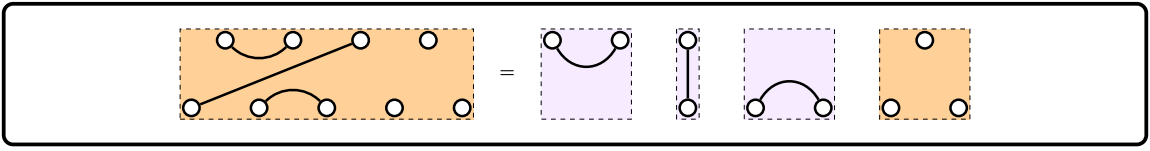

	\begin{tcolorbox}[colback=white!02, colframe=black]
	\begin{center}
		\scalebox{0.6}{\tikzfig{algorithm/tensorproddecompSOn}}
	\end{center}
	\end{tcolorbox}
		\caption{
		The tensor product decomposition of the algorithmically planar $(4+5) \backslash 3$--diagram that appears in Figure \ref{symmfactoringSOn}.}
	\label{tensorproddecompSOn}
\end{figure*}


\begin{example}
	Suppose that we wish to perform the multiplication of $H_\alpha$
	by $v \in (\mathbb{R}^{3})^{\otimes 5}$, where
	$H_\alpha$ corresponds to the $(4+5) \backslash 3$--diagram $d_\alpha$ given in Figure \ref{symmfactoringSOn} under $\Psi$,
	and $v$ is given by
		\begin{equation}
			\sum_{L \in [3]^5} v_Le_L
		\end{equation}
	when expressed in the standard basis of $\mathbb{R}^{3}$.
	
	We know that $H_\alpha$ is a matrix in
	$\Hom_{SO(3)}((\mathbb{R}^{3})^{\otimes 5}, (\mathbb{R}^{3})^{\otimes 4})$.

	First, we apply the procedure \textbf{Factor}, which returns the three diagrams 
	given in Figure~\ref{symmfactoringSOn}.
	The first diagram corresponds to the permutation $(13524)$ in $S_5$, hence, the result of \textbf{Permute}$(v, (13524))$ is the vector
		\begin{equation}
			\sum_{L \in [3]^5} 
			v_{l_1, l_2, l_3, l_4, l_5}
			\left(
				e_{l_3} \otimes 
				e_{l_4} \otimes 
				e_{l_5} \otimes
				e_{l_1} \otimes
				e_{l_2} 
			\right)
		\end{equation}
	We now apply \textbf{PlanarMult} with the decomposition given in Figure \ref{tensorproddecompSOn}.

	Step 1: Apply the matrices that correspond to the free vertices.
	
	We obtain the vector
		\begin{equation}
			w = 
			\sum_{l_3, l_4, l_5, t_1 \in [3]} 
			w_{l_3, l_4, l_5, t_1}
			\left(
				e_{l_3} \otimes 
				e_{l_4} \otimes 
				e_{l_5} \otimes 
				e_{t_1}
			\right)
		\end{equation}
	where
		\begin{equation}
			w_{l_3, l_4, l_5, t_1}
			=
			\sum_{l_1, l_2 \in [3]}
			v_{l_1, l_2, l_3, l_4, l_5}
			\det(e_{t_1} \otimes e_{l_1} \otimes e_{l_2})
		\end{equation}

	Step 2: Apply the matrices that correspond to the bottom row pairs.

	We obtain the vector
		\begin{equation}
			y = 
			\sum_{l_3, t_1 \in [3]} 
			y_{l_3, t_1}
			\left(
				e_{l_3} \otimes 
				e_{t_1}
			\right)
		\end{equation}
	where
		\begin{equation}
			y_{l_3, t_1}
			=
			\sum_{j \in [3]}
			w_{l_3, j, j, t_1}
		\end{equation}
	
	Step 3: Apply the matrices that correspond to the different row pairs.

	Here, we get that $r = y$, as the transfer operations correspond to the identity.

	Step 4: Apply the matrices that correspond to the top row pairs.

	We obtain the vector
		\begin{equation}
			z =
			\sum_{m \in [3]}	
			\sum_{l_3, t_1 \in [3]}
			y_{l_3, t_1}
			\left(	
			e_{m} \otimes e_{m} 
			\otimes e_{l_3} \otimes e_{t_1}
			\right)
		\end{equation}
	Substituting in, we get that
		\begin{equation}
			z =
			\sum_{m \in [3]}	
			\sum_{l_3, t_1 \in [3]}
			\sum_{j \in [3]}
			w_{l_3, j, j, t_1}
			\left(	
			e_{m} \otimes e_{m} 
			\otimes e_{l_3} \otimes e_{t_1}
			\right)
		\end{equation}
	and hence
		\begin{equation}
			z =
			\sum_{m \in [3]}	
			\sum_{l_3, t_1 \in [3]}
			\sum_{j \in [3]}
			\sum_{l_1, l_2 \in [3]}
			v_{l_1, l_2, l_3, j, j}
			\det(e_{t_1} \otimes e_{l_1} \otimes e_{l_2})
			\left(	
			e_{m} \otimes e_{m} 
			\otimes e_{l_3} \otimes e_{t_1}
			\right)
		\end{equation}
	Finally, as the third diagram returned from \textbf{Factor} 
	corresponds to the permutation $(1432)$ in $S_4$,
	we perform \textbf{Permute}$(z, (1432))$
	which returns the vector
		\begin{equation}
			\sum_{m \in [3]}	
			\sum_{l_3, t_1 \in [3]}
			\sum_{j \in [3]}
			\sum_{l_1, l_2 \in [3]}
			v_{l_1, l_2, l_3, j, j}
			\det(e_{t_1} \otimes e_{l_1} \otimes e_{l_2})
			\left(	
			e_{t_1} \otimes e_{m} \otimes e_{m} 
			\otimes e_{l_3} 
			\right)
		\end{equation}
	This is the vector that is returned by \textbf{MatrixMult}.	
\end{example}


\begin{figure*}[tb]
	\begin{tcolorbox}[colback=white!02, colframe=black]
	\begin{center}
		\scalebox{0.4}{\tikzfig{algorithm/planarmultSOn}}
	\end{center}
	\end{tcolorbox}
		\caption{We show how matrix multiplication is implemented in \textbf{PlanarMult} for $SO(n)$ (here $n=3$) using the tensor product decomposition of the 
		algorithmically planar 
		$(4+5) \backslash 3$--diagram 
		given in Figure \ref{tensorproddecompSOn}
		as an example.
		We perform the matrix multiplication as follows:
		first, we deform the entire tensor product decomposition diagram by pulling each individual diagram up one level higher than the previous one, going from right--to-left, and then we apply the functor $\Psi$ at each level. 
Note, in particular, that we need to attach identity strings to the free vertices appearing in the top row.
		Finally, we perform matrix multiplication at each level to obtain the final output vector.}
	\label{planarmultSOn}
\end{figure*}


\paragraph{Time Complexity}

	For a matrix corresponding to a $(k,l)$--Brauer diagram, the analysis is the same as for the orthogonal group.
	For a matrix corresponding to an $(l+k) \backslash n$--diagram, we look at each step of the implementation that was given above.

	\textbf{Step 1:} 
	Recall that we map a vector in 
	$(\mathbb{R}^{n})^{\otimes k}$ 
	to a vector in
	$(\mathbb{R}^{n})^{\otimes 2b+d+s}$ in this step.

	To obtain the best time complexity, we need to look at the tuples 
	$J \in [n]^{2b+d}, T \in [n]^{s}$ and $B \in [n]^{n-s}$
	that appear in
	(\ref{SOnStep1,1}),
	(\ref{SOnStep1,2}), and
	(\ref{SOnStep1,3}).

	Indeed, for each tuple $J$, we first need to consider how many tuples $T$ come with pairwise different entries in $[n]$.
	The number of such tuples is $\frac{n!}{(n-s)!}$.
	Consequently, we only need to perform multiplications and additions for entries with such indices $(J,T)$.
	Let us call any such pair of indices $(J,T)$ \textbf{valid}.

	Hence, for each valid tuple $(J,T)$, there are $(n-s)!$ multiplications
	and $(n-s)! - 1$ additions.
	Since the number of valid tuples of the form $(J,T)$ is
	\begin{equation}
		n^{2b+d}\frac{n!}{(n-s)!}
	\end{equation}
	we have that the overall time complexity is
	$O(n^{k - (n-s)}n!)$ for this step, since $2b+d = k - (n-s)$ by (\ref{SO(n)restrictions}).


	\textbf{Step 2:} By the operations performed in this step, we can immediately apply the analysis of Step 1 for the orthogonal group.
	
	Since the part of the matrix multiplication that corresponds to $B_i$, for some $i = 1 \rightarrow b$, maps a vector in	
	$(\mathbb{R}^{n})^{\otimes d+s+2i}$ 
	to a vector in
	$(\mathbb{R}^{n})^{\otimes d+s+2i-2}$, we have that
	the overall time complexity is
	$O(n^{k+s-(n-s)-1})$ for this step, since
	$2b + d + s = k + s - (n-s)$
	by (\ref{SO(n)restrictions}).

	\textbf{Steps 3, 4:} These steps correspond to Steps 2, 3 for the orthogonal group, hence they have no cost.
	
	Hence, we have reduced the overall time complexity from
	$O(n^{l+k})$
	to
	\begin{equation}	
		O(n^{k - (n-s)}(n! + n^{s-1}))
	\end{equation}


\section{Conclusion} \label{conclusion}

Group equivariant neural networks that use high-order tensor power spaces of
$\mathbb{R}^{n}$ as their layers often face prohibitively high computational costs
when an equivariant weight matrix is applied to an input vector.
In this work, we have developed an algorithm that 
improves upon a na\"{i}ve weight matrix multiplication 
exponentially in terms of its Big-$O$ time complexity for four groups:
the symmetric, orthogonal, special orthogonal and symplectic groups.
We showed that a category-theoretic framework -- namely, expressing each weight matrix
as the image of
a linear combination of diagrams under a monoidal functor -- 
was very effective in helping to reorganise the overall computation so that it
can be executed in an optimal manner.
We hope that our algorithm will encourage more widespread adoption of group 
equivariant neural networks that use high-order tensor power spaces 
in practical applications.



\newpage

\acks{
This work was funded by the Doctoral
Scholarship for Applied Research which was
awarded to the first author under Imperial College
London's Department of Computing Applied
Research scheme.  This work will form part of
the first author's PhD thesis at Imperial College
London.
}


\nocite{*}
\bibliography{sample}

\begin{thebibliography}{74}
\providecommand{\natexlab}[1]{#1}
\providecommand{\url}[1]{\texttt{#1}}
\expandafter\ifx\csname urlstyle\endcsname\relax
  \providecommand{\doi}[1]{doi: #1}\else
  \providecommand{\doi}{doi: \begingroup \urlstyle{rm}\Url}\fi

\bibitem[Abbott(2024)]{abbott2024}
V.~Abbott.
\newblock {Neural Circuit Diagrams: Robust Diagrams for the Communication,
  Implementation, and Analysis of Deep Learning Architectures}.
\newblock \emph{Transactions on Machine Learning Research}, 2024.

\bibitem[Benkart and Halverson(2019{\natexlab{a}})]{BenHal1}
G.~Benkart and T.~Halverson.
\newblock Partition algebras {$P_k(n)$} with $2k>n$ and the fundamental
  theorems of invariant theory for the symmetric group {$S_n$}.
\newblock \emph{Journal of the London Mathematical Society}, 99\penalty0
  (2):\penalty0 194--224, 2019{\natexlab{a}}.

\bibitem[Benkart and Halverson(2019{\natexlab{b}})]{BenHal2}
G.~Benkart and T.~Halverson.
\newblock Partition {A}lgebras and the {I}nvariant {T}heory of the {S}ymmetric
  {G}roup.
\newblock In \emph{Recent Trends in Algebraic Combinatorics}, volume~16 of
  \emph{Association for Women in Mathematics Series}, pages 1--41. Springer,
  2019{\natexlab{b}}.

\bibitem[Benkart et~al.(2017)Benkart, Halverson, and Harman]{BHH}
G.~Benkart, T.~Halverson, and N.~Harman.
\newblock Dimensions of irreducible modules for partition algebras and tensor
  power multiplicities for symmetric and alternating groups.
\newblock \emph{Journal of Algebraic Combinatorics. An International Journal},
  46\penalty0 (1):\penalty0 77--108, 2017.

\bibitem[Bloem-Reddy and Teh(2020)]{bloemreddy2020}
B.~Bloem-Reddy and Y.~W. Teh.
\newblock {Probabilistic Symmetries and Invariant Neural Networks}.
\newblock \emph{Journal of Machine Learning Research}, 21:\penalty0 1--61,
  2020.

\bibitem[Bogatskiy et~al.(2020)Bogatskiy, Anderson, Offermann, Roussi, Miller,
  and Kondor]{bogatskiy2020}
A.~Bogatskiy, B.~Anderson, J.~Offermann, M.~Roussi, D.~Miller, and R.~Kondor.
\newblock {Lorentz Group Equivariant Neural Network for Particle Physics}.
\newblock In \emph{International Conference on Machine Learning}, pages
  992--1002. PMLR, 2020.

\bibitem[Brauer(1937)]{Brauer}
R.~Brauer.
\newblock On {A}lgebras {W}hich {A}re {C}onnected with the {S}emisimple
  {C}ontinuous {G}roups.
\newblock \emph{Annals of Mathematics}, 38:\penalty0 857--872, 1937.

\bibitem[Brown(1955)]{brown1}
W.~P. Brown.
\newblock {An {A}lgebra {R}elated to the {O}rthogonal {G}roup}.
\newblock \emph{Michigan Mathematical Journal}, 3\penalty0 (1):\penalty0 1 --
  22, 1955.

\bibitem[Brown(1956)]{brown2}
W.~P. Brown.
\newblock The {S}emisimplicity of the {B}rauer {A}lgebra.
\newblock \emph{Annals of Mathematics}, 63\penalty0 (2):\penalty0 324--335,
  1956.

\bibitem[Chatzipantazis et~al.(2023)Chatzipantazis, Pertigkiozoglou, Dobriban,
  and Daniilidis]{chatzipantazis2023}
E.~Chatzipantazis, S.~Pertigkiozoglou, E.~Dobriban, and K.~Daniilidis.
\newblock {SE(3)-Equivariant Attention Networks for Shape Reconstruction in
  Function Space}.
\newblock In \emph{The Eleventh International Conference on Learning
  Representations}, 2023.

\bibitem[Cho and Jacobs(2019)]{cho2019}
K.~Cho and B.~Jacobs.
\newblock {Disintegration and Bayesian inversion via string diagrams}.
\newblock \emph{Mathematical Structures in Computer Science}, 29\penalty0
  (7):\penalty0 938--971, 2019.

\bibitem[Coecke and Kissinger(2017)]{coecke2017picturing}
B.~Coecke and A.~Kissinger.
\newblock \emph{{Picturing Quantum Processes: A First Course in Quantum Theory
  and Diagrammatic Reasoning}}.
\newblock Cambridge University Press, 2017.

\bibitem[Cohen and Welling(2016)]{cohenc16}
T.~Cohen and M.~Welling.
\newblock {Group Equivariant Convolutional Networks}.
\newblock In \emph{Proceedings of The 33rd International Conference on Machine
  Learning}, volume~48 of \emph{Proceedings of Machine Learning Research},
  pages 2990--2999, New York, USA, 20--22 Jun 2016. PMLR.

\bibitem[Cohen and Welling(2017)]{cohen2017steerable}
T.~Cohen and M.~Welling.
\newblock Steerable {CNN}s.
\newblock In \emph{International Conference on Learning Representations}, 2017.

\bibitem[Cohen et~al.(2018)Cohen, Geiger, Köhler, and Welling]{cohen2018}
T.~Cohen, M.~Geiger, J.~Köhler, and M.~Welling.
\newblock Spherical {CNN}s.
\newblock In \emph{International Conference on Learning Representations}, 2018.

\bibitem[Cohen et~al.(2019)Cohen, Geiger, and Weiler]{cohen2019}
T.~Cohen, M.~Geiger, and M.~Weiler.
\newblock {A General Theory of Equivariant CNNs on Homogeneous Spaces}.
\newblock In \emph{Advances in Neural Information Processing Systems},
  volume~32, 2019.

\bibitem[Comes(2020)]{comes}
J.~Comes.
\newblock Jellyfish {P}artition {C}ategories.
\newblock \emph{Algebras and Representation Theory}, 23:\penalty0 327--347,
  2020.

\bibitem[Cornish(2024)]{cornish2024}
R.~Cornish.
\newblock {Stochastic Neural Network Symmetrisation in Markov Categories},
  2024.
\newblock \texttt{arXiv:2406.11814}.

\bibitem[Cruttwell et~al.(2022)Cruttwell, Gavranović, Ghani, Wilson, and
  Zanasi]{cruttwell2022}
G.~Cruttwell, B.~Gavranović, N.~Ghani, P.~Wilson, and F.~Zanasi.
\newblock {Categorical Foundations of Gradient-Based Learning}.
\newblock In I.~Sergey, editor, \emph{Programming Languages and Systems},
  volume 13240 of \emph{Lecture Notes in Computer Science}. Springer, Cham,
  2022.

\bibitem[de~Haan et~al.(2020)de~Haan, Cohen, and Welling]{deHaan2020}
P.~de~Haan, T.~S. Cohen, and M.~Welling.
\newblock {Natural Graph Networks}.
\newblock In \emph{Advances in Neural Information Processing Systems},
  volume~33, pages 3636--3646, 2020.

\bibitem[Duncan et~al.(2020)Duncan, Kissinger, Perdrix, and van~de
  Wetering]{duncan2020graph}
R.~Duncan, A.~Kissinger, S.~Perdrix, and J.~van~de Wetering.
\newblock {Graph-theoretic Simplification of Quantum Circuits with the
  ZX-calculus}.
\newblock \emph{Quantum}, 4, June 2020.

\bibitem[Esteves et~al.(2018)Esteves, Allen-Blanchette, Makadia, and
  Daniilidis]{esteves2018}
C.~Esteves, C.~Allen-Blanchette, A.~Makadia, and K.~Daniilidis.
\newblock {Learning SO(3) Equivariant Representations with Spherical CNNs}.
\newblock In \emph{Proceedings of the European Conference on Computer Vision
  (ECCV)}, pages 52--68, 2018.

\bibitem[Finzi et~al.(2021)Finzi, Welling, and Wilson]{finzi}
M.~Finzi, M.~Welling, and A.~G. Wilson.
\newblock A {P}ractical {M}ethod for {C}onstructing {E}quivariant {M}ultilayer
  {P}erceptrons for {A}rbitrary {M}atrix {G}roups.
\newblock In \emph{Proceedings of the 38th International Conference on Machine
  Learning}, volume 139 of \emph{Proceedings of Machine Learning Research},
  pages 3318--3328. PMLR, 18--24 Jul 2021.

\bibitem[Fritz(2020)]{fritz2020}
T.~Fritz.
\newblock {A synthetic approach to Markov kernels, conditional independence and
  theorems on sufficient statistics}.
\newblock \emph{Advances in Mathematics}, 370:\penalty0 107239, 2020.

\bibitem[Gavranovi\'{c} et~al.(2024)Gavranovi\'{c}, Lessard, Dudzik, Von~Glehn,
  Madeira~Ara\'{u}jo, and Veli\v{c}kovi\'{c}]{gavranovic24a}
B.~Gavranovi\'{c}, P.~Lessard, A.~J. Dudzik, T.~Von~Glehn, J.~a.~G.
  Madeira~Ara\'{u}jo, and P.~Veli\v{c}kovi\'{c}.
\newblock {Position: Categorical Deep Learning is an Algebraic Theory of All
  Architectures}.
\newblock In \emph{Proceedings of the 41st International Conference on Machine
  Learning}, volume 235 of \emph{Proceedings of Machine Learning Research},
  pages 15209--15241. PMLR, 21--27 Jul 2024.

\bibitem[Godfrey et~al.(2023)Godfrey, Rawson, Brown, and Kvinge]{godfrey}
C.~Godfrey, M.~G. Rawson, D.~Brown, and H.~Kvinge.
\newblock {F}ast computation of permutation equivariant layers with the
  partition algebra.
\newblock In \emph{ICLR 2023 Workshop on Physics for Machine Learning}, 2023.

\bibitem[Gordon et~al.(2020)Gordon, Lopez-Paz, Baroni, and
  Bouchacourt]{gordon2020}
J.~Gordon, D.~Lopez-Paz, M.~Baroni, and D.~Bouchacourt.
\newblock {Permutation Equivariant Models for Compositional Generalization in
  Language}.
\newblock In \emph{International Conference on Learning Representations}, 2020.

\bibitem[Grood(1999)]{grood}
C.~Grood.
\newblock {B}rauer {A}lgebras and {C}entralizer {A}lgebras for {SO(2n,C)}.
\newblock \emph{Journal of Algebra}, 222\penalty0 (2):\penalty0 678--707, 1999.

\bibitem[Guttenberg et~al.(2016)Guttenberg, Virgo, Witkowski, Aoki, and
  Kanai]{guttenberg2016}
N.~Guttenberg, N.~Virgo, O.~Witkowski, H.~Aoki, and R.~Kanai.
\newblock Permutation-equivariant neural networks applied to dynamics
  prediction, 2016.
\newblock \texttt{arXiv:1612.04530}.

\bibitem[Halverson and Ram(2022)]{HalRam2022}
T.~Halverson and A.~Ram.
\newblock Gems from the {W}ork of {G}eorgia {B}enkart.
\newblock \emph{Notices of the American Mathematical Society}, 69\penalty0
  (3):\penalty0 375--384, 2022.

\bibitem[Hanlon and Wales(1989)]{hanlonwales}
P.~Hanlon and D.~Wales.
\newblock On the {D}ecomposition of {B}rauer's {C}entralizer {A}lgebras.
\newblock \emph{Journal of Algebra}, 121\penalty0 (2):\penalty0 409--445, 1989.

\bibitem[Hartford et~al.(2018)Hartford, Graham, Leyton-Brown, and
  Ravanbakhsh]{Hartford}
J.~S. Hartford, D.~R. Graham, K.~Leyton-Brown, and S.~Ravanbakhsh.
\newblock {Deep {M}odels of {I}nteractions {A}cross {S}ets}.
\newblock In \emph{{Proceedings of the 35th International Conference on Machine
  Learning}}, pages 1914--1923. {PMLR}, 2018.

\bibitem[Heunen and Vicary(2020)]{heunen2020categories}
C.~Heunen and J.~Vicary.
\newblock \emph{Categories for Quantum Theory: An Introduction}.
\newblock Oxford University Press, 2020.

\bibitem[Hu(2019)]{Hu2019}
M.~Hu.
\newblock Presentations of {D}iagram {C}ategories.
\newblock \emph{The PUMP Journal of Undergraduate Research}, 3:\penalty0 1--25,
  2019.

\bibitem[Jones(1994)]{Jones}
V.~F.~R. Jones.
\newblock The {P}otts model and the symmetric group.
\newblock In \emph{Subfactors: Proceedings of the {T}aniguchi Symposium on
  Operator Algebras ({K}yuzeso, 1993)}, pages 259--267. World Scientific, 1994.

\bibitem[Jumper et~al.(2021)Jumper, Evans, Pritzel, Green, Figurnov,
  Ronneberger, Tunyasuvunakool, Bates, Žídek, Potapenko, Bridgland, Meyer,
  Kohl, Ballard, Cowie, Romera-Paredes, Nikolov, Jain, Adler, Back, Petersen,
  Reiman, Clancy, Zielinski, Steinegger, Pacholska, Berghammer, Bodenstein,
  Silver, Vinyals, Senior, Kavukcuoglu, Kohli, and Hassabis]{Jumper2021}
J.~Jumper, R.~Evans, A.~Pritzel, T.~Green, M.~Figurnov, O.~Ronneberger,
  K.~Tunyasuvunakool, R.~Bates, A.~Žídek, A.~Potapenko, A.~Bridgland,
  C.~Meyer, S.~A.~A. Kohl, A.~J. Ballard, A.~Cowie, B.~Romera-Paredes,
  S.~Nikolov, R.~Jain, J.~Adler, T.~Back, S.~Petersen, D.~Reiman, E.~Clancy,
  M.~Zielinski, M.~Steinegger, M.~Pacholska, T.~Berghammer, S.~Bodenstein,
  D.~Silver, O.~Vinyals, A.~W. Senior, K.~Kavukcuoglu, P.~Kohli, and
  D.~Hassabis.
\newblock {Highly accurate protein structure prediction with AlphaFold}.
\newblock \emph{Nature}, 596\penalty0 (7873):\penalty0 583--589, 2021.

\bibitem[Khatri et~al.(2024)Khatri, Laakkonen, Liu, and
  Wang-Maścianica]{khatri2024}
N.~Khatri, T.~Laakkonen, J.~Liu, and V.~Wang-Maścianica.
\newblock {On the Anatomy of Attention}, 2024.
\newblock \texttt{arXiv:2407.02423}.

\bibitem[Kissinger(2012)]{kissinger2012pictures}
A.~Kissinger.
\newblock Pictures of processes: Automated graph rewriting for monoidal
  categories and applications to quantum computing, 2012.
\newblock \texttt{arXiv:1203.0202}.

\bibitem[Kock(2003)]{kock}
J.~Kock.
\newblock \emph{{F}robenius {A}lgebras and {2-D} {T}opological {Q}uantum
  {F}ield {T}heories}.
\newblock London Mathematical Society Student Texts. Cambridge University
  Press, 2003.

\bibitem[Kondor and Trivedi(2018)]{kondor18a}
R.~Kondor and S.~Trivedi.
\newblock {On the Generalization of Equivariance and Convolution in Neural
  Networks to the Action of Compact Groups}.
\newblock In \emph{Proceedings of the 35th International Conference on Machine
  Learning}, volume~80 of \emph{Proceedings of Machine Learning Research},
  pages 2747--2755. PMLR, 10--15 Jul 2018.

\bibitem[Kondor et~al.(2018)Kondor, Lin, and Trivedi]{clebschgordan}
R.~Kondor, Z.~Lin, and S.~Trivedi.
\newblock Clebsch\textendash {G}ordan {N}ets: a {F}ully {F}ourier {S}pace
  {S}pherical {C}onvolutional {N}eural {N}etwork.
\newblock In \emph{Advances in Neural Information Processing Systems},
  volume~31, 2018.

\bibitem[Lauda and Sussan(2022)]{lauda}
A.~D. Lauda and J.~Sussan.
\newblock {A}n {I}nvitation to {C}ategorification.
\newblock \emph{Notices of the American Mathematical Society}, 69\penalty0
  (1):\penalty0 11--21, 2022.

\bibitem[Lehrer and Zhang(2012)]{LehrerZhang}
G.~I. Lehrer and R.~B. Zhang.
\newblock The {B}rauer category and invariant theory.
\newblock \emph{Journal of the European Mathematical Society}, 17:\penalty0
  2311--2351, 2012.

\bibitem[Lehrer and Zhang(2018)]{LehrerZhang2}
G.~I. Lehrer and R.~B. Zhang.
\newblock {I}nvariants of the special orthogonal group and an enhanced {B}rauer
  category.
\newblock \emph{L’Enseignement Mathématique}, 63:\penalty0 181--200, 2018.

\bibitem[Leinster(2014)]{leinster2014}
T.~Leinster.
\newblock \emph{Basic Category Theory}.
\newblock Cambridge University Press, 2014.

\bibitem[Lorenz and Tull(2023)]{lorenz2023}
R.~Lorenz and S.~Tull.
\newblock {Causal models in string diagrams}, 2023.
\newblock \texttt{arXiv:2304.07638}.

\bibitem[{Mac Lane}(1998)]{maclane}
S.~{Mac Lane}.
\newblock \emph{{C}ategories for the {W}orking {M}athematician}.
\newblock Springer New York, NY, 1998.

\bibitem[Maron et~al.(2019{\natexlab{a}})Maron, Ben-Hamu, Shamir, and
  Lipman]{maron2018}
H.~Maron, H.~Ben-Hamu, N.~Shamir, and Y.~Lipman.
\newblock Invariant and {E}quivariant {G}raph {N}etworks.
\newblock In \emph{International Conference on Learning Representations},
  2019{\natexlab{a}}.

\bibitem[Maron et~al.(2019{\natexlab{b}})Maron, Fetaya, Segol, and
  Lipman]{maron19a}
H.~Maron, E.~Fetaya, N.~Segol, and Y.~Lipman.
\newblock On the {U}niversality of {I}nvariant {N}etworks.
\newblock In \emph{Proceedings of the 36th International Conference on Machine
  Learning}, volume~97 of \emph{Proceedings of Machine Learning Research},
  pages 4363--4371. PMLR, 09--15 Jun 2019{\natexlab{b}}.

\bibitem[Maron et~al.(2020)Maron, Litany, Chechik, and Fetaya]{maron20a}
H.~Maron, O.~Litany, G.~Chechik, and E.~Fetaya.
\newblock On {L}earning {S}ets of {S}ymmetric {E}lements.
\newblock In \emph{Proceedings of the 37th International Conference on Machine
  Learning}, volume 119 of \emph{Proceedings of Machine Learning Research},
  pages 6734--6744. PMLR, 13--18 Jul 2020.

\bibitem[Martin(1990)]{Martin0}
P.~P. Martin.
\newblock Representations of {G}raph {T}emperley--{L}ieb {A}lgebras.
\newblock \emph{Publications of the Research Institute for Mathematical
  Sciences}, 26\penalty0 (3):\penalty0 485--503, 1990.

\bibitem[Martin(1994)]{Martin1}
P.~P. Martin.
\newblock {T}emperley--{L}ieb {A}lgebras for {N}on--{P}lanar {S}tatistical
  {M}echanics -- the {P}artition {A}lgebra {C}onstruction.
\newblock \emph{Journal of Knot Theory and Its Ramifications}, 03\penalty0
  (01):\penalty0 51--82, 1994.

\bibitem[Martin(1996)]{Martin2}
P.~P. Martin.
\newblock The {S}tructure of the {P}artition {A}lgebras.
\newblock \emph{Journal of Algebra}, 183:\penalty0 319--358, 1996.

\bibitem[Pan and Kondor(2022)]{pan22}
H.~Pan and R.~Kondor.
\newblock Permutation {E}quivariant {L}ayers for {H}igher {O}rder
  {I}nteractions.
\newblock In \emph{Proceedings of The 25th International Conference on
  Artificial Intelligence and Statistics}, volume 151 of \emph{Proceedings of
  Machine Learning Research}, pages 5987--6001. PMLR, 28--30 Mar 2022.

\bibitem[Pearce-Crump(2023{\natexlab{a}})]{pearcecrumpB}
E.~Pearce-Crump.
\newblock {B}rauer's {G}roup {E}quivariant {N}eural {N}etworks.
\newblock In \emph{Proceedings of the 40th International Conference on Machine
  Learning}, volume 202 of \emph{Proceedings of Machine Learning Research},
  pages 27461--27482. PMLR, 23--29 Jul 2023{\natexlab{a}}.

\bibitem[Pearce-Crump(2023{\natexlab{b}})]{pearcecrumpJ}
E.~Pearce-Crump.
\newblock {How Jellyfish Characterise Alternating Group Equivariant Neural
  Networks}.
\newblock In \emph{Proceedings of the 40th International Conference on Machine
  Learning}, volume 202 of \emph{Proceedings of Machine Learning Research},
  pages 27483--27495. PMLR, 23--29 Jul 2023{\natexlab{b}}.

\bibitem[Pearce-Crump(2024)]{pearcecrump}
E.~Pearce-Crump.
\newblock {Connecting Permutation Equivariant Neural Networks and Partition
  Diagrams}.
\newblock In \emph{Proceedings of the 27th European Conference on Artificial
  Intelligence (ECAI 2024)}, volume 392 of \emph{Frontiers in Artificial
  Intelligence and Applications}, pages 1511--1518. IOS Press, 2024.

\bibitem[Pearce-Crump and Knottenbelt(2024)]{pearcecrumpG}
E.~Pearce-Crump and W.~Knottenbelt.
\newblock {Graph Automorphism Group Equivariant Neural Networks}.
\newblock In \emph{Proceedings of the 41st International Conference on Machine
  Learning}, volume 235 of \emph{Proceedings of Machine Learning Research},
  pages 40051--40077. PMLR, 21--27 Jul 2024.

\bibitem[Petrache and Trivedi(2024)]{petrache2024}
M.~Petrache and S.~Trivedi.
\newblock {Position Paper: Generalized Grammar Rules and Structure-Based
  Generalization Beyond Classical Equivariance for Lexical Tasks and
  Transduction}, 2024.
\newblock \texttt{arXiv:2402.01629}.

\bibitem[Qi et~al.(2017)Qi, Su, Mo, and Guibas]{qi2017pointnet}
C.~R. Qi, H.~Su, K.~Mo, and L.~J. Guibas.
\newblock {PointNet: Deep Learning on Point Sets for 3D Classification and
  Segmentation}.
\newblock In \emph{{Proceedings of the IEEE Conference on Computer Vision and
  Pattern Recognition}}, pages 652--660, 2017.

\bibitem[Rahme et~al.(2021)Rahme, Jelassi, Bruna, and Weinberg]{rahme}
J.~Rahme, S.~Jelassi, J.~Bruna, and S.~M. Weinberg.
\newblock {A Permutation-Equivariant Neural Network Architecture For Auction
  Design}.
\newblock \emph{Proceedings of the AAAI Conference on Artificial Intelligence},
  35\penalty0 (6):\penalty0 5664--5672, 2021.

\bibitem[Ravanbakhsh et~al.(2017)Ravanbakhsh, Schneider, and
  P{\'o}czos]{ravanbakhsh17a}
S.~Ravanbakhsh, J.~Schneider, and B.~P{\'o}czos.
\newblock {Equivariance Through Parameter-Sharing}.
\newblock In \emph{Proceedings of the 34th International Conference on Machine
  Learning}, volume~70, pages 2892--2901, 06--11 Aug 2017.

\bibitem[Riehl(2017)]{riehl2017}
E.~Riehl.
\newblock \emph{Category Theory in Context}.
\newblock Dover Publications, 2017.

\bibitem[Satorras et~al.(2021)Satorras, Hoogeboom, and Welling]{satorras21a}
V.~G. Satorras, E.~Hoogeboom, and M.~Welling.
\newblock {E(n) Equivariant Graph Neural Networks}.
\newblock In \emph{Proceedings of the 38th International Conference on Machine
  Learning}, volume 139 of \emph{Proceedings of Machine Learning Research},
  pages 9323--9332. PMLR, 18--24 Jul 2021.

\bibitem[Savage(2021)]{Savage2021}
A.~Savage.
\newblock \emph{{S}tring {D}iagrams and {C}ategorification}, pages 3--36.
\newblock Springer International Publishing, 2021.

\bibitem[Thomas et~al.(2018)Thomas, Smidt, Kearnes, Yang, Li, Kohlhoff, and
  Riley]{thomas2018}
N.~Thomas, T.~Smidt, S.~Kearnes, L.~Yang, L.~Li, K.~Kohlhoff, and P.~Riley.
\newblock Tensor field networks: {R}otation-and translation-equivariant neural
  networks for 3d point clouds, 2018.
\newblock \texttt{arXiv:1802.08219}.

\bibitem[Tull et~al.(2024)Tull, Lorenz, Clark, Khan, and Coecke]{tull2024}
S.~Tull, R.~Lorenz, S.~Clark, I.~Khan, and B.~Coecke.
\newblock {Towards Compositional Interpretability for XAI}, 2024.
\newblock \texttt{arXiv:2406.17583}.

\bibitem[Turaev and Virelizier(2017)]{turaev}
V.~Turaev and A.~Virelizier.
\newblock \emph{Monoidal {C}ategories and {T}opological {F}ield {T}heory}.
\newblock Birkhäuser, 2017.

\bibitem[van~de Wetering(2020)]{vandewetering2020}
J.~van~de Wetering.
\newblock {ZX-calculus for the working quantum computer scientist}, 2020.
\newblock \texttt{arXiv:2012.13966}.

\bibitem[Villar et~al.(2021)Villar, Hogg, Storey-Fisher, Yao, and
  Blum-Smith]{villar2021scalars}
S.~Villar, D.~W. Hogg, K.~Storey-Fisher, W.~Yao, and B.~Blum-Smith.
\newblock Scalars are universal: {E}quivariant machine learning, structured
  like classical physics.
\newblock In \emph{Advances in Neural Information Processing Systems}, 2021.

\bibitem[Weiler and Cesa(2019)]{weiler2019}
M.~Weiler and G.~Cesa.
\newblock {General E(2)-Equivariant Steerable CNNs}.
\newblock In \emph{Advances in Neural Information Processing Systems},
  volume~32, 2019.

\bibitem[Wenzl(1988)]{wenzl}
H.~Wenzl.
\newblock On the {S}tructure of {B}rauer's {C}entralizer {A}lgebras.
\newblock \emph{Annals of Mathematics}, 128\penalty0 (1):\penalty0 173--193,
  1988.

\bibitem[Weyl(1939)]{weyl1939}
H.~Weyl.
\newblock \emph{The Classical Groups: Their Invariants and Representations}.
\newblock Princeton University Press, 1939.

\bibitem[Zaheer et~al.(2017)Zaheer, Kottur, Ravanbakhsh, Poczos, Salakhutdinov,
  and Smola]{deepsets}
M.~Zaheer, S.~Kottur, S.~Ravanbakhsh, B.~Poczos, R.~R. Salakhutdinov, and A.~J.
  Smola.
\newblock Deep {S}ets.
\newblock In \emph{Advances in Neural Information Processing Systems},
  volume~30, 2017.

\end{thebibliography}

\end{document}